\def\eqref#1{equation~\ref{#1}}
\def\1{\bm{1}}
\def\1{\mathbf{1}}
\DeclareMathOperator*{\argmax}{arg\,max}
\DeclareMathOperator*{\argmin}{arg\,min}
\newcommand{\shorteq}{%
	\settowidth{\@tempdima}{-}% Width of hyphen
	\resizebox{\@tempdima}{\height}{=}%
}
\newcommand{\kibitz}[2]{\ifnum\Comments=1{\textcolor{#1}{{\footnotesize #2}}}\fi}
\newtheorem{Theorem}{Theorem}
\newtheorem{Lemma}[Theorem]{Lemma}
\newtheorem{Definition}[Theorem]{Definition}
\newtheorem{Assumption}[Theorem]{Assumption}
\newtheorem{Remark}[Theorem]{Remark}
\theoremstyle{plain}
\theoremstyle{definition}
\theoremstyle{remark}
\icmltitlerunning{Concurrent Reinforcement Learning with Aggregated States via Randomized Least Squares Value Iteration}
\begin{document}

\twocolumn[
\icmltitle{Concurrent Reinforcement Learning with Aggregated States \\
via  Randomized Least Squares Value Iteration}

% It is OKAY to include author information, even for blind
% submissions: the style file will automatically remove it for you
% unless you've provided the [accepted] option to the icml2025
% package.

% List of affiliations: The first argument should be a (short)
% identifier you will use later to specify author affiliations
% Academic affiliations should list Department, University, City, Region, Country
% Industry affiliations should list Company, City, Region, Country

% You can specify symbols, otherwise they are numbered in order.
% Ideally, you should not use this facility. Affiliations will be numbered
% in order of appearance and this is the preferred way.
\def\thefootnote{$\dagger$}\footnotetext{Corresponding author.}

\begin{icmlauthorlist}
\icmlauthor{Yan Chen}{duke}
\icmlauthor{Qinxun Bai}{horizon}
\icmlauthor{Yiteng Zhang}{berkeley}
\icmlauthor{Maria Dimakopoulou}{uber}
\icmlauthor{Shi Dong}{google}
\icmlauthor{Qi Sun}{NYU}
\icmlauthor{Zhengyuan Zhou}{NYU,Arena}
\end{icmlauthorlist}

\icmlaffiliation{duke}{Duke University}
\icmlaffiliation{horizon}{Horizon Robotics Inc}
\icmlaffiliation{berkeley}{UC Berkeley}
\icmlaffiliation{uber}{Uber}
\icmlaffiliation{google}{Google DeepMind}
\icmlaffiliation{NYU}{New York University}
\icmlaffiliation{Arena}{Arena Technologies}

%corresponding author
\icmlcorrespondingauthor{Zhengyuan Zhou}{z@arena-ai.com}

% You may provide any keywords that you
% find helpful for describing your paper; these are used to populate
% the "keywords" metadata in the PDF but will not be shown in the document
\icmlkeywords{Reinforcement Learning, worst-case regret bound, randomized least squares value iteration, multi-agent learning, aggregated states, concurrent learning}
\vskip 0.3in]

% this must go after the closing bracket ] following \twocolumn[ ...

% This command actually creates the footnote in the first column
% listing the affiliations and the copyright notice.
% The command takes one argument, which is text to display at the start of the footnote.
% The \icmlEqualContribution command is standard text for equal contribution.
% Remove it (just {}) if you do not need this facility.

\printAffiliationsAndNotice{}  

\begin{abstract}
Designing learning agents that explore efficiently in a complex environment has been widely recognized as a fundamental challenge in reinforcement learning.
While a number of works have demonstrated the effectiveness of  techniques based on randomized value functions on a single agent, it remains unclear, from a theoretical point of view, whether injecting randomization can help a society of agents {\it concurently} explore an environment.
The theoretical results 
established in this work tender an affirmative answer to this question. We adapt the concurrent learning framework to \textit{randomized least-squares value iteration} (RLSVI) with \textit{aggregated state representation}. 
We demonstrate polynomial worst-case regret bounds in both finite- and infinite-horizon environments.
In both setups the per-agent regret decreases at an optimal rate of $\Theta\left(\frac{1}{\sqrt{N}}\right)$, highlighting the advantage of concurent learning. Our algorithm exhibits significantly lower space complexity compared to \cite{russo2019worst} and \cite{agrawal2021improved}. We reduce the space complexity by a factor of $K$ while incurring only a $\sqrt{K}$ increase in the worst-case regret bound, compared to \citep{agrawal2021improved,russo2019worst}. Interestingly, our algorithm improves the worst-case regret bound of \cite{russo2019worst} by a factor of $H^{1/2}$, matching the improvement in \cite{agrawal2021improved}. However, this result is achieved through a fundamentally different algorithmic enhancement and proof technique. Additionally, we conduct numerical experiments to demonstrate our theoretical findings.
\end{abstract}

\section{Introduction}
The field of reinforcement learning (RL) is dedicated to designing agents that interact with an unknown environment, aiming to maximize the total amount of reward accumulated throughout the interactions \cite{sutton2018reinforcement}.
When the environment is complex yet the learning budget is limited, an agent has to efficiently explore the environment, %for useful information, 
giving rise to the well-known exploration-exploitation trade-off.
A large body of works in the RL literature have addressed the challenge related to smartly balancing this trade-off.
Among the many proposed methods, algorithms based on randomization are receiving growing attention, both theoretically \cite{russo2014learning, fellows2021bayesian} and empirically \cite{osband2016deep, janz2019successor,dwaracherla2020hypermodels}, due to their effectiveness and potential scalability in large applications.

Randomized least-squares value iteration (RLSVI) represents one example of such randomization-based algorithms \cite{osband2019deep}.
On a high level, RLSVI injects Gaussian noise into the rewards in the agent's previous trajectories, and allows the agent to learn a randomized value function from the perturbed dataset.
With judicious noise injection, the resultant value function approximates the agent's posterior belief of state values.
By acting greedily with respect to such randomized value function, the agent effectively executes an approximated version of posterior sampling for reinforcement learning (PSRL), whose efficacy has been substantiated in previous works \cite{osband2013more, russo2014learning, xu2022posterior}.
Compared with PSRL, RLSVI circumvents the need of maintaining a model of the environment, severing huge computational cost.
Since its advent, RLSVI has been studied extensively in theoretical contexts, 
% where a few examples include 
such as 
\cite{russo2019worst} and \cite{ishfaq2021randomized}.

In this work, we look into RLSVI from the perspective of concurrent learning \cite{silver2013concurrent}.
Specifically, concurrent RL studies the problem where a cohort of agents interact with one common environment, yet are able to share their experience with each other in order to jointly improve their decisions. Such a collaborative setting has useful applications in a variety of realms, such as robotics \cite{gu2017deep}, biology \cite{sinai2020adalead}, and recommendation systems \cite{agarwal2016making}.
For instance, a team of drones may be cooperatively executing a search and rescue mission in a given terrain, and cooperatively learning to identify the locations of assets requires efficient concurrent RL.
Another example is a team of virtual agents conducting circuit design,
proceeding in parallel the design-circuit evaluation (via a simulator)-redesign loop. Here, they are cooperatively exploring the vast and complex design space, aiming to quickly identify an optimal design that meets all the sepc requirements\footnote{For instance, Arena, a world leading startup that specializes in applying AI to shorten the hardware design loop, utilizes RL to speed up the circuit design, test and debugging cycles.}. 
Importantly, although all agents share the identical goal, coordination between  agents is nontrivial.
In fact, as shown in \cite{dimakopoulou2018coordinated, dimakopoulou2018scalable}, a poorly coordinated multi-agent algorithm can drastically undermine the overall learning performance.
Results in the existing theoretical literature on concurrent RL have demonstrated that PSRL \cite{osband2017optimistic, kim2017thompson, osband2014model} in a coordinated style is provably efficient \cite{chen2022society},
compared to the earlier cooperative UCRL type of algorithms~\footnote{It is perhaps not surprising that UCRL algorithms are the first family of algorithms that have been adapted to the concurrent RL settings~\cite{guo2015concurrent} and \cite{pazis2016efficient}. In particular, concurrent UCRL has been analyzed for the sample complexity performance measure (i.e. how many samples are needed to learn an $\epsilon$-optimal policy) under both finite action space setting and infinite action space setting. More specifically, 
 \cite{guo2015concurrent} provided a high-probability bound of $\tilde{O}(\frac{S^2A}{\epsilon^3}+\frac{SAn}{\epsilon})$ for the sample complexity with $n$ agents interacting in the environment. Their algorithm was extended from MBIE(see \cite{strehl2008analysis}), with a single agent performing concurrent RL across a set of $n$ infinite-horizon MDPs. The results there show that with sharing samples from copies of the same MDP a linear speedup in the sample complexity of learning can be achieved. But no regret bound was derived there for concurrent RL.}.
 As pointed out in~\cite{chen2022society}, ``these sample-complexity guarantees notwithstanding,
concurrent UCRL algorithms suffer from the critical disadvantage of no coordinated exploration:
since the upper confidence bounds computed by aggregating all agents' data are the same the entire team, each agent would follow the exact same policy, thereby yielding no diversity in exploration."
However, despite the exploration advantage offered by concurrent Thompson sampling given in a very recent work~~\cite{chen2022society}, it remains unclear whether the same level of efficiency can be extended to cooperative model-free agents, which are 
computationally more scalable. 
% practically implementable 

\paragraph{Our Contributions} We model the environment as a Markov decision process (MDP) with $\Gamma$ aggregated states,
and show theoretically that a cohort of $N$ agents, concurrently running RLSVI and sharing interaction trajectories with each other, are able to efficiently improve their joint performance towards the optimal policy in the environment.
The efficiency is established through regret analysis. Additionally, for aggregated state representation setup, the performance typically will not converge to optimal solution if $\epsilon>0$, and our result establishes that the cumulative performance loss is no greater than $O(\epsilon)$ per period per agent. This is consistent with the findings of \cite{dong2019provably} for single-agent scenario. Our worst-case regret bound improves upon \cite{russo2019worst} by a factor of $H^{1/2}$, matching the improvement in \cite{agrawal2021improved} but achieved through a fundamentally different technique and proof approach.

One major difference between our algorithm and those from current literature of RLSVI in tabular setup is that, the algorithms in \cite{russo2019worst,agrawal2021improved} require storing the historical trajectories from the very beginning while our algorithm only stores the trajectories from the last episode for the finite-horizon case or the last pseudo-episode (defined in Section \ref{sec:infinite-horizon-case}) for the infinite-horizon case. This is because for concurrent setup, the computational cost of storing all historical data scales at least linear in $KHN$ for finite horizon and $TN$ for infintie horizon, which is infeasible for problems of practical scale. 

In both cases, the regret dependence on the total number of samples is optimal, signaling well-coordinated information sharing among the agents.To our best knowledge, this work presents the first theoretical analysis of a model-free concurrent RL algorithm with aggregated states.

Besides theoretical contributions, our analysis also sheds light on the empirical role of discount factor in RL.
In practical applications, a discount factor is usually not subsumed under the environment definition.
Rather than prescribing a specific learning target that involves a decaying sequence of rewards,
discount factors often simply function as a tuning parameter of the algorithm that keeps the value updates stable.
In this work, similar as in \cite{xu2022posterior}, when the decision horizon is infinite, the discount factor only shows up in the algorithm and does not appear in the learning target.
The sole purpose of introducing a discount factor is to decompose the single stream of interactions into ``pseudo-episodes'' that facilitate decision-making.
Such view aligns with the authentic role of discount factor in agent design, and distinguishes this work from earlier RL theory literature on discounted regret such as \cite{dong2019q}.
While \cite{xu2022posterior} focuses on PSRL, this paper is the first to extend the same view to a model-free algorithm that enjoys empirical successes.

\paragraph{Related Work} \textit{Multi-agent Reinforcement Learning} (MARL) has been widely studied to address problems in a variety of applications like robotics, telecommunications and e-commerce \cite{bucsoniu2010multi}. Scenarios of MARL include fully cooperative \cite{abed2015multi,zhang2010multi}, fully competitive \cite{bansal2017emergent,wang2018competitive} and other more general settings \cite{ryu2021cooperative,lowe2017multi}. Under these settings, a group of agents share and interact with a common random environment \cite{shoham2008multiagent,vlassis2022concise,weiss1999multiagent}. There are certain challenges of multi-agent systems as pointed out by \cite{zhang2021multi}. For example, the agents may have non-unique learning goals \cite{shoham2003multi}. Besides, the concurrent learning structure of MARL problem can cause the environment faced by each individual to be non-stationary. Particularly in some settings, the action taken by one agent affects the reward of other opponent agents and the evolution of the state \cite{zhang2021multi}. Furthermore, to handle non-stationarity of the problems, the agents may need to form the joint action space, and the dimension the action spaces can increase exponentially with the number of agents \cite{kulkarni2010probability}. 

We study the MARL problem under a concurrent learning framework involving homogeneous agents with a common learning goal. The agents interact with the unknown environment independently in parallel, and draw actions from a commonly shared action space. They communicate and share information according to some strategically devised schedule. All the agents share the same reward functions according to the states and actions they take.
This is different from the Markov game setup \cite{szepesvari1999unified,littman2001value} where the reward is influenced by the joint action of all the agents in the system.   

We apply concurrent learning \cite{min2023cooperative,dubey2021provably} concept with the Randomized Least-Squares Value Iteration (RLSVI) learning framework with aggregated state representations \cite{dong2019provably,van2006performance}. On the one hand, RLSVI leverages random perturbations to approximate the posterior, applying frequentist regret analysis in the tabular MDP setting \cite{osband2016deep}, inspiring works that focus on theoretical analyses to improve worst-case regret in tabular MDPs \cite{russo2019worst,agrawal2021improved} and linear settings \cite{zanette2020frequentist,ishfaq2023provable,dann2021provably}. On the other hand, while a large body of literature is established on tabular representations \cite{agarwal2020optimality,azar2011speedy,auer2008near,jiang2017contextual,jin2018q,osband2013more,osband2016deep,strehl2008analysis},
aggregated state representation offers an approach to reduce statistical complexity given the fact that the data requirement and learning time scales with the number of state-action pairs in tabular representations.
In this paper, we present a concurrent version of the randomized least-squared value iteration algorithm with aggregated states in finite-horizon and infinite-horizon settings, and we provide their corresponding worst-case regret bounds and numerical performances. 

% Our results are presented as follows. In Section \ref{sec:finite-horizon-case}, we formulate the finite-horizon case, and provide the finite-horizon concurrent learning in Algorithm \ref{alg:1} and its worst-case regret bound in Theorem \ref{thm:finite-horizon}. In
% Section \ref{sec:infinite-horizon-case}, we cover the infinite-horizon concurrent learning framework, where we provide the infinite-horizon concurrent learning RLSVI algorithm as Algorithm \ref{alg:2} and its worst-case regret bound in Theorem \ref{thm:infinite-horizon:main}. 

% The presentation of our results is structured as follows. 
As an outline of the rest of the paper,
in Section \ref{sec:finite-horizon-case}, we define the finite-horizon case and introduce 
a
% the finite-horizon 
concurrent learning framework with aggregated states. 
In Section~\ref{sec:finite-horizon-bound}, we summarize the finite-horizon concurrent RLSVI algorithm
in Algorithm \ref{alg:1} and provide its
% along with its 
worst-case regret bound %of Algorithm~\ref{alg:1} 
% is analyzed in Section~\ref{sec:finite-horizon-bound} and stated 
in Theorem \ref{thm:finite-horizon}. Section \ref{sec:infinite-horizon-case} focuses on the infinite-horizon concurrent learning framework with aggregated states,
with the infinite-horizon concurrent RLSVI algorithm summarized in 
Algorithm \ref{alg:2}.
% where we present the infinite-horizon concurrent learning 
% RLSVI algorithm with aggregated states in Algorithm \ref{alg:2} and 
The corresponding worst-case regret bound is provided in Theorem \ref{thm:infinite-horizon:main}
of Section \ref{sec:infinite-horizon-bound}.
Numerical results of both algorithms are reported in Section~\ref{sec:experiments}. We also include Table \ref{tab:regret-comparison} in Appendix~\ref{appendix:comparison} comparing our approach with prior work. 

\section{Finite-Horizon Concurent Learning}
\label{sec:finite-horizon-case}
In this section, we consider a finite-horizon \textit{Markov Decision Process} (MDP) $M=(H,\mathcal{S},\mathcal{A},P,R)$. There are $N$ agents interacting with the same environment across $K$ episodes. Each episode contains $H$ periods. For episode $k\in[K]$, period $h\in[H]$, and agent $p\in[N]$, we use $s_{k,h}^{p}$ to denote the state that the agent resides in, $a_{k,h}^{p}$ the action that the agent takes, and $r_{k,h}^{p}=r(s_{k,h}^{p},a_{k,h}^{p})$ the reward that it receives, where $r: \mathcal{S}\times\mathcal{A}\mapsto [0,1]$ is a deterministic reward function.
Let the information set $\mathcal{H}_{k}=\{(s_{k,h}^{p},a_{k,h}^{p},r_{k,h}^{p}): h\in[H]\}$ be the trajectory during episode $k$ for all the agents. The transition kernel $P$ is defined as
$P_{h,s,a}(s^\prime)=\mathbb{P}\left(s_{h+1}=s^\prime \Big| a_{h}=a,s_h=s\right).$ The expected reward that an agent receives in state $s$ when it follows policy $\pi$ at step $h$ is represented by $R_{h,s,\pi(s)}=\mathbb{E}\left[\sum_{a\in\mathcal{A}}\pi_h(a|s)\cdot r(s, a)\right]$. We assume that all agents start from a deterministic initial state $\mathbf{s}_1=\{s_1^p\}_{p\in[N]}$ with $s_{k,1}^{p}\equiv s_1^p,\forall k,p.$ 
In this work we consider deterministic rewards, which can be viewed as mappings from $\mathcal{S}$ to $\mathcal{A}$, but all our results apply to the environments with bounded rewards without change. We say that agent $p\in[N]$ follows policy $\pi$ if for all $h\in[H]$, $a_h^p=\pi_h(s_h^p)$. 
% {\color{cyan} [The beginning part of this sentence talks about (deterministic) policy while the later part talks about (stochastic) env]} 
% A Markovian policy $\pi=(\pi_1,\ldots,\pi_H)$ is a sequence of functions with each $\pi_h(a|s)$ prescribing the probability of taking action $a$ in state $s$. Let $\Pi$ denote the set of all Markovian policies. 
We use $V_h^\pi\in\mathbb{R}^S$ to denote the value function associated with policy $\pi$ in period $h\in[H]$, such that 
$$V_h^{\pi}(s)=\mathbb{E}\left[\sum_{j=h}^HR_{j,s_j,\pi_j(s_j)}\big|s_h=s\right],$$
where the expectation is taken over all possible transitions, and we set $V_{H+1}^\pi(s)=0$ for all $s\in\mathcal{S}$. The optimal value function is denoted as $V_h^{*}(s)=\max_{\pi\in\Pi}V_h^\pi(s)$, which is the value function associated with the optimal policy. For all $s\in\mathcal{S}, h\in[H]$, and policy $\pi$, the value function is the unique solution to the Bellman equations
$$V_h^\pi(s)=R_{h,s,\pi(s)}+\sum_{s\in\mathcal{S}}P_{s,h,\pi(s)}(s^\prime)V_{h+1}^\pi(s^\prime).$$
When $\pi$ is the optimal policy $\pi^*$, there should be $V_{h}^{\pi^*}(s)=V_h^*(s)$, and we have
$$V_h^*(s)=R_{h,s,\pi^*(s)}+\sum_{s\in\mathcal{S}}P_{s,h,\pi^*(s)}(s^\prime)V_{h+1}^*(s^\prime).$$
For each policy $\pi$, we also define the \textit{state-action value function} of $\textit{Q-function}$ of state-action pair $(s,a)$ as the expected return when agent takes action $a$ at state $s$, and then follows policy $\pi$, so that 
$$Q_h^{\pi}(s,a)=R_{h,s,a}+\mathbb{E}\left[\sum_{j=h}^HR_{j,s_j,\pi_j(s_j)}\big|s_h=s,a_h=a\right].$$
Correspondingly, we define $$Q_h^*(s,a)=R_{h,s,a}+P_hV_{h+1}^*(s,a),$$
where we use the notation $P_hV(s,a)=\mathbb{E}_{s^\prime\sim P_h^{s,a}}[V(s^\prime)]$. Thus by definition $Q_h^*(s,a)$ is the maximum realizable expected return when the agent starts from state $s$ and takes action $a$ at period $h$. From the optimality of $V^*$, we have 
$$V_h^*(s)=\max_{a\in\mathcal{A}}Q_h^*(s,a),\ \forall h\in[H], s\in\mathcal{S}.$$
Furthermore, under the assumption that the reward is bounded between $0$ and $1$, we have $$0\leq V_h^{\pi}\leq V_h^*\leq H, \ \forall h\in[H], \pi\in\Pi.$$

\paragraph{Regret under Finite Horizon Case} The goal of an RL algorithm is for the agents to learn a good policy through consecutively interacting with the random environment, without prior knowledge about the transition probability $P$ and the reward $R$. Formally, given $\pi=\{\pi^{kp}\}_{k\in[K],p\in[N]}$, with each agent $p\in[N]$ taking policy $\pi^{kp}$ during each episode $k\in[K]$, the cumulative expected regret incurred over $K$ periods and $N$ agents is defined as 
\begin{equation}\label{eq:regret:finite-horizon}
\textrm{Regret}(M,K,H,N,\pi)={\sum_{p=1}^N\sum_{k=1}^K}V_1^*(s_1^p)-V_{\pi^{kp}}(s_1^p). 
\end{equation}
% {\color{cyan}[Is $\eta$ defined already? Is it the same $\eta$ used in $Geo(1-\eta)$? (Yan: typo)]}
% \vspace{-.3cm}
% \begin{equation}\label{finite-regret}
%     \mathrm{Regret}(M,K,N,f)\nonumber=\mathbb{E}\left[\sum_{p=1}^N\sum_{k=1}^K V_1^{*}(s_{k,1}^{p})-V_{\pi^{kp}}^{\eta}(s_{k,1}^{p})\right].
% \end{equation}

\paragraph{Empirical Estimation} We examine two types of empirical estimation methods below. The first method stores data from a single episode only, meaning the empirical counts used by the agents to evaluate policies are derived from just one episode. We take this limitation into account because, as the number of agents $N$ increases, storing all historical data becomes practically infeasible. For the second method, we store all historical data; specifically, at each episode $k$, the buffer retains data from all episodes prior to $k$. This results in a space complexity of $O(KHN)$.

For the first empirical estimation method, define $n_{k,h}(s,a)$ to be the number of times action $a$ has been sampled in state $s$, period $h$ during episode $k$ by all the agents $p\in[N]$: 
$$n_{k,h}(s,a)=\sum_{p=1}^N\mathbf{1}\left\{(s_{k,h}^{p},a_{k,h}^p)=(s,a)\right\}.$$ 
Define the empirical mean reward for period $h$ during episode $k$ by 
\begin{equation}\label{eq:empirical reward:finite-horizon}
\begin{array}{rl}
\displaystyle\hat{R}_{h,s,a}^k=\frac{\sum_{p=1}^N\mathbf{1}\left\{(s_{k-1,h}^{p},a_{k-1,h}^p)=(s,a)\right\}r_{k-1,h}^p}{n_{k-1,h}(s,a)},
\end{array}
\end{equation}
and $\forall s^\prime\in\mathcal{S}$, define the empirical transition probabilities for period $h$ during episode $k$ as
{\begin{equation}\label{eq:empirical P:finite horizon}
\begin{array}{rl}
&\quad\displaystyle\hat{P}_{h,s,a}^k(s^\prime)\\
&\displaystyle=\frac{\sum_{p=1}^N\mathbf{1}\left\{(s_{k-1,h}^{p},a_{k-1,h}^p,s_{k-1,h+1}^{p})=(s,a,s^\prime)\right\}}{n_{k-1,h}(s,a)}.
\end{array}
\end{equation}}
If $(h,s,a)$ is never sampled during episode $k-1$, we set $\hat{R}_{h,s,a}^k=0\in\mathbb{R}$ and $\hat{P}_{h,s,a}^k=0\in\mathbb{R}^S$. Note that $\hat{R}^k$ and $\hat{P}^k$ are computed from the trajectory from episode $k-1$. 

The second method stores all historical data up to the current episode, with the reward and transition probability estimations defined as:
$$\begin{array}{rl}
\displaystyle\hat{R}_{h,s,a}^{k,\texttt{full}}=\frac{\sum_{i=0}^{k-1}\sum_{p=1}^N\mathbf{1}\left\{(s_{i,h}^{p},a_{i,h}^p)=(s,a)\right\}r_{i,h}^p}{\sum_{i=0}^{k-1}n_{i,h}(s,a)},
\end{array}$$
$$\begin{array}{rl}
&\quad\hat{P}_{h,s,a}^{k,\texttt{full}}(s^\prime)\\
&\displaystyle=\frac{\sum_{i=0}^{k-1}\sum_{p=1}^N\mathbf{1}\left\{(s_{i,h}^{p},a_{i,h}^p,s_{i,h+1}^{p})=(s,a,s^\prime)\right\}}{\sum_{i=0}^{k-1}n_{i,h}(s,a)}.
\end{array}$$

\subsection{Aggregated-state Representations} 
Many RL algorithms aim to estimate the value of each state-action pair (e.g. under a tabular representation), but this can be infeasible in some setup where $SA$ is large, since both the required sample size and computational cost will scale up at least linearly in $SA$. One alternative approach is to consider \textit{aggregated-state representations} \citep{dong2019provably,wen2017efficient,jiang2017contextual}, which reduces complexity and can accelerate learning by focusing on aggregated state-action pairs. This method partitions the space of state-action pairs into $\Gamma$ blocks, each block can be viewed as an aggregate state, so that the value function representation only needs to maintain one value estimate per aggregated state. Formally, let $\Phi$ be the set of all aggregated states, and let $\phi_h:\mathcal{S}\times\mathcal{A}\rightarrow\Phi$ be the mapping from state-action pairs to aggregated states at period $h$. Without loss of generality, we let $\Phi=[\Gamma]$. We define the aggregated representation as follows:
\begin{Definition}\label{def:aggregated-state}
We say that $\{\phi_h\}_{h=1}^H$ is an $\epsilon$-error aggregated state-representation (or $\epsilon$-error aggregation) of an MDP, if for all $s,s^\prime\in\mathcal{S}$, $a,a^\prime\in\mathcal{A}$ and $h\in[H]$ such that $\phi_h(s,a)=\phi_h(s^\prime,a^\prime)$, we have $$|Q_h^*(s,a)-Q_h^*(s',a')|\leq\epsilon.$$
\end{Definition}
When $\epsilon=0$ in Definition \ref{def:aggregated-state}, we say that the aggregation is sufficient, and one can guarantee that an algorithm finds the optimal policy as $K\rightarrow\infty$. When $\epsilon>0$, there exists an MDP such that no RL algorithm with aggregated state representation can find the optimal policy \cite{van2006performance}. In this case, the best we can do is to approximate the optimal policy with the suboptimality bounded by a function of $\epsilon$. 

\section{Finite-horizon Algorithm and Regret Bound}
\label{sec:finite-horizon-bound}
The concurrent RLSVI algorithms for the finite-horizon case are Algorithm \ref{alg:finite:full-buffer} and Algorithm \ref{alg:1}. 

\paragraph{Tradeoff: Regret Reduction vs. Sample Complexity} Algorithm \ref{alg:finite:full-buffer} and Algorithm \ref{alg:1} reflect a trade-off between reducing the worst-case regret and the consideration for the sample complexity in practice.  Algorithm \ref{alg:finite:full-buffer} stores all historical data, while Algorithm \ref{alg:1} retains only the data from the previous episode. Although Algorithm \ref{alg:finite:full-buffer} achieves lower regret due to its larger data storage, it becomes impractical when $N$ is large. Therefore, for practical reasons, we use Algorithm \ref{alg:1} in our numerical experiments. We first describe Algorithm \ref{alg:1}, with Algorithm \ref{alg:finite:full-buffer} following a similar structure, differing only in the buffer size (i.e. the amount of data stored for policy evaluation).

We initialize the all the aggregated state values as $H$, i.e. $\hat{Q}_h^p(\gamma)=H$ for all $h\in[H]$. At the beginning of each episode, all the agents restart at initial states $\mathbf{s}_1=\{s_1^p\}$, with agent $p$ starting from state $s_1^p$. During pre-round (episode $0$), each agent randomly samples their initial trajectory $\{s_{0,1}^p,a_{0,1}^p,r_{0,1}^p,\ldots,s_{0,H}^p,a_{0,H}^p,r_{0,H}^p\}_{p=1}^N$, with $s_{0,1}^p=s_1^p$. 

During each episode $k\in[K]$, each agent samples a random vector with independent components $w^{kp}\in\mathbb{R}^{HSA}$, where $w^{kp}(h,s,a)\sim\mathcal{N}(0,\sigma_k^2(h,s,a))$ and $\sigma_k(h,s,a)=\sqrt{\frac{\beta_k}{N_{k-1,h}(\phi_h(s,a))+1}}$, where $\beta_k$ is a tuning parameter, $N_{k-1,h}(\phi_h(s,a))$ is the total number of times that aggregated state $\phi_h(s,a)$ is reached at period $h$ across all agents during episode $k-1$. Given $w^{kp}$, we construct a randomized perturbation of the empirical MDP for agent $p$ as 
\begin{equation}\label{eq:MDP:M-bar}
    \overline{M}^{kp}=(H,\mathcal{S},\mathcal{A},\hat{P}^k,\hat{R}^{k}+w^{kp}),
\end{equation}
where the empirical distributions $\hat{R}^k$, $\hat{P}^k$ are computed as in (\ref{eq:empirical reward:finite-horizon}) and (\ref{eq:empirical P:finite horizon}). During each episode $k\in[K]$, the data set $\Tilde{D}_{kh}^p$ contains perturbation of samples from episode $k-1$ for time period $h$ used by agent $p$. 

Each agent computes the values for aggregated states using a backward approximation, where during episode $k$, the uncapped value of aggregated state $\gamma$ during period $h$ computed by agent $p$ is 
$$\begin{array}{rl}
&\displaystyle\quad\bar{Q}_{k,h}^p(\gamma)\\
&\displaystyle=\argmin_{Q\in\mathbb{R}}\mathcal{L}(Q|\hat{Q}_{k,h+1}^p,\tilde{D}_{kh}^p,\alpha_{N_{k-1,h}(\gamma)},\xi_{N_{k-1,h}(\gamma)})\\
&\displaystyle\quad\quad\quad\quad\quad+\|Q-\alpha_{N_{k-1,h}(\gamma)}\tilde{Q}_{kh}^p\|_2^2,
\end{array}$$
% {\color{cyan}[Is $\xi_{N_{k-1,h}(\gamma)}$ defined already?]}
where $\xi_{N_{k-1,h}(\gamma)}$ is defined as (\ref{eq:xi:finite-horizon}) for $n=N_{k-1,h}(\gamma)$, and we set terminal values as  $\hat{Q}_{k,H+1}^p(\gamma)=0$, and the regularization noise $\tilde{Q}_{kh}^p\in\mathbb{R}^{S\times A}$ is an independently sampled random vector, such that for each $(s,a)\in\mathcal{S}\times\mathcal{A}$,  $\tilde{Q}_{kh}^p(s,a)\sim\mathcal{N}\left(0,\frac{\beta_k}{1+N_{k,h}(\phi_h(s,a))}\right)$, $\beta_k$ is a tuning parameter, and 
\begin{equation}\label{eq:temporal-squared-loss}
\begin{array}{rl}
&\displaystyle\quad\mathcal{L}(Q\mid Q_{\text{next}},\mathcal{D},\alpha,
\xi)\\
&=\sum_{(s,a,r,s')\in\mathcal{D}}\{Q -\xi-(1-\alpha)Q_{k-1,h}(\phi_h(s,a))\\
&\quad\quad\quad\quad\quad\quad\quad-\alpha (r+\max_{a'\in\mathcal{A}}Q_{\text{next}}(s',a'))\}^2.
\end{array}
\end{equation}
Here the regularized loss function defined in (\ref{eq:temporal-squared-loss}) is such that $Q(\gamma)$ as the computed value of aggregated state $\gamma=\phi_h(s,a)$ for some pair $(s,a)$ approximates 
{$$(1-\alpha_{N_{k-1,h}(\gamma)})Q_{k-1,h}+\alpha_{N_{k-1,h}(\gamma)}(r+\max_{a'\in\mathcal{A}}Q_{\text{next}}(s',a')).$$}
% by the definition~{\color{cyan}[where?]} of 
And since $\alpha_{N_{k-1,h}(\gamma)}=\frac{1}{1+N_{k-1,h}(\gamma)}$ as defined in Theorem \ref{thm:finite-horizon}, we see that when $N_{k-1,h}(\gamma)$ increases, the algorithm puts more weight on the value learned from the previous episode. At the end of episode $k$, after each agent interacts with the environment, the algorithm takes a weighted average of the values learned by each agent $p\in[N]$, by taking  
\begin{equation}\label{eq:aggregate-Q:finite horizon}
\begin{array}{rl}
    \displaystyle\hat{Q}_{k,h}(\gamma)=\frac{1}{N_{k,h}(\gamma)}\sum_{p=1}^N\mathbf{1}\{\phi_h(s_{k,h}^p,a_{k,h}^p)=\gamma\}\hat{Q}_{k,h}^p(\gamma)
\end{array}
\end{equation}
for each $\gamma\in\Gamma$, where $N_{k,h}(\gamma)$ is the total number of times that aggregated state $\gamma$ appears during episode $k$ period $h$. 

Given tuning parameter $\beta=\{\beta_k\}_{k\in\mathbb{N}}$, we denote Algorithm \ref{alg:1} by $\text{RLSVI}_\beta$. While Algorithm \ref{alg:1} is based on the RLSVI algorithm of \cite{russo2019worst}, there are some notable differences.
The algorithm of \cite{russo2019worst} is with single-agent, and at each time period $h$ during episode $k$, the agent needs to keep all the trajectory prior to episode $k$, which can be infeasible for multiple agents, because the space can grow very fast. In our algorithm, we only keep the historical data of the last episode, to make the algorithm computationally feasible. This leverages the fact that
with $N$ agents interacting with the random environment, the information within one single episode is already rich. Algorithm \ref{alg:1} is also related to the aggregated-state algorithm proposed by \cite{dong2019provably}, where the authors apply the aggregated-state idea to an optimisitic variant of Q-learning on a fixed-horizon episodic Markov decision process based on the previous UCB-type result by \cite{jin2020provably}. %while 
Our work incorporates this idea in the concurrent randomized least square value iteration. 

Algorithm \ref{alg:finite:full-buffer} follows a similar structure, with $N_{k,h}(\phi_h(s,a))$ redefined as the total number of times the aggregated state $\phi_h(s,a)$ is visited at period $h$ across all agents up to and including episode $k$. We denote it by $\text{RLSVI}_\beta^{\texttt{full}}$ to  indicate that Algorithm \ref{alg:finite:full-buffer} uses the full history up to the current episode.

With the employment of aggregated state and the modified loss function for the learning process, the result highlights that the per-agent regret decreases at a rate of $\Theta\left(\frac{1}{\sqrt{N}}\right)$. The concurrent learning algorithm for the finite-horizon case is Algorithm \ref{alg:1}.
The worst-case regret bound for Algorithm \ref{alg:1} is provided in the next section.

%%%%%%%%%%% previous version %%%%%%%%%%%%%%%%%%%%%
% We are now going to provide the worst-case regret bound for Algorithm \ref{alg:1}.

\subsection{Worst-case Regret Bound}
Let $\mathcal{M}$ be the set of MDPs with episode number $K$, horizon $H$, state space size $S$, action space size $A$, transition probabilities $P$, and rewards $R$ bounded in $[0,1]$. Let $N$ be the number of agents interacting in the same environment. We use $M=(K,H,\mathcal{S},\mathcal{A},P,R)$ to denote an MDP in $\mathcal{M}$. 

Suppose $\{\phi_h\}_{h\in[H]}$ is an $\epsilon$-error aggregation (defined as in Definition \ref{def:aggregated-state}) of the underlying MDP. For a tuning parameter sequences $\beta=\{\beta_n\}_{n\in\mathbb{N}}, \alpha=\{\alpha_n\}_{n\in\mathbb{N}}, \xi=\{\xi_n\}_{n\in\mathbb{N}}$ where $\beta_n=\frac{1}{2}H^3\log(2H\Gamma n)$, $\alpha_n=\frac{1}{1+n}$, and 
\begin{equation}\label{eq:xi:finite-horizon}
\begin{array}{rl}
\xi_{n}&\displaystyle=\epsilon+\frac{2\alpha_{n}H\sqrt{\log(2KHN/\delta)}}{\sqrt{\max\{n,1\}}}\\
&\quad\displaystyle+\frac{2\alpha_{n}\sqrt{\beta_{k-1}\log(2KHN/\delta)}}{\sqrt{(n+1)\max\{n,1\}}}
\end{array}
\end{equation}
We now provide our main results for the finite-horizon case. As explained, Algorithm \ref{alg:finite:full-buffer} stores all historical data, while Algorithm \ref{alg:1} retains only the previous episode, making it straightforward that Algorithm \ref{alg:finite:full-buffer} has a space complexity of $\mathrm{O}(KHN)$, while Algorithm \ref{alg:1} has a space complexity of $\mathrm{O}(HN)$, and their worst-case regret bounds are
\begin{Theorem}\label{thm:finite-horizon}
% With probability $1-\delta$, we have 
% \begin{equation}\label{eq:regret:high-prob:finite-horizon}
% \begin{array}{rl}
% &\quad\mathrm{Regret}(M,K,N,\pi,\texttt{RLSVI}_{\beta,\alpha,\xi})\\   
% &\leq2\epsilon KHN+8KH^{2}\sqrt{\Gamma N}\sqrt{\log(6KHN/\delta)}\\
% &\quad+32H^{2}\sqrt{K\Gamma N}\sqrt{\log(3HKN/\delta)}\\
% &\quad+2KH^{5/2}\Gamma\sqrt{N}\sqrt{\log(2KH\Gamma)}\sqrt{\log(6KHN/\delta)}.
% \end{array}
% \end{equation}
Algorithm \ref{alg:finite:full-buffer} has a worst-case regret bound 
\begin{equation}\label{eq:regret bound:finite-horizon:full}
    \begin{array}{rl}
        &\displaystyle\quad\sup_{M\in\mathcal{M}}\mathrm{Regret}(M,K,N,\pi, \texttt{RLSVI}_{\beta,\alpha,
        \xi}^{\texttt{full}})\\
        &\displaystyle\leq\widetilde{O}(\epsilon \sqrt{K}HN+H^{5/2}\Gamma\sqrt{KN}).
    \end{array}
    \end{equation}
Algorithm \ref{alg:1} has a worst-case regret bound 
\begin{equation}\label{eq:regret bound:finite-horizon}
    \begin{array}{rl}
        &\displaystyle\quad\sup_{M\in\mathcal{M}}\mathrm{Regret}(M,K,N,\pi, \texttt{RLSVI}_{\beta,\alpha,
        \xi})\\
        &\displaystyle\leq\widetilde{O}(\epsilon KHN+KH^{5/2}\Gamma\sqrt{N}),
    \end{array}
    \end{equation}
where $\widetilde{O}(\cdot)$ hides the dependence on logarithmic factors. 
\end{Theorem}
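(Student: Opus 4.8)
The plan is to run the frequentist RLSVI regret analysis of \cite{russo2019worst}, but re-architected around the aggregated-state, learning-rate-based Q-iteration of \cite{dong2019provably} (itself built on the UCB-Q recursion of \cite{jin2020provably}) and lifted to the concurrent setting. First I would fix a \emph{clean event} $\mathcal{G}$ on which every concentration bound used below holds simultaneously: the empirical rewards $\hat R^k$ concentrate around $R$, the empirical backups $\hat P^k_{h,s,a}\hat V$ concentrate around $P_h\hat V$, and each injected Gaussian (both $w^{kp}$ and the regularization noise $\tilde Q^p_{kh}$) stays within a logarithmic multiple of its standard deviation. A union bound over all $(k,h,\gamma,p)$ produces exactly the $\log(2KHN/\delta)$ factors appearing inside $\xi_n$ in \eqref{eq:xi:finite-horizon}, and choosing $\delta$ polynomially small makes $\Pr(\mathcal{G}^c)$ contribute only a lower-order remainder absorbed into $\widetilde O(\cdot)$. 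All subsequent steps are carried out conditionally on $\mathcal{G}$.

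The core of the argument is a \emph{stochastic-optimism} induction run backward over $h=H+1,\dots,1$. I claim that, with the prescribed scale $\beta_n=\tfrac12 H^3\log(2H\Gamma n)$, the per-agent perturbed value $\hat Q^p_{k,h}(\gamma)$ stochastically dominates $Q^*_h(s,a)$ minus the accumulated $\epsilon$-aggregation slack permitted by Definition~\ref{def:aggregated-state}, for any $(s,a)$ with $\phi_h(s,a)=\gamma$. The inductive step rewrites the minimizer of the regularized loss \eqref{eq:temporal-squared-loss} as $(1-\alpha_n)\hat Q_{k-1,h}+\alpha_n\bigl(r+\max_{a'}\hat Q^p_{k,h+1}\bigr)+\xi_n+(\text{noise})$, bounds the gap between this empirical backup and the true optimal backup on $\mathcal{G}$ by the concentration terms of $\xi_n$, and then invokes a Gaussian anti-concentration inequality so that the injected noise of variance $\Theta(\beta_n/n)$ covers the residual optimism gap with at least constant probability. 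The delicate point is the cross-agent averaging \eqref{eq:aggregate-Q:finite horizon}: averaging the $N_{k,h}(\gamma)$ independent per-agent estimates shrinks the effective variance of the shared estimate $\hat Q_{k,h}(\gamma)$ by its visitation count, so I must verify that $\beta_n$ is calibrated to keep the \emph{shared} estimate optimistic while still injecting enough per-agent perturbation to diversify the greedy policies $\pi^{kp}$. This is precisely the mechanism behind coordinated concurrent exploration and is what drives the $\Theta(1/\sqrt N)$ per-agent rate.

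With optimism in hand I would decompose each summand of \eqref{eq:regret:finite-horizon} as $V_1^*(s_1^p)-V^{\pi^{kp}}_1(s_1^p)=\bigl(V_1^*-\hat V^p_{k,1}\bigr)+\bigl(\hat V^p_{k,1}-V^{\pi^{kp}}_1\bigr)$. Optimism lets me replace $V^*-V^{\pi^{kp}}$ by a constant multiple of the second bracket up to a small additive remainder. That bracket is unrolled along agent $p$'s realized trajectory: since $\pi^{kp}$ is greedy with respect to $\hat Q^p_{k,\cdot}$ and $V^{\pi^{kp}}$ solves the true Bellman equation, telescoping down the $H$ periods expresses it as a sum of one-step Bellman residuals of $\hat Q^p_{k,h}$ plus a martingale-difference sequence bounded by Azuma--Hoeffding. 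Each residual is dominated by the accumulated bonus from unrolling the learning-rate recursion: with $\alpha_n=\tfrac1{1+n}$ and weights $\alpha^i_n=\alpha_i\prod_{j>i}(1-\alpha_j)$, the identity $\sum_i\alpha^i_n=1$ together with $\sum_i\alpha^i_n/\sqrt i\le O(1/\sqrt n)$ (the key estimate of \cite{jin2020provably}) shows the effective per-step bonus at an aggregated state visited $n$ times is $\widetilde O(H^{3/2}/\sqrt n)$, whereas the $\epsilon$ term of $\xi_n$ accumulates to a fixed $O(\epsilon)$ bias per value.

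It then remains to sum these bonuses over $p,k,h$ and the aggregated states. Grouping agents by the aggregated state they visit replaces $\sum_{p}\mathbf 1\{\phi_h=\gamma\}$ by the count $N_{k,h}(\gamma)$, the horizon sum supplies the extra factor of $H$, and a counting argument over the at most $\Gamma$ aggregated states and the $\Theta(KN)$ total visits per period yields the leading $H^{5/2}\Gamma\sqrt{KN}$ and $\epsilon\sqrt K HN$ (full buffer) versus $KH^{5/2}\Gamma\sqrt N$ and $\epsilon KHN$ (single buffer). This final step is exactly where the two algorithms diverge: in Algorithm~\ref{alg:finite:full-buffer} the relevant count accumulates across all past episodes, so the telescoped sum $\sum_k N_{k,h}(\gamma)/\sqrt{(\text{cumulative count})}$ behaves like $\sqrt{K}$; in Algorithm~\ref{alg:1} the estimator is forgetful, using only $N_{k-1,h}(\gamma)$ from the single previous episode, so the same sum behaves like $K$, producing the advertised $\sqrt K\to K$ degradation in exchange for the $O(HN)$ space. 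I expect the main obstacle to be the optimism step of the second paragraph: reconciling the variance reduction from cross-agent averaging with the need to stay optimistic under the forgetful single-episode buffer, while simultaneously controlling aggregated states with small visitation counts $N_{k-1,h}(\gamma)$ (where both the bonus and the Gaussian anti-concentration are most fragile), is the technically demanding part, and is where the calibration of $\beta_n$ and $\xi_n$ must be done with care.
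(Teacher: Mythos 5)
Your proposal reproduces the paper's overall skeleton (clean concentration events, a closed-form rewrite of the regularized least-squares minimizer, backward-induction optimism, telescoping regret decomposition, and Cauchy--Schwarz counting over aggregated states, with the $\sqrt{K}$-vs-$K$ split coming from cumulative vs.\ single-episode counts), but it has a genuine gap at the central optimism step. You argue for \emph{stochastic} optimism: after bounding the estimation error by the concentration terms in $\xi_n$, you invoke Gaussian anti-concentration so that the injected noise of variance $\Theta(\beta_n/n)$ ``covers the residual optimism gap with at least constant probability,'' and you then convert this to regret by replacing $V_1^*-V_1^{\pi^{kp}}$ with ``a constant multiple'' of $\hat V^p_{k,1}-V_1^{\pi^{kp}}$. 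That is the mechanism of \cite{russo2019worst}, not of this paper, and it is precisely what the paper's algorithmic design avoids: $\xi_n$ in \eqref{eq:xi:finite-horizon} is calibrated to equal $\epsilon$ plus the two concentration radii appearing in the events $\mathcal{E}(\gamma)$ and $\mathcal{G}(\gamma)$, so on those events the recursion \eqref{eq:recursion:lower-bound:Q-diff} already yields $\bar Q^p_{k,h}-Q_h^*\geq 0$ by backward induction (Lemma \ref{lemma:optimism:non-negative:finite-horizon}) --- optimism holds \emph{deterministically} on a $1-\delta$ event, no anti-concentration needed, and the Gaussian noise enters the analysis only as a nuisance term bounded below by $\mathcal{G}(\gamma)$. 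Once you see this, the residual gap you want the noise to cover is already nonpositive, so your anti-concentration step is vacuous at best; at worst it downgrades you to constant-probability optimism, and then the conversion you gloss over in one sentence becomes the hard part: Russo's argument requires conditioning on the history, exploiting independence of the episode-$k$ noise, and paying for non-optimistic episodes, and it is not at all clear this machinery survives the cross-agent averaging \eqref{eq:aggregate-Q:finite horizon} and the forgetful one-episode buffer. This distinction is exactly what the paper flags as its ``fundamentally different proof technique'' relative to \cite{russo2019worst}, and it is also why your worry about calibrating $\beta_n$ so that the \emph{averaged} shared estimate stays optimistic is a non-issue in the actual proof: an average of per-agent optimistic estimates is optimistic for free.

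Two smaller discrepancies: the paper never needs the weighted-learning-rate identities $\sum_i\alpha^i_n=1$, $\sum_i\alpha^i_n/\sqrt{i}\leq O(1/\sqrt{n})$ of \cite{jin2020provably} that you lean on, because each episode's estimate is a single weighted average of the \emph{previous} episode's data rather than a within-trajectory Q-learning recursion; the summations are handled directly by Cauchy--Schwarz as in \eqref{eq:Cauchy:1}--\eqref{eq:Gamma:term-2}. And the high-probability statement in the paper comes from a simple union bound over the events \eqref{eq:concentration}, giving the $1-3\delta$ guarantee; under your constant-probability-optimism route you would need to rebuild that probabilistic accounting from scratch. So the counting and decomposition portions of your plan are sound, but as written the proposal does not prove the theorem: the step converting constant-probability optimism into the stated $\widetilde{O}(\epsilon KHN+KH^{5/2}\Gamma\sqrt{N})$ bound is missing, and the intended (and much simpler) argument requires recognizing $\xi_n$ as a deterministic optimism bonus rather than treating the Gaussian perturbation as the source of optimism.
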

Since the only difference between the algorithms is the amount of data stored, and their structures are identical, we only prove (\ref{eq:regret bound:finite-horizon}), with (\ref{eq:regret bound:finite-horizon:full}) following immediately by reducing a factor of $\sqrt{K}$ from the regret bound in (\ref{eq:regret bound:finite-horizon}). We omit the redundant proof for Algorithm \ref{alg:finite:full-buffer} and defer the proof for (\ref{eq:regret bound:finite-horizon}) to Appendix \ref{proof:finite:horizon}.

\paragraph{Comparison with worst-case regret bounds from \cite{russo2019worst, agrawal2021improved}} A worst-case regret bound of $\Tilde{O}(H^{3}S^{3/2}\sqrt{AK})$ was obtained in \cite{russo2019worst} for a single-agent version of RLSVI algorithm.
This bound was improved later by \cite{agrawal2021improved} to $\tilde{O}(H^{5/2}S\sqrt{AK})$. For the single-agent case with $N=1$, Algorithm \ref{alg:finite:full-buffer} results in a worst-case regret bound of $\tilde{O}(\sqrt{K}H^{5/2}\Gamma)$, which translates into $\tilde{O}(H^{2}\Gamma\sqrt{T})$. So our bound matches that of \cite{agrawal2021improved} if $\Gamma=S\times A$ and $S\approx\sqrt{\Gamma}$. 
Algorithm \ref{alg:1} implies a worst-case regret bound of $\tilde{O}(KH^{5/2}\Gamma)$. Here an extra $\sqrt{K}$ compared to that of \cite{agrawal2021improved,russo2019worst} comes from the fact we only keep the trajectories of the agents from the previous episode rather than all the episodes up to the current period as in \cite{russo2019worst,agrawal2021improved} to reduce space complexity. The extra $\epsilon$ term for both algorithms comes from model misspecification of state-aggregation formulation, which is similar to the result in \cite{dong2019provably}. 

At each time period $h$, each agent gets %faces 
$N$ samples of tuples $(s,a,r,s')$, and they share information by the aggregation of information through the computation of a weighted $Q$-value (\ref{eq:aggregate-Q:finite horizon}) at the end of each episode. With this trick, though agents learn their own policies concurrently in parallel within each episode, we are able to obtain a sub-linear worst-case regret bound of $O(\sqrt{N})$ with respect to the number of agents. 

\section{Infinite-horizon Concurrent Learning}
\label{sec:infinite-horizon-case}

We now turn to the infinite-horizon case. Consider an unknown fixed environment as $M=(T,\mathcal{A},\mathcal{S},P,r,N)$, with $N$ agents interacting in $M$. Here $\mathcal{A}=[A]$ is the action space, $\mathcal{S}=[S]$ is the state space, 
$P(s'\mid s,a)$ is the transition probability from $s\in\mathcal{S}$ to $s^\prime\in\mathcal{S}$ given action $a\in\mathcal{A}$. After agent $p$ selects action $a_t^p$ at state $s_t^p$, the agent observes $s_{t+1}^p$ and receives a fixed reward $r_{t+1}^p=r(s_t^p,a_t^p)$ where $r\in[0,1]$. A stochastic policy $\pi$ can be represented by a probability mass function $\pi(\cdot\lvert s_t)$ that an agent assigns to actions in $\mathcal{A}$ given situation state $s_t$. For a policy $\pi$, we denote the average reward starting at state $s$ as 
\begin{equation}\label{lambda}
    \lambda_{\pi}(s)=\liminf_{T\rightarrow\infty}\mathbb{E}\left[\frac{1}{T}\sum_{t=0}^{T-1}r_{t+1}\Big|s_1=s\right].
\end{equation}
For any state $s\in\mathcal{S}$, denote the optimal average reward as $\lambda_{*}(s)=\sup_{\pi}\lambda_{\pi}(s)$. We consider weakly-communicating MDP, which is defined as follows:

\begin{Definition}[Weakly-communicating MDP]
A MDP is weakly communicating if there exists a closed subset of states, where each state within is reachable from any other state within that set under some deterministic stationary reward. And there exists a transient subset of states (possibly empty) under every policy. 
\end{Definition}
For any $s,s^\prime\in\mathcal{S}$ and $a\in \mathcal{A}$, denote $P_{s,a,s^\prime}=P(s^\prime\lvert s,a)$. For each policy $\pi$ define transition probabilities under $\pi$ as $P_{s,\pi, s^\prime}=\sum_{a\in\mathcal{A}}\pi(a\lvert s)P_{s,a,s^\prime}$,
and reward as $r_{s,\pi}=\sum_{a\in\mathcal{A}}\pi(a\lvert s)r(s,a).$

\paragraph{Pseudo-episodes} We extend our concurrent learning framework for the finite-horizon case to the infinite-horizon case by incorporating the idea of pseudo-episodes from \cite{xu2022posterior}. Suppose time step $t$ is the beginning of a pseudo-episode
%Then at time $t$ 
when we sample a random variable $H\sim\mathrm{Geometric}(1-\eta)$, where $\mathrm{Geometric}(1-\eta)$ is geometric distribution with parameter $1-\eta$. In the numerical experiment (section \ref{sec:experiments}), we set $\eta=0.99$. The agents compute new policies respectively according to the collected trajectories from last pseudo-episode, and sample their own MDPs respectively at time steps $t+1,\ldots,t+H-1$. Now the beginning of the next pseudo-episode is set as $t+H$. We use $\mathcal{H}_{t_1,t_2}=\bigcup_{p=1}^N\bigcup_{i=t_1}^{t_2}\{s_i^p,a_i^p,r_i^p\}$ to denote the trajectories of all agents from time step $t_1$ to time step $t_2$.
% \begin{equation}\label{eq:information:history}
%     \mathcal{H}_{t_1,t_2}=\bigcup_{p=1}^N\bigcup_{i=t_1}^{t_2}\{s_i^p,a_i^p,r_i^p\}
% \end{equation}
For policy $\pi$, denote the $\eta$-discounted value function as $V_{\pi}^{\eta}\in\mathbb{R}^S$, then we have 
\begin{equation}\label{eq:discounted-value:infinite-horizon}
    V_{\pi}^{\eta}:=\mathbb{E}_H\left[\sum_{h=0}^{H-1}P_{\pi}^hr_\pi\bigg|M\right]=\mathbb{E}\left[\sum_{h=0}^\infty\eta^h P_{\pi}^hr_\pi\bigg|M\right],
\end{equation}
where the expectation is taken over the random episode length $H$. 
% Note that a pseudo-episode which starts at time $t=t_0$ by sampling $L_{t}\sim\text{Bernoulli}(\eta)$ and continues to sample independently next step if $L_t=1$. It terminates at time $\tau$ when $L_{\tau}$ takes value $0$, so the length $H=\sum_{t=t_0}^{\tau}L_t$ follows a geometric distribution with parameter $1-\eta$. 
A policy is said to be optimal if $V_{\pi}^{\eta}=\sup_{\pi^\prime}V_{\pi^\prime}^\eta$. For an optimal policy, we also write $V_{*}^\eta(s)\equiv V_{\pi}^{\eta}(s)$ as the optimal value. Note that $V_{\pi}^{\eta}\in\mathbb{R}^S$ satisfies the Bellman equation 
$V_{\pi}^{\eta} = r_\pi+\eta P_{\pi} V_{\pi}^{\eta}.$
For any $(s,a)$, define 
$Q_{\pi}^{\eta}(s,a)=r(s,a)+\eta PV_{\pi}^{\eta}(s),$
where we use the notation that $P_{\pi}V(s)=\mathbb{E}_{s'\sim P_{s,\pi(s)}}[V(s')].$
Correspondingly, define 
$$Q_{*}^{\eta}(s,a)=r(s,a)+\eta PV_{*}^{\eta}(s').$$
By definition, we have $V_*^{\eta}(s)=\max_{a\in\mathcal{A}}Q_*^{\eta}(s,a)$. 

\paragraph{Discounted Regret} To analyze the algorithm over $T$ time steps, consider $K=\arg\max\{k: t_k\leq T\}$ as the number of pesudo-episodes until time $T$. We use the convention that $t_{K+1}=T+1$. Given a discount factor $\eta\in[0,1)$, define the $\eta$-discounted regret up to time $T$ as $\mathrm{Regret}_\eta(M,T,\pi)=\sum_{k=1}^K\Delta_{k}$, where $\Delta_{k}$ is the total regret of all $N$ agents over pseudo-episode $k$:
$\Delta_{k} = \sum_{p=1}^N V_{*}^\eta(s_{k,1}^{p})-V_{\pi^{kp}}^\eta(s_{k,1}^{p})$,
where $V_{*}^\eta=V_{\pi^*}^\beta$, policy $\pi^{kp}$ is computed from 
the trajectory $\mathcal{H}_{t_k-1,t_k-1}$ from pseudo-episode $k-1$ by agent $p$, and $a_t^p\sim\pi^{kp}(\cdot\mid s_t^p), s_{t+1}^p\sim P(\cdot\lvert s_t^p,a_t^p), r_t^p=r(s_t^p,a_t^p)$ for $t\in E_k$, and $E_k$ denotes the time steps within pseudo-episode $k$. So the discounted regret is a random variable depending on the algorithm's random sampling, and the random lengths of the pseudo-episodes, %such that 
and as a result,
$$\begin{array}{rl}
\Delta_k\!\!\!\!\!\!&= \displaystyle\mathbb{E}_{H_k}\left[\sum_{p=1}^N \sum_{h=0}^{H_k}(P_{\pi^*}^hr_{\pi^*}-P_{\pi^{kp}}r_{\pi^{kp}})\mid M\right]\\
&\displaystyle=\mathbb{E}\left[\sum_{p=1}^N \sum_{h=0}^{\infty}\eta^h(P_{\pi^*}^hr_{\pi^*}-P_{\pi^{kp}}r_{\pi^{kp}})\mid M\right].
\end{array}$$

\paragraph{Regret} The optimal average reward $\lambda_{*}$ is state-independent under a weakly-communicating MDP. The agent $p$ selects a policy $\pi^{kp}$ and executes it within the $k^{th}$ pseudo-episode. The cumulative expected regret incurred by the collection of policies $\pi=\{\pi^{kp}\}_{k\in[K],p\in[N]}$ over $T$ time steps and across $N$ agents with the fixed environment $M$ is
\begin{equation}\label{infinite-regret}
    \mathrm{Regret}(M,T,N,\pi):=\mathbb{E}_{K}\left[\sum_{k=1}^K\Delta_k\big| M\right],
\end{equation}
where the expectation is taken over the random seeds used by the randomized algorithm, conditioning on the true MDP $M$. 
In the following, we denote $(s_{k,h}^p,a_{k,h}^p,r_{k,h}^p)$ as the state, action and reward for agent $p$ during pseudo-episode $k$ and period $h$.

\paragraph{Empirical Estimation}
We let $n_k(s,a)$ be the total number of times that $(s,a)$-pair appears during the $k^{th}$ pseudo-episode, such that 
$$\begin{array}{rl}
\displaystyle n_{k}(s,a)=\sum_{p=1}^N\sum_{t=t_k}^{t_{k+1}-1}\mathbf{1}\left\{\left(s_t^p,a_t^p\right)=(s,a)\right\}.
\end{array}$$ 
Then $\forall s^\prime$, the empirical estimate $\hat{P}\left(s^\prime\big|s,a\right)$ of the transition probability for pseudo-episode $k$ is 
{\begin{equation}\label{eq:transition-prob:infinite-horizon:empirical}
\begin{array}{rl}
&\quad\hat{P}_k\left(s^\prime\big|s,a\right)\\
&\displaystyle=\frac{\sum_{p=1}^{N}\sum_{t\in E_{k-1}}\mathbf{1}\left\{(s_t^p,a_t^p,s_{t+1}^p)=(s,a,s^\prime)\right\}}{n_{k-1}(s,a)}.
\end{array}
\end{equation}}
The empirical estimate of the corresponding reward is 
\begin{equation}\label{eq:reward:infinite-horizon:empirical}
\begin{array}{rl}
&\quad\hat{R}_k(s,a)\\
&\displaystyle=\frac{\sum_{p=1}^N\mathbf{1}\{(s_t^p,a_t^p)=(s,a)\}\sum_{t\in E_{k-1}}r(s_t^p,a_t^p)}{n_{k-1}(s,a)}.
\end{array}
\end{equation}
For the second empirical estimation method, we utilize the full historical data, and similar to the finite-horizon case, the empirical estimates for transition probability and reward are
$$\begin{array}{rl}
&\quad\hat{P}_k^{\texttt{full}}\left(s^\prime\big|s,a\right)\\
&\displaystyle=\frac{\sum_{i=0}^{k-1}\sum_{p=1}^{N}\sum_{t\in E_{i}}\mathbf{1}\left\{(s_t^p,a_t^p,s_{t+1}^p)=(s,a,s^\prime)\right\}}{\sum_{i=0}^{k-1}n_{i}(s,a)}.
\end{array}$$
$$\begin{array}{rl}
&\quad\hat{R}_k^{\texttt{full}}(s,a)\\
&\displaystyle=\frac{\sum_{p=1}^N\sum_{i=0}^{k-1}\mathbf{1}\{(s_t^p,a_t^p)=(s,a)\}\sum_{t\in E_{i}}r(s_t^p,a_t^p)}{\sum_{i=0}^{k-1}n_{i}(s,a)}.
\end{array}$$

\paragraph{Aggregated States} We extend the aggregated states in the finite-horizon case to the infinite horizon case. Let $\Phi$ be the set of all aggregated states, and let $\phi:\mathcal{S}\times\mathcal{A}\rightarrow\Phi$ be the mapping from state-action pairs to aggregated states. We let $\Phi=[\Gamma]$. The aggregated representation for the infinite-horizon case is defined as follows: 
\begin{Definition}\label{def:aggregated states:infinite-horizon}
We say that $\phi$ is an $\epsilon$-error aggregated state-representation (or $\epsilon$-error aggregation) of an MDP, if for all $s,s'\in\mathcal{S}$, $a,a'\in\mathcal{A}$ such that $\phi(s,a)=\phi(s',a')$, we have $|Q_{*}^{\eta}(s,a)-Q_{*}^{\eta}(s',a')|\leq\epsilon$.
\end{Definition}
We are now ready to present the concurrent learning algorithm for the infinite-horizon case and the theoretical guarantee, as detailed in the next section.

\section{Infinite-Horizon Algorithm and Regret Bound}
\label{sec:infinite-horizon-bound}
The concurrent learning algorithm for the infinite-horizon MDP is summarized as Algorithm \ref{alg:2}. Our result is based on the following definition of reward averaging time proposed by \cite{dong2022simple}. 

\begin{Definition}\label{def:reward-averaging:infinite-horizon}
The \textit{reward averaging time} $\tau_{\pi}$ of a policy $\pi$ is the smallest value $\tau\in[0,\infty)$ such that $\forall T\geq0, s\in \mathcal{S}$, $\left\lvert\mathbb{E}_\pi\left[\sum_{t=0}^{T-1}r_{t+1}\Big|s_0=s\right]-T\cdot\lambda_{\pi}(s)\right\rvert\leq\tau$. 
\end{Definition}
Typically the regret bounds established in the literature requires assessing an optimal policy within bounded time. Examples include episode duration \citep{osband2013more,jin2018q}, diameter \citep{auer2008near}, or span \citep{bartlett2012regal}. Policies that require intractably large amount of time are infeasible in practice. So we impose the following assumption: 
\begin{Assumption}\label{assumption1}
    For any weakly communicating MDP $M$ with state space $\mathcal{S}$ and action space $\mathcal{A}$, $\exists \tau<\infty$ such that $\tau_{*}\leq\tau$. 
\end{Assumption}

When $\pi^*$ is an optimal policy for $M$, $\tau_{*}:=\tau_{\pi^*}$ is equivalent to the notion of span in \cite{bartlett2012regal}. Let $\mathcal{M}$ be the set of infinite-horizon weakly-communicating MDPs with state space size $S$, action space size $A$, rewards bounded in $[0,1]$ that satisfy Assumption \ref{assumption1}. Let $N$ be the number of agents interacting in the same environment. Recall that $\tau$ is %defined 
given
by Assumption \ref{assumption1}. 

Suppose $\{\phi\}$ is an $\epsilon$-error aggregation (defined as in Definition \ref{def:aggregated states:infinite-horizon}) of the underlying MDP. For a tuning parameter sequences $\beta=\{\beta_n\}_{n\in\mathbb{N}}, \alpha=\{\alpha_n\}_{n\in\mathbb{N}}, \xi=\{\xi_n\}_{n\in\mathbb{N}}$, where for $k$ as the index of pseudo-episode, $\beta_{k}=\frac{1}{2}{\tau}^3\log(2{\tau}\Gamma k)$, $\alpha_n=\frac{1}{1+n}$, and 
    \begin{equation}\label{eq:xi:infinite-horizon}
    \begin{array}{rl}
    \xi_{n}&=\displaystyle\epsilon+\frac{2\alpha_{n}\sqrt{\log(2TN/\delta)}}{(1-\eta)\sqrt{\max\{n,1\}}}\\
    &\quad\displaystyle+\frac{2\alpha_{n}\sqrt{\beta_{k-1}\log(2TN/\delta)}}{\sqrt{(n+1)\max\{n,1\}}}
    \end{array}
\end{equation}

\begin{Theorem}[Infinite-horizon Worst-case Regret Bound]\label{thm:infinite-horizon:main}
% With probability $1-\delta$,
% $$\begin{array}{rl}
% &\quad\mathrm{Regret}(M,T,N,\texttt{RLSVI}_{\beta,\alpha,\xi})\\
% &\displaystyle\leq2\epsilon TN+2\tau\sqrt{NT}+4T\tau\sqrt{N\log(NT)}\\
% &+16C\max\{{\tau}^{3/2},1\}T\Gamma \sqrt{N\log(2{\tau}\Gamma T)\log(6TN/\delta)}.
% \end{array}$$
Algorithm \ref{alg:infinite:full-buffer} has a worst-case regret bound 
\begin{equation}\label{eq:worst-case-bound:infinite-horizon:full}
\begin{array}{rl}
&\quad\sup_{M\in\mathcal{M}}\mathrm{Regret}(M,T,N,\texttt{RLSVI}_{\beta,\alpha,\xi}^{\texttt{full}})\\
&\leq\widetilde{O}(\epsilon \sqrt{T}N+\tau^{3/2}\Gamma\sqrt{TN}).
\end{array}
\end{equation} 
Algorithm \ref{alg:2} has a worst-case regret bound 
\begin{equation}\label{eq:worst-case-bound:infinite-horizon}
\begin{array}{rl}
&\quad\sup_{M\in\mathcal{M}}\mathrm{Regret}(M,T,N,\texttt{RLSVI}_{\beta,\alpha,\xi})\\
&\leq\widetilde{O}(\epsilon TN+\tau^{3/2}T\Gamma\sqrt{N}).
\end{array}
\end{equation}        
\end{Theorem}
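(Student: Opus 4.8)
The plan is to mirror the finite-horizon analysis behind Theorem~\ref{thm:finite-horizon}, replacing fixed-length episodes by the geometric pseudo-episodes of Section~\ref{sec:infinite-horizon-case} and replacing the horizon $H$ by the reward-averaging time $\tau$ of Definition~\ref{def:reward-averaging:infinite-horizon}. Since Algorithm~\ref{alg:2} and Algorithm~\ref{alg:infinite:full-buffer} differ only in how much history they retain, I would establish the single-buffer bound~(\ref{eq:worst-case-bound:infinite-horizon}) in detail and obtain~(\ref{eq:worst-case-bound:infinite-horizon:full}) by the same data-richness argument used in the finite-horizon case, which gains a factor of $\sqrt{T}$ because aggregating the full history sharpens every empirical estimate.

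First I would set up a per-pseudo-episode regret decomposition. Writing $\Delta_k=\sum_{p=1}^N\big(V_*^\eta(s_{k,1}^p)-V_{\pi^{kp}}^\eta(s_{k,1}^p)\big)$ and using the Bellman equation $V_\pi^\eta=r_\pi+\eta P_\pi V_\pi^\eta$, I would telescope along each agent's trajectory to express $\Delta_k$ as a sum of one-step Bellman errors of the randomized value function $\hat Q$ plus the gap between $\hat V$ and $V_*^\eta$ at the initial states. The crucial structural input is that, under Assumption~\ref{assumption1}, the span of the discounted value function is controlled by $\tau$ rather than the cruder $\frac{1}{1-\eta}$; this is what legitimizes the noise calibration $\beta_k=\tfrac12\tau^3\log(2\tau\Gamma k)$ and is the source of the $\tau^{3/2}$ appearing in the final bound.

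Next I would establish the two probabilistic ingredients standard to RLSVI analysis, now in the aggregated-state, concurrent setting. (i) A concentration (``good'') event on which the empirical estimates~(\ref{eq:transition-prob:infinite-horizon:empirical})--(\ref{eq:reward:infinite-horizon:empirical}) deviate from their targets by at most $\xi_n$ from~(\ref{eq:xi:infinite-horizon}); the $\epsilon$ term absorbs the aggregation misspecification of Definition~\ref{def:aggregated states:infinite-horizon} and the remaining terms are Hoeffding-type deviations scaled by $\alpha_n=\frac{1}{1+n}$, with a union bound over the at most $TN$ samples keeping the failure probability below $\delta$. (ii) A stochastic-optimism claim: the Gaussian perturbation $w^{kp}$ with variance $\frac{\beta_k}{N_{k-1}(\phi(s,a))+1}$ makes $\hat V$ optimistic for $V_*^\eta$ with at least constant probability, via Gaussian anti-concentration together with the $\tau^3$ calibration of $\beta_k$. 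Combining (i) and (ii), the per-pseudo-episode regret is bounded, up to logarithmic factors, by $O(\epsilon)$ per period plus the trajectory sum of the confidence widths $\sqrt{\beta_k/(N_{k-1}(\phi(s,a))+1)}$.

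Finally I would sum over pseudo-episodes and take expectation over the random geometric lengths and the random number $K$ of pseudo-episodes. Because every visited period contributes $N$ concurrent samples combined through the weighted $Q$-aggregation (the infinite-horizon analogue of~(\ref{eq:aggregate-Q:finite horizon})), each visit count carries a factor $N$, producing the per-agent $\Theta(1/\sqrt N)$ rate. For Algorithm~\ref{alg:2} the counts reset every pseudo-episode, so the per-period width stays at order $\sqrt{\beta_k/N}$ and summing over the $\approx T$ periods yields the linear factor $\tau^{3/2}T\Gamma\sqrt N$; for Algorithm~\ref{alg:infinite:full-buffer} the counts accumulate, and a potential argument of the form $\sum\tfrac{1}{\sqrt n}\lesssim\sqrt{\text{(number of terms)}}$ over the at most $T\Gamma$ aggregated-state visits improves this to $\tau^{3/2}\Gamma\sqrt{TN}$. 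I expect the main obstacle to be this last piece: cleanly handling the random pseudo-episode lengths and the discount factor $\eta$ --- which appears only in the algorithm, not in the learning target --- so that the discounted Bellman recursion closes against the true average-reward quantities with all horizon dependence routed through $\tau$. This is precisely where the reward-averaging-time bookkeeping of Definition~\ref{def:reward-averaging:infinite-horizon} and Assumption~\ref{assumption1} must be invoked carefully to avoid leaking extra powers of $\frac{1}{1-\eta}$.
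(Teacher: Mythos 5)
Your high-level architecture---pseudo-episode regret decomposition, concentration events, optimism, summing confidence widths, and routing all horizon dependence through $\tau$ via Definition~\ref{def:reward-averaging:infinite-horizon} and Assumption~\ref{assumption1}---matches the paper's, and your treatment of the full-buffer bound as a $\sqrt{T}$ refinement of the single-buffer bound is exactly what the paper does. However, your step (ii), the optimism mechanism, is a genuine gap. The paper does \emph{not} establish optimism via Gaussian anti-concentration with constant probability. Optimism is instead guaranteed \emph{deterministically on the high-probability concentration event}: the algorithm adds the explicit bonus $\xi_n$ of (\ref{eq:xi:infinite-horizon}) inside the least-squares loss, and $\xi_n$ is tuned to equal the aggregation error $\epsilon$ plus the two deviation widths appearing in the events $\mathcal{E}^I(\gamma)$ and $\mathcal{G}^I(\gamma)$ of (\ref{event:azuma-hoeffiding:infinite-horizon})--(\ref{event:normal-perturbation:infinite-horizon}). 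Plugging this into the first-order condition for $\bar{Q}_{k,h}^p$ cancels the negative deviation terms and yields $\hat{V}_{k,h}^p(s)\geq V_{*}^{\eta}(s)$ (inequality (\ref{ineq:non-negativity:infinite-horizon})) with probability $1-\delta$, not merely constant probability. You instead describe $\xi_n$ as a concentration tolerance on the empirical estimates and attribute optimism to anti-concentration of the noise $w^{kp}$. That is the Russo-style route, which (a) is not what the algorithm's loss function implements, (b) would additionally require the machinery that converts constant-probability optimism into a regret bound (controlling the regret on the non-optimistic event), which your proposal does not supply, and (c) is precisely the technique whose avoidance the paper credits for its $H^{1/2}$ (here $\tau^{1/2}$) improvement---following it would not straightforwardly deliver the claimed $\tau^{3/2}\Gamma$ rate under the calibration $\beta_k=\tfrac{1}{2}\tau^{3}\log(2\tau\Gamma k)$.

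A second, smaller gap: you correctly identify the bridge between the average-reward quantity $\lambda_{*}$ and the discounted value functions as ``the main obstacle,'' but you leave it unresolved. The paper resolves it through the explicit decomposition (\ref{eq:regret decomposition:infinite-horizon}) into part (a), $\mathbb{E}\left[\sum_{p,k}\left(H_k\lambda_{*}-V_{*}^{\eta}(s_{k,1}^{p})\right)\right]$, and part (b), the discounted regret; part (a) is then bounded in Lemma~\ref{lemma:bound-1:infinite} using $\left|V_{\pi}^{\eta}(s)-\lambda_{\pi}(s)/(1-\eta)\right|\leq\tau_{\pi}$ (Lemma~\ref{bound-1-0}), Hoeffding's inequality across the $N$ agents and $K$ pseudo-episodes, and $\mathbb{E}[K]\leq(1-\eta)T+1$. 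Your telescoping argument covers only part (b), so without an analogue of Lemma~\ref{lemma:bound-1:infinite} the plan does not yet control the regret the theorem actually bounds.
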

Note that the bound in (\ref{eq:worst-case-bound:infinite-horizon}) matches that of the finite-horizion case (\ref{eq:regret bound:finite-horizon}) by noting that $T=KH$ for the finite-horizon case. And $\tau^{3/2}$ corresponds to the $H^{3/2}$ factor in the finite-horizon case bound by noting that taking $\tau=H$ makes the condition holds in Definition \ref{def:reward-averaging:infinite-horizon} in the finite-horizon case with $T=KH$. Similar to the finite-horizon case, (\ref{eq:worst-case-bound:infinite-horizon:full}) follows directly from (\ref{eq:worst-case-bound:infinite-horizon}) with a $\sqrt{T}$ reduction, as Algorithm \ref{alg:infinite:full-buffer} utilizes the full history, whereas Algorithm \ref{alg:2} retains only the last pseudo-episode. The intuition and worst-case regret comparison follow the discussion after Theorem \ref{thm:finite-horizon}. 

\section{Numerical Experiments}
\label{sec:experiments}
We present numerical results for both finite-horizon and infinite-horizon cases in Figure \ref{fig:regret}. 
For the finite-horizon case ($S,A,K,H$) or the infinite-horizon case ($S,A,T$), the transition probabilities are drawn from a Dirichlet distribution, and rewards, fixed as deterministic, are uniformly distributed on $[0,1]$, forming inherent features of the MDP class. The finite-horizon case settings are (i) $K=20,H=30,S=5,A=5$; (ii) $K=25,H=40,S=10,A=10$; (iii) $K=30,H=50,S=20,A=20$. The infinite-horizon case settings are with $T=300$ and (i) $S=5,A=5$; (ii) $S=20,A=20$; (iii) $S=30,A=30$, where $\eta=0.99$ in the pseudo-episode sampling. Under each setting, we compare the results for $N=1,3,5,7,10,15,20,30,40,50$, with $\epsilon=0$. We set $\eta=0.99$.

For each agent number $N$ in the finite-horizon setting, we sample 500 MDPs from the defined class. For each sampled MDP, we compute the cumulative regret over time and then identify the maximum regret across all 500 instances, representing the worst-case regret in our analysis.

For the infinite-horizon setting, we estimate regret by averaging over 50 geometric segmentations of $[T]$ per MDP, consistent with the definition of infinite-horizon regret based on pseudo-episodes in (\ref{infinite-regret}). The worst-case regret is then obtained by taking the maximum across 500 sampled MDPs.

Figure \ref{fig:regret} illustrates a $1/\sqrt{N}$ decreasing trend in per-agent regret for both settings, consistent with our theoretical predictions. The replication code is available at \url{https://github.com/yz2/rlsvi_code}.

\begin{figure}[!htbb]
    \centering
    \begin{minipage}{0.5\textwidth} 
        \centering
        \includegraphics[width=\linewidth]{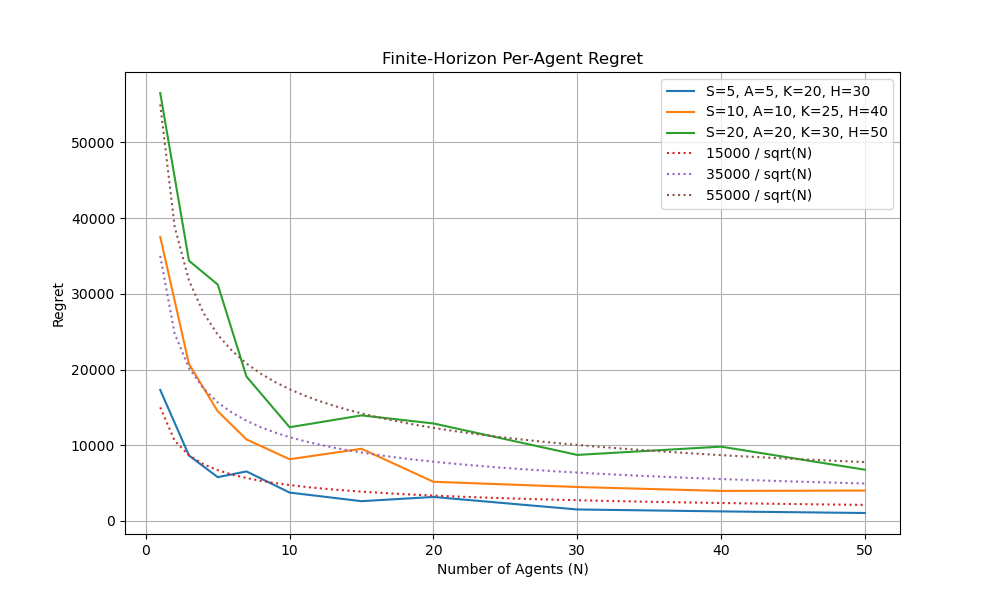}
    \end{minipage}
    \vfill 
    \begin{minipage}{0.5\textwidth} 
        \centering
        \includegraphics[width=\linewidth]{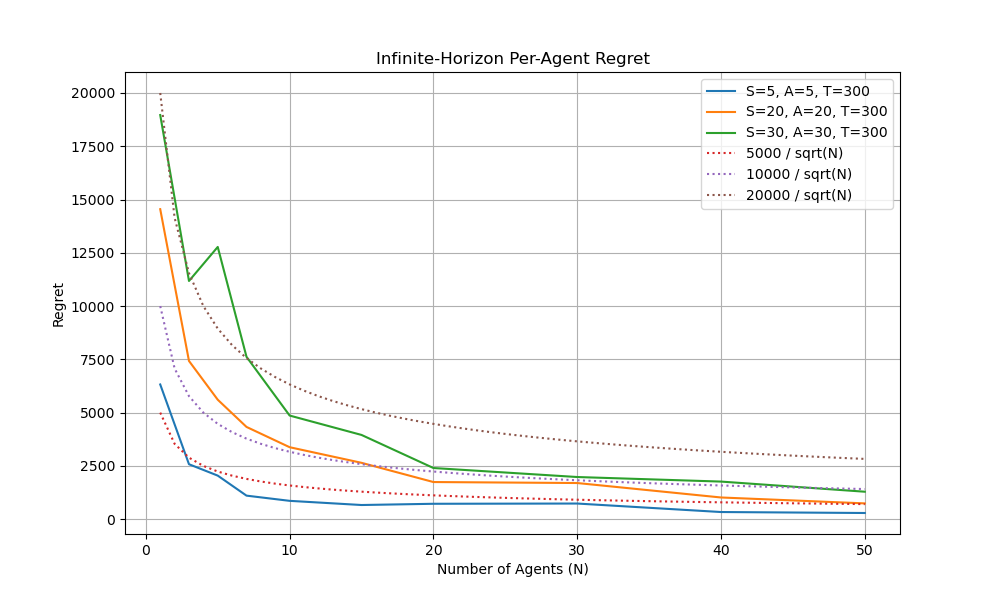}
    \end{minipage}
    \caption{Per-agent regret vs number of agents for finite-horizon (top panel) and infinite-horizon (bottom panel) settings. The solid curves represent the per-agent worst-case regret computed from $500$ random environments. The dashed ones are the reference curves of $\mathrm{constant}/\sqrt{N}$ fitting the $\Theta(1/\sqrt{N})$ trend as we show in our theoretical results.}
    \label{fig:regret}
\end{figure}

% \section{Discussion}
% \label{sec:discussion}
% We extend the Randomized Least Square Value Iteration algorithm to a concurrent learning framework with $\epsilon$-error aggregated state representations, where $N$ agents learn in parallel and share data via a strategically designed schedule. For both finite-horizon and infinite-horizon MDPs, we propose concrete algorithms and establish worst-case regret upper bounds. Simulations validate our theoretical results. Future directions include deriving sharp lower bounds and extending the framework to broader Thompson sampling-based algorithms.

\section*{Acknowledgements}
We gratefully acknowledge the support from the NSF grant CCF-2312205
and the ONR grant ONR 13983263. We would like to thank Pratap
Ranade and Engin Ural for inspiring discussions on society of agents that have helped shape and
push an ambitious vision of this research agenda, for which this work is only an initial step.

\section*{Impact Statement}
This paper presents work whose goal is to advance the field of Machine Learning. There are many potential societal consequences of our work, none which we feel must be specifically highlighted here.

% \clearpage
%\bibliography{reference}

\bibliographystyle{icml2025}

%%%%%%%%%%%%%%%%%%%%%%%%%%%%%%%%%%%%%%%%%%%%%%%%%%%%%%%%%%%%%%%%%%%%%%%%%%%%%%%
%%%%%%%%%%%%%%%%%%%%%%%%%%%%%%%%%%%%%%%%%%%%%%%%%%%%%%%%%%%%%%%%%%%%%%%%%%%%%%%
% APPENDIX
%%%%%%%%%%%%%%%%%%%%%%%%%%%%%%%%%%%%%%%%%%%%%%%%%%%%%%%%%%%%%%%%%%%%%%%%%%%%%%%
%%%%%%%%%%%%%%%%%%%%%%%%%%%%%%%%%%%%%%%%%%%%%%%%%%%%%%%%%%%%%%%%%%%%%%%%%%%%%%%
\newpage
\appendix
\onecolumn

\section{Comparison Table: Existing Work vs. Our Approach}\label{appendix:comparison}

\begin{table}[htbp]
  \caption{Comparison of regret bounds for various RLSVI/LSVI algorithms}
  \label{tab:regret-comparison}
  \centering
  \small                       % slightly smaller text
  % Column spec: l l l X l l l  (X = stretchable column)
  \begin{tabularx}{\textwidth}{@{} l l l X l l l @{}}
    \toprule
    \textbf{Agent} & \textbf{Setup} & \textbf{Algorithm}
      & \textbf{Regret Bound} & \textbf{Regret-Type}
      & \textbf{Data Stored} & \textbf{Numerical} \\
    \midrule
    Single & Tabular & RLSVI~\cite{russo2019worst}
      & $\tilde{O}\!\bigl(H^{3}S^{3/2}\sqrt{AK}\bigr)$
      & Worst-case & All-history & N/A \\
    Single & Tabular & RLSVI~\cite{agrawal2021improved}
      & $\tilde{O}\!\bigl(H^{5/2}S\sqrt{AK}\bigr)$
      & Worst-case & All-history & N/A \\
    Multi  & Tabular & Concurrent RLSVI~\cite{taiga2022introducing}
      & N/A & Bayes & All-history & Synthetic \\
    Multi  & \makecell[l]{Linear Functional\\Approximation}
      & Concurrent LSVI~\cite{desai2018negotiable}
      & $\tilde{O}\!\bigl(H^{2}\sqrt{d^{3}KN}\bigr)$
      & Worst-case & All-history & N/A \\
    Multi  & \makecell[l]{Linear Functional\\Approximation}
      & Concurrent LSVI~\cite{min2023cooperative}
      & $\tilde{O}\!\bigl(H\sqrt{dKN}\bigr)$
      & Worst-case & All-history & N/A \\
    Multi  & Tabular & Concurrent RLSVI (ours-1)
      & $\tilde{O}\!\bigl(H^{5/2}\Gamma\sqrt{KN}\bigr)$
      & Worst-case & All-history & N/A \\
    Multi  & Tabular & Concurrent RLSVI (ours-2)
      & $\tilde{O}\!\bigl(H^{5/2}\Gamma K\sqrt{N}\bigr)$
      & Worst-case & One episode & Synthetic \\
    \bottomrule
  \end{tabularx}
\end{table}
We outline several key observations for Table \ref{tab:regret-comparison} below:
\begin{itemize}
    \item The methods marked as ``N/A" in the ``numerical" column store all agents' trajectories at every step, making them computationally infeasible as $N$
     grows large. These approaches (including our first finite-horizon algorithm, second-to-last row in the table) provide only theoretical analyses without numerical results.
     \item Although \citet{taiga2022introducing} stores all data empirically, their RLSVI algorithm assumes a known parametric feature matrix, making it simpler to implement than ours. Also they evaluate only Bayes regret—less stringent than our worst-case regret—and provide empirical results exclusively on synthetic data without theoretical guarantees.
     \item The last row corresponds to storing only the latest episode data, increasing the regret bound by a factor of $\sqrt{K}$
 but reduces space complexity by a factor of $K$, making it computationally feasible.
\end{itemize}

\section{Algorithms}
Algorithm \ref{alg:finite:full-buffer} and Algorithm \ref{alg:1} are finite-horizon algorithms, where the former stores all historical data, and the latter only keeps one episode in the buffer. Algorithm \ref{alg:infinite:full-buffer} and Algorithm \ref{alg:2} are infinite-horizon algorithms, where the former stores all historical data and the latter only keeps one pseudo-episode. 

\RestyleAlgo{ruled}
\SetKwComment{Comment}{/* }{ */}

\begin{algorithm}[htp!]
\caption{Concurent RLSVI: Finite-Horizon Storing All Historical Data}\label{alg:finite:full-buffer}
\KwData {$K,H, S, \mathcal{A}, N,\mathbf{s}_1,\{\phi_h\}_{h=1}^H$, \text{Tuning parameters }$\{\beta_k\}_{k\in\mathbb{N}}$}
%\KwResult{$y = x^n$}
% \text{Initialize} $\hat{Q}_{0h}(\gamma)=H$, $\forall h\in[H], \gamma\in[\Gamma]$\\
\text{Define constants} $\alpha_t\leftarrow 1/(1+t)$, $t=1,2,\ldots$\\
\texttt{/* Define squared temporal difference error */}\\
$\mathcal{L}(Q\mid Q_{k-1,h},Q_{\text{next}},\mathcal{D},\xi, \alpha)=\sum_{(s,a,r,s')\in\mathcal{D}}(Q-\xi-(1-\alpha)Q_{k-1,h}(\phi_h(s,a))-\alpha (r+\max_{a'\in\mathcal{A}}Q_{\text{next}}(s',a')))^2$\\
\textbf{Initialize:} \\
$\hat{Q}_{0h}^p(\gamma)=H$, $\forall h\in[H]$, $\gamma\in[\Gamma],p\in[N]$\\
\text{Each agent randomly samples the initial trajectory} $\{s_{0,1}^p,a_{0,1}^p,r_{0,1}^p,\ldots,s_{0,H}^p,a_{0,H}^p,r_{0,1}^H\}_{p=1}^N$, \text{with} $s_{0,1}^p=s_1^p$\\
$N_{0,h}(\gamma)=\sum\limits_{p=1}^N\mathbf{1}\{\phi_h(s_{0,h}^p,a_{0,h}^p)=\gamma\}$, $\forall \gamma\in[\Gamma], h\in[H]$\\
compute $\hat{Q}_{0,h}$ by (~\ref{eq:aggregate-Q:finite horizon}) \\
\For{\text{episode }$k=1,2,\ldots $}{
\texttt{/*Each agent rollouts in the evironment*/}\\
\For{$p=1,\ldots,N$}{
    \texttt{/*Executed in parallel*/}\\
    \For{period $h=1,\ldots,H$}{
    $a_{k,h}^p\leftarrow\argmax_{a\in\mathcal{A}}\hat{Q}_{k-1,h}^p(\phi_h(s_{k,h}^p,a))$\\
    \text{observe reward} $r_{k,h}^p$ and next state $s_{k,h+1}^p$\\
    $\mathcal{D}_h\leftarrow\mathcal{D}_h\cup\{(s_{k,h}^p,a_{k,h}^p,r_{k,h}^p,s_{k,h+1}^p)\}$
    }
}
\texttt{/*Visitation of aggregated-states*/}\\
$N_{k,h}(\gamma)\leftarrow\sum_{i=1}^{k-1}\sum_{p=1}^N\mathbf{1}\{\phi_h(s_{i,h}^p,a_{i,h}^p)=\gamma\}$, $\forall \gamma\in[\Gamma], h\in[H]$\\
\texttt{/*Construct perturbed data sets and sample regularization noise $\tilde{Q}$*/}\\
\For{$p\in[N]$ and $h\in[H]$}{
\texttt{/*Executed in parallel*/}\\
For any $(s,a)\in\mathcal{S}\times\mathcal{A}$, \text{sample array }\\
$\tilde{Q}_{kh}^p(s,a)\sim\mathcal{N}\left(0,\frac{\beta_k}{1+N_{k,h}(\phi_h(s,a))}\right)$, \\
\texttt{/* Draw prior sample */} \\
 $\Tilde{D}_{kh}^p\leftarrow\{\}$\\
\For{$(s,a,r,s^\prime)\in\mathcal{D}_h^{k}$}{
 \text{Sample }$w^p(s,a)\sim\mathcal{N}\left(0,\frac{\beta_{k}}{1+N_{k,h}(\phi_h(s,a))}\right)$\\
    $\Tilde{D}_{kh}^p\leftarrow\Tilde{D}_{kh}^p\cup\{(s,a,r+w^p,s^\prime)\}$
}
}
\texttt{/*Estimate Q on perturbed data*/}\\
\For{$p=1,\ldots,N$}{
    \texttt{/*Executed in parallel*/}\\
    Define terminal value $\hat{Q}_{k,H+1}^p(\gamma)\leftarrow H$\ \ $\forall \gamma\in[\Gamma]$\\
    
    \For{\textit{period} $h=H,\ldots,1$}{
    $\bar{Q}_{k,h}^p(\gamma)\leftarrow\argmin_{Q\in\mathbb{R}}\mathcal{L}(Q|\hat{Q}_{k-1,h},\hat{Q}_{k,h+1}^p,\tilde{D}_{kh}^p,\xi,\alpha)+\|Q-\alpha_{N_{k-1,h}(\gamma)}\tilde{Q}_{kh}^p\|_2^2$, $\forall \gamma\in[\Gamma]$\\
    $\hat{Q}_{k,h}^p(\gamma)\leftarrow\min\{\bar{Q}_{k,h}^p(\gamma),H\}$, $\forall \gamma\in[\Gamma]$}
    $s_{k,1}^p\leftarrow s_1^p$
    }
Update $\hat{Q}_{k,h}(\gamma),\ \forall\gamma\in[\Gamma]$ by (~\ref{eq:aggregate-Q:finite horizon}) \\
}
\end{algorithm}

\RestyleAlgo{ruled}
\SetKwComment{Comment}{/* }{ */}

\begin{algorithm}[htp!]
\caption{Concurent RLSVI: Infinite-Horizon Storing All Historical Data}\label{alg:infinite:full-buffer}
 \KwData{Discount factor $\eta$, $t_0=1$, $t=1$, $k=0$,  $X_1=0$, $S, A, N, T$, $\phi$, tuning parameters $\{\beta_k\}_{k\in\mathbb{N}}$,$\xi$,$\eta$}
 \textbf{Initialize} $N_k(\gamma)\leftarrow 0$, $\forall \gamma\in[\Gamma], k\in[K]$; $\hat{Q}_0(\gamma)\leftarrow 0$, $\forall \gamma\in[\Gamma]$\\
 \text{Define constants} $\alpha_t\leftarrow1/(1+t)$, $t=1,2,\ldots$\\
 \texttt{/* Define squared temporal difference error */}\\
 $\mathcal{L}(Q\mid \hat{Q}, Q_{\text{next}},\mathcal{D},\xi,\eta,\alpha)=\sum_{(s,a,r,s')\in\mathcal{D}}(Q-\eta\xi-\eta(1-\alpha)\hat{Q}(\phi(s,a))-\alpha\eta (r+\max_{a'\in\mathcal{A}}Q_{\text{next}}(s',a')))^2$\\
 \text{Sample }$H_0\sim\textrm{Geometric}(1-\eta)$, set $H_0\leftarrow\min\{H_0,T+1-t\}$\\
\text{Each agent randomly samples the initial trajectory} $\{s_{0,1}^p,a_{0,1}^p,r_{0,1}^p,\ldots,s_{0,H}^p,a_{0,H}^p,r_{0,H}^p\}_{p=1}^N$, \text{with} $s_{0,1}^p=s_1^p$\\
$k\leftarrow k+1$, $t_k=1+H_0$\\
$t_k\leftarrow$the start time of pseudo-episode $k$\\
\While{$t\leq T$}{
\text{Sample }$H\sim\textrm{Geometric}(1-\eta)$\\
$H\leftarrow\min\{H,T+1-t\}$\\
$t_{k+1}\leftarrow t_k+H$ (the start time of pseudo-episode $k+1$)\\
\texttt{/*Each agent rollouts in the environment */}\\
\For {$p=1,\ldots,N$}{
\texttt{/*Executed in parallel*/}\\
\For{$t=t_k,\ldots,t_{k+1}-1$}{
    $a_{t}^p\leftarrow\argmax_{a\in\mathcal{A}}\hat{Q}_t^p(\phi(s_{t}^{p},a))$\\
    observe reward $r_{t}^p$ and 
    % if $t\leq t_{k+1}-1$ also observe 
    next state $s_{t+1}^p$\\
    $\mathcal{D}_k\leftarrow\mathcal{D}_k\cup\{(s_{t}^p,a_t^p,r_t^p,s_{t+1}^p)\}$
}
\texttt{/* Visitation of aggregated-states */}\\
 $N_{k-1}(\gamma)\leftarrow\sum_{p=1}^N\sum_{t=0}^{t_k-1}\mathbf{1}\{\phi(s_{t}^p,a_t^p)=\gamma\}$,\ \ $\forall \gamma\in[\Gamma]$
}
\texttt{/* Construct perturbed datasets and sample regularization noise $\tilde{Q}$ */}\\
\For{$p\in[N]$ \text{and} $t=t_k,\ldots,t_{k+1}-1$}{
\texttt{/* Executed in parallel */}\\
\text{Sample array }$\tilde{Q}_t^p(s,a)\sim\mathcal{N}(0,\frac{\beta_{N_{k-1}(\phi(s,a))}}{N_{k-1}(\phi(s,a))+1})$, $\forall (s,a)$\\
\texttt{/* Draw prior sample */} \\
$\mathcal{H}_k^p\leftarrow\{\}$\\ \For{$(s,a,r,s^\prime)\in\mathcal{D}_k$}{
\texttt{/* Randomly perturb data */} \\
\text{Sample }$w^p(s,a)\sim\mathcal{N}(0,\frac{\beta_{N_{k-1}(\phi(s,a))}}{N_{k-1}(\phi(s,a))+1})$\\
$\mathcal{H}_k^p\leftarrow\mathcal{H}_k^p\cup\{(s,a,r+w^p,s^\prime)\}$
 }
}
\texttt{/* Estimate $Q$ on perturbed data */}\\
\For{$p=1,\ldots,N$}{
  \texttt{/* Executed in Parallel */}\\
  \text{Define terminal value} $\hat{Q}_{t_{k+1}}^p(\gamma)\leftarrow0$\ \ $\forall\gamma\in[\Gamma]$\\
  \For{$t=t_{k+1}-1,\ldots,t_k$}{
  \texttt{/* Estimate Q on noisy data */}\\    $\hat{Q}_t^p\leftarrow\argmin_{Q\in\mathbb{R}}\mathcal{L}(Q\mid\hat{Q}_{k-1},\hat{Q}_{t+1}^p,\mathcal{H}_k^p,\xi,\eta,\alpha_{N_{k-1}(\gamma)})+\|Q-\eta\alpha_{N_{k-1}(\gamma)}\tilde{Q}_t^p\|_2^2$\\
  $\hat{Q}_t^p\leftarrow\min\{\hat{Q}_t^p,\frac{1}{1-\eta}\}, \forall \gamma\in[\Gamma]$
  }
  $s_{t_k}^p=s_1^p, \forall p\in[N]$.
}
$\hat{Q}_{k}(\gamma)=\frac{\sum_{p=1}^N\sum_{t=t_{k}}^{t_{k+1}-1}\mathbf{1}\{\phi(s_{t}^p,a_{t}^p)=\gamma\}\hat{Q}_{t_{k}}^p(\gamma)}{N_{k}(\gamma)}$, $\forall \gamma\in\Gamma$\\
  $t\leftarrow t_{k+1}$, $k\leftarrow k+1$
  }
\end{algorithm}

\RestyleAlgo{ruled}
\SetKwComment{Comment}{/* }{ */}

\begin{algorithm}[htp!]
\caption{Concurent RLSVI: Finite-Horizon (Storing the Data of One Episode)}\label{alg:1}
\KwData {$K,H, S, \mathcal{A}, N,\mathbf{s}_1,\{\phi_h\}_{h=1}^H$, \text{Tuning parameters }$\{\beta_k\}_{k\in\mathbb{N}}$}
%\KwResult{$y = x^n$}
% \text{Initialize} $\hat{Q}_{0h}(\gamma)=H$, $\forall h\in[H], \gamma\in[\Gamma]$\\
\text{Define constants} $\alpha_t\leftarrow 1/(1+t)$, $t=1,2,\ldots$\\
\texttt{/* Define squared temporal difference error */}\\
$\mathcal{L}(Q\mid Q_{k-1,h},Q_{\text{next}},\mathcal{D},\xi, \alpha)=\sum_{(s,a,r,s')\in\mathcal{D}}(Q-\xi-(1-\alpha)Q_{k-1,h}(\phi_h(s,a))-\alpha (r+\max_{a'\in\mathcal{A}}Q_{\text{next}}(s',a')))^2$\\
\textbf{Initialize:} \\
$\hat{Q}_{0h}^p(\gamma)=H$, $\forall h\in[H]$, $\gamma\in[\Gamma],p\in[N]$\\
\text{Each agent randomly samples the initial trajectory} $\{s_{0,1}^p,a_{0,1}^p,r_{0,1}^p,\ldots,s_{0,H}^p,a_{0,H}^p,r_{0,1}^H\}_{p=1}^N$, \text{with} $s_{0,1}^p=s_1^p$\\
$N_{0,h}(\gamma)=\sum\limits_{p=1}^N\mathbf{1}\{\phi_h(s_{0,h}^p,a_{0,h}^p)=\gamma\}$, $\forall \gamma\in[\Gamma], h\in[H]$\\
compute $\hat{Q}_{0,h}$ by (~\ref{eq:aggregate-Q:finite horizon}) \\
\For{\text{episode }$k=1,2,\ldots $}{
\texttt{/*Each agent rollouts in the evironment*/}\\
\For{$p=1,\ldots,N$}{
    \texttt{/*Executed in parallel*/}\\
    
    \For{period $h=1,\ldots,H$}{
    $a_{k,h}^p\leftarrow\argmax_{a\in\mathcal{A}}\hat{Q}_{k-1,h}^p(\phi_h(s_{k,h}^p,a))$\\
    \text{observe reward} $r_{k,h}^p$ and next state $s_{k,h+1}^p$\\
    $\mathcal{D}_h^{k}\leftarrow\mathcal{D}_h^{k}\cup\{(s_{k,h}^p,a_{k,h}^p,r_{k,h}^p,s_{k,h+1}^p)\}$
    }
}
\texttt{/*Visitation of aggregated-states*/}\\
$N_{k,h}(\gamma)\leftarrow\sum_{p=1}^N\mathbf{1}\{\phi_h(s_{k,h}^p,a_{k,h}^p)=\gamma\}$, $\forall \gamma\in[\Gamma], h\in[H]$\\
\texttt{/*Construct perturbed data sets and sample regularization noise $\tilde{Q}$*/}\\
\For{$p\in[N]$ and $h\in[H]$}{
\texttt{/*Executed in parallel*/}\\
For any $(s,a)\in\mathcal{S}\times\mathcal{A}$, \text{sample array }\\
$\tilde{Q}_{kh}^p(s,a)\sim\mathcal{N}\left(0,\frac{\beta_k}{1+N_{k,h}(\phi_h(s,a))}\right)$, \\
\texttt{/* Draw prior sample */} \\
 $\Tilde{D}_{kh}^p\leftarrow\{\}$\\
\For{$(s,a,r,s^\prime)\in\mathcal{D}_h^{k}$}{
 \text{Sample }$w^p(s,a)\sim\mathcal{N}\left(0,\frac{\beta_{k}}{1+N_{k,h}(\phi_h(s,a))}\right)$\\
    $\Tilde{D}_{kh}^p\leftarrow\Tilde{D}_{kh}^p\cup\{(s,a,r+w^p,s^\prime)\}$
}
}
\texttt{/*Estimate Q on perturbed data*/}\\
\For{$p=1,\ldots,N$}{
    \texttt{/*Executed in parallel*/}\\
    Define terminal value $\hat{Q}_{k,H+1}^p(\gamma)\leftarrow H$\ \ $\forall \gamma\in[\Gamma]$\\
    
    \For{\textit{period} $h=H,\ldots,1$}{
    $\bar{Q}_{k,h}^p(\gamma)\leftarrow\argmin_{Q\in\mathbb{R}}\mathcal{L}(Q|\hat{Q}_{k-1,h},\hat{Q}_{k,h+1}^p,\tilde{D}_{kh}^p,\xi,\alpha)+\|Q-\alpha_{N_{k-1,h}(\gamma)}\tilde{Q}_{kh}^p\|_2^2$, $\forall \gamma\in[\Gamma]$\\
    $\hat{Q}_{k,h}^p(\gamma)\leftarrow\min\{\bar{Q}_{k,h}^p(\gamma),H\}$, $\forall \gamma\in[\Gamma]$}
    $s_{k,1}^p\leftarrow s_1^p$, $\mathcal{D}_h^{k}=\emptyset$\\
    }
Update $\hat{Q}_{k,h}(\gamma),\ \forall\gamma\in[\Gamma]$ by (~\ref{eq:aggregate-Q:finite horizon}) \\
}
\end{algorithm}

\RestyleAlgo{ruled}
\SetKwComment{Comment}{/* }{ */}

\begin{algorithm}[htp!]
\caption{Concurent RLSVI: Infinite-Horizon (Storing the Data of One Pseudo-episode)}\label{alg:2}
 \KwData{Discount factor $\eta$, $t_0=1$, $t=1$, $k=0$,  $X_1=0$, $S, A, N, T$, $\phi$, tuning parameters $\{\beta_k\}_{k\in\mathbb{N}}$,$\xi$,$\eta$}
 \textbf{Initialize} $N_k(\gamma)\leftarrow 0$, $\forall \gamma\in[\Gamma], k\in[K]$; $\hat{Q}_0(\gamma)\leftarrow 0$, $\forall \gamma\in[\Gamma]$\\
 \text{Define constants} $\alpha_t\leftarrow1/(1+t)$, $t=1,2,\ldots$\\
 \texttt{/* Define squared temporal difference error */}\\
 $\mathcal{L}(Q\mid \hat{Q}, Q_{\text{next}},\mathcal{D},\xi,\eta,\alpha)=\sum_{(s,a,r,s')\in\mathcal{D}}(Q-\eta\xi-\eta(1-\alpha)\hat{Q}(\phi(s,a))-\alpha\eta (r+\max_{a'\in\mathcal{A}}Q_{\text{next}}(s',a')))^2$\\
 \text{Sample }$H_0\sim\textrm{Geometric}(1-\eta)$, set $H_0\leftarrow\min\{H_0,T+1-t\}$\\
\text{Each agent randomly samples the initial trajectory} $\{s_{0,1}^p,a_{0,1}^p,r_{0,1}^p,\ldots,s_{0,H}^p,a_{0,H}^p,r_{0,H}^p\}_{p=1}^N$, \text{with} $s_{0,1}^p=s_1^p$\\
$k\leftarrow k+1$, $t_k=1+H_0$\\
$t_k\leftarrow$the start time of pseudo-episode $k$\\
\While{$t\leq T$}{
\text{Sample }$H\sim\textrm{Geometric}(1-\eta)$\\
$H\leftarrow\min\{H,T+1-t\}$\\
$t_{k+1}\leftarrow t_k+H$ (the start time of pseudo-episode $k+1$)\\
\texttt{/*Each agent rollouts in the environment */}\\
\For {$p=1,\ldots,N$}{
\texttt{/*Executed in parallel*/}\\
\For{$t=t_k,\ldots,t_{k+1}-1$}{
    $a_{t}^p\leftarrow\argmax_{a\in\mathcal{A}}\hat{Q}_t^p(\phi(s_{t}^{p},a))$\\
    observe reward $r_{t}^p$ and 
    % if $t\leq t_{k+1}-1$ also observe 
    next state $s_{t+1}^p$\\
    $\mathcal{D}_k\leftarrow\mathcal{D}_k\cup\{(s_{t}^p,a_t^p,r_t^p,s_{t+1}^p)\}$
}
\texttt{/* Visitation of aggregated-states */}\\
 $N_{k-1}(\gamma)\leftarrow\sum_{p=1}^N\sum_{t=t_{k-1}}^{t_k-1}\mathbf{1}\{\phi(s_{t}^p,a_t^p)=\gamma\}$,\ \ $\forall \gamma\in[\Gamma]$
}
% \texttt{/*$\mathcal{D}_{k-1}$ stores data within pseudo-episode $k-1$*/}\\
% $\mathcal{D}_{k-1}\leftarrow\{(s_{t}^p,a_t^p,r_{t}^p,s_{t+1}^p)\}_{p\in[N],t=t_{k-1},\ldots,t_k-1}$\\
\texttt{/* Construct perturbed datasets and sample regularization noise $\tilde{Q}$ */}\\
\For{$p\in[N]$ \text{and} $t=t_k,\ldots,t_{k+1}-1$}{
\texttt{/* Executed in parallel */}\\
\text{Sample array }$\tilde{Q}_t^p(s,a)\sim\mathcal{N}(0,\frac{\beta_{N_{k-1}(\phi(s,a))}}{N_{k-1}(\phi(s,a))+1})$, $\forall (s,a)$\\
\texttt{/* Draw prior sample */} \\
$\mathcal{H}_k^p\leftarrow\{\}$\\ \For{$(s,a,r,s^\prime)\in\mathcal{D}_k$}{
\texttt{/* Randomly perturb data */} \\
\text{Sample }$w^p(s,a)\sim\mathcal{N}(0,\frac{\beta_{N_{k-1}(\phi(s,a))}}{N_{k-1}(\phi(s,a))+1})$\\
$\mathcal{H}_k^p\leftarrow\mathcal{H}_k^p\cup\{(s,a,r+w^p,s^\prime)\}$
 }
}
\texttt{/* Estimate $Q$ on perturbed data */}\\
\For{$p=1,\ldots,N$}{
  \texttt{/* Executed in Parallel */}\\
  \text{Define terminal value} $\hat{Q}_{t_{k+1}}^p(\gamma)\leftarrow0$\ \ $\forall\gamma\in[\Gamma]$\\
  \For{$t=t_{k+1}-1,\ldots,t_k$}{
  \texttt{/* Estimate Q on noisy data */}\\    $\hat{Q}_t^p\leftarrow\argmin_{Q\in\mathbb{R}}\mathcal{L}(Q\mid\hat{Q}_{k-1},\hat{Q}_{t+1}^p,\mathcal{H}_k^p,\xi,\eta,\alpha_{N_{k-1}(\gamma)})+\|Q-\eta\alpha_{N_{k-1}(\gamma)}\tilde{Q}_t^p\|_2^2$\\
  $\hat{Q}_t^p\leftarrow\min\{\hat{Q}_t^p,\frac{1}{1-\eta}\}, \forall \gamma\in[\Gamma]$
  }
  $s_{t_k}^p=s_1^p, \forall p\in[N]$; $\mathcal{D}_k\leftarrow\emptyset$.
}
$\hat{Q}_{k}(\gamma)=\frac{\sum_{p=1}^N\sum_{t=t_{k}}^{t_{k+1}-1}\mathbf{1}\{\phi(s_{t}^p,a_{t}^p)=\gamma\}\hat{Q}_{t_{k}}^p(\gamma)}{N_{k}(\gamma)}$,\\ 
$\forall \gamma\in\Gamma$\\
  $t\leftarrow t_{k+1}$, $k\leftarrow k+1$
  }
\end{algorithm}

\clearpage
\section{Proofs for the Finite-Horizon Case}\label{proof:finite:horizon}

In this section, we prove the worst-case regret bound for the finite-horizon case. We use the following notations:
\begin{itemize}
    \item $n_{h}^k(\gamma)$: the number of visits to aggregate state $\gamma$ at period $h$ from episodes $0$ to $k-1$.
    % \item $n_{h}^{k,p}(\gamma)$: the number of visits of agent $p\in[N]$ to aggregate state $\gamma$ at period $h$ in the first $k$ episodes (indexed from $0$ to $k-1$).
    % \item $N_{k-1,h}(\gamma)$: the number of visits to aggregate state $\gamma$ at period $h$ during episode $k-1$. 
    % \item $p_{k,h}(\gamma)$: the agent who visits aggregate state $\gamma$ during episode $k$ and period $h$.
    \item $N_{k,h}(\gamma)$: total number of agents who visit aggregate state $\gamma$ during episode $k$ and period $h$.
    % \item $\tau_{j,h}^p(\gamma)$: the episode index of the $j$-th visit of agent $p$ to aggregate state $\gamma$ at stage $h$, and 
    % $$\phi_h\left(s_{\tau_{j,h}^j(\gamma),h}^p,a_{\tau_{j,h}^p(\gamma),h}^p\right)=\gamma,\ \ \forall j=1,2,\ldots; p\in[N].$$
    \item $\pi^{kp}$: the greedy policy with respect to $\hat{Q}_{k,h}^p$, i.e. the policy that the agent $p$ follows to produce the trajectory $s_{k,1}^p,a_{k,1}^p,r_{k,1}^p,\ldots,s_{k,H}^p,a_{k,H}^p,r_{k,H}^p$. 
    \item $\hat{V}_{k,h}^p(s)$: the state value function estimate at period $h$, induced by $\hat{Q}_{k,h}^p(\gamma)$ through 
    $$\hat{V}_{k,h}^p(s)=\max_{a\in\mathcal{A}}\hat{Q}_{k,h}^p(\phi_h(s,a)).$$
    % \item $r_{k,h}^p(\gamma)$: the reward obtained by agent $p$ visiting aggregate state $\gamma$ during episode $k$ period $h$
    \item $V^p(M,\pi)$: the value function corresponding to policy $\pi$ from the initial state $s_1^p$, where $s_1^p$ is the initial state of agent $p$ at the beginning of each episode. For the true MDP $M$ we have $V_1^{\pi}(s_1^p):=V^p(M,\pi)$. 
    \item $\hat{V}_{k,1}^{\pi^{kp}}(s_1^p):=V^p(\overline{M}^{kp},\pi^{kp})$: the value function corresponding to MDP $\overline{M}^{kp}$ with initial state $s_1^p$ and policy $\pi^{kp}$, where MDP $\overline{M}^{kp}$ is defined as (\ref{eq:MDP:M-bar}). 
\end{itemize}

For any $\gamma\in[\Gamma]$, we define the following events:
\begin{equation}\label{event:azuma-hoeffiding:finite-horizon}
\begin{array}{rl}
\displaystyle\mathcal{E}(\gamma):=\bigg\{&\displaystyle\left|\frac{1}{N_{k-1,h}(\gamma)}\sum_{j=1}^{N_{k-1,h}(\gamma)}\mathbf{1}\{\phi_h(s_{k-1,h}^j,a_{k-1,h}^j)=\gamma\}\{V_{h+1}^*(s_{k-1,h}^j)-P_hV_{h+1}^*(s_{k-1,h}^j)\}\right|\\
&\displaystyle\quad\leq\frac{2H\sqrt{\log(2KHN/\delta)}}{\sqrt{N_{k-1,h}(\gamma)}}\bigg\}.
\end{array}
\end{equation}

\begin{equation}\label{event:normal-perturbation:finite-horizon}
\begin{array}{rl}
\displaystyle
\displaystyle\mathcal{G}(\gamma):=\bigg\{&\displaystyle\left|\frac{1}{N_{k-1,h}(\gamma)}\sum_{j=1}^{N_{k-1,h}(\gamma)}\mathbf{1}\{\phi_h(s_{k-1,h}^j,a_{k-1,h}^j)=\gamma\}\tilde{Q}_{k-1,h}^j(s_{k-1,h}^j,a_{k-1,h}^j)\right|\\
&\displaystyle\quad\leq2\frac{\sqrt{\beta_{k-1}\log(2KHN/\delta)}}{\sqrt{\left(N_{k-1,h}(\gamma)+1)N_{k-1,h}(\gamma\right)}}\bigg\}.
\end{array}   
\end{equation}

\subsection{Proof of Finite-Horizon Main Result: Theorem \ref{thm:finite-horizon}}
\begin{proof}[Proof of Theorem \ref{thm:finite-horizon}]
Following the derivation of (\ref{eq:Cauchy:1}) and (\ref{eq:Cauchy:2}) in Lemma \ref{lemma:V_hat - V:pi_k}, we have 
\begin{equation}\label{eq:bound:Q-tilde}
\begin{array}{rl}
&\quad\sum_{\gamma\in[\Gamma]}\sum_{h=1}^H\sum_{k=1}^K\frac{1}{\sqrt{N_{k-1,h}(\gamma)+1}}=\sum_{h=1}^H\sum_{k=1}^K\sum_{\gamma\in[\Gamma]}\sum_{j=1}^{N_{k-1,h}(\gamma)}\frac{1}{\sqrt{j}}\\
&\leq\sum_{k=1}^K\sum_{h=1}^H\sum_{\gamma\in[\Gamma]}2\sqrt{N_{k,h}(\gamma)}\leq2\sqrt{KH\Gamma\sum_{k=1}^K\sum_{h=1}^H\sum_{\gamma\in[\Gamma]}N_{k,h}(\gamma)}=2KH\sqrt{\Gamma N}
\end{array}
\end{equation}

and

$$\begin{array}{rl}
\sum_{p=1}^N\sum_{k=1}^K\sum_{h=1}^H\frac{1}{\sqrt{N_{K-1,h}(\gamma_{kh}^p)}}&=\sum_{h=1}^H\sum_{k=1}^K\sum_{\gamma\in[\Gamma]}\sum_{j=1}^{N_{K-1,h}(\gamma)}\frac{1}{\sqrt{j}}\\
&\leq\sum_{h=1}^H\sum_{k=1}^K\sum_{\gamma\in[\Gamma]}2\sqrt{N_{k-1,h}(\gamma)}\\
&\leq2\sqrt{HK\Gamma\sum_{h=1}^H\sum_{k=1}^K\sum_{\gamma\in[\Gamma]}N_{k-1,h}(\gamma)}\\
&=2HK\sqrt{\Gamma N}.
\end{array}$$

Under events $\mathcal{E}(\gamma_{kh}^p),\mathcal{G}(\gamma_{kh}^p), \forall \gamma_{kh}^p\in[\Gamma]$, we have 

$$\begin{array}{rl}
\mathrm{Regret}(M,K,N,\pi,\texttt{RLSVI}_{\beta,\alpha,\xi})=\sum_{k=1}^K\sum_{p=1}^NV_1^{*}(s_1^p)-V_{\pi^{kp}}^{\eta}(s_1^p)\leq\sum_{k=1}^K\sum_{p=1}^N\hat{V}_{k,1}(s_1^p)-V_{\pi^{kp}}^{\eta}(s_1^p).
\end{array}$$

Recall from Lemma \ref{lemma:V_hat - V:pi_k} that when $\mathcal{E}(\gamma_{kh}^p)$ and $\mathcal{G}(\gamma_{kh}^p)$ hold for all $\gamma_{kh}^p\in[\Gamma]$, with probability $1-2\delta$, we have
$$
\begin{array}{rl}
&\displaystyle\quad\sum_{p=1}^N\sum_{k=1}^K\hat{V}_{k,1}^p(s_{1}^p)-V_{\pi^{kp}}^{\eta}(s_{1}^p)\\   
&\displaystyle\leq2\epsilon KHN+8KH^{2}\sqrt{\Gamma N}\sqrt{\log(2KHN/\delta)}+32H^{2}\sqrt{K\Gamma N}\sqrt{\log(HKN/\delta)}\\
&\displaystyle\quad+2KH^{5/2}\Gamma\sqrt{N}\sqrt{\log(2KH\Gamma)}\sqrt{\log(2KHN/\delta)}.
\end{array}
$$

Note that by (\ref{event:azuma-hoeffiding:finite-horizon}), (\ref{event:normal-perturbation:finite-horizon}) and the first statement of Lemma \ref{lemma:optimism:finite-horizon}, we have 
\begin{equation}\label{eq:concentration}
\begin{array}{rl}
&\quad\mathbb{P}(\mathcal{E}(\gamma_{kh}^p),\mathcal{G}(\gamma_{kh}^p), \gamma_{kh}^p, \forall k\in[K],h\in[H],p\in[N])\\
&\geq1-\sum_{k\in[K],h\in[H],p\in[N]}(\mathbb{P}(\mathcal{E}(\gamma_{kh}^p)^c)+\mathbb{P}(\mathcal{G}(\gamma_{kh}^p)^c))\\
&\geq1-2NKH\delta/(2KHN)\geq1-\delta.
\end{array}
\end{equation}

Hence with probability $1-3\delta$, we have 

$$
\begin{array}{rl}
&\displaystyle\quad\mathrm{Regret}(M,K,N,\pi,\texttt{RLSVI}_{\beta,\alpha,\xi})\\   
&\displaystyle\leq2\epsilon KHN+8KH^{2}\sqrt{\Gamma N}\sqrt{\log(2KHN/\delta)}+32H^{2}\sqrt{K\Gamma N}\sqrt{\log(HKN/\delta)}\\
&\displaystyle\quad+2KH^{5/2}\Gamma\sqrt{N}\sqrt{\log(2KH\Gamma)}\sqrt{\log(2KHN/\delta)}.
\end{array}
$$
So we obtain (\ref{eq:regret bound:finite-horizon}).

\end{proof}

\subsection{Lemmas for State-Aggregation Results}

\begin{Lemma}[Concentration Bound]\label{lemma:concentration:finite-horizon:events}
Conditioning on $\mathcal{D}_k=\cup_{p\in[N]}\sum_{h\in[H]}\{s_{k-1,h}^p,a_{k-1,h}^p\}$ as the trajectories of episode $k-1$ across all agents, for every possible $\gamma_{kh}^p=\phi_h(s_{k,h}^p,a_{k,h}^p)$ in episode $k$,  event $\mathcal{E}(\gamma_{kh}^p)$ defined in (\ref{event:azuma-hoeffiding:finite-horizon}) and event $\mathcal{G}(\gamma_{kh}^p)$ both hold with probability $1-\delta/(2KHN)$. 
\end{Lemma}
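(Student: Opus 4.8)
The plan is to prove the two tail bounds separately, both \emph{conditionally} on the episode-$(k-1)$ state--action visitations, so that the counts $N_{k-1,h}(\gamma)$ and all the indicators $\mathbf{1}\{\phi_h(s_{k-1,h}^j,a_{k-1,h}^j)=\gamma\}$ become deterministic; in particular, once conditioned, each sum is effectively over the $n:=N_{k-1,h}(\gamma)$ agents that visit $\gamma$. With this conditioning fixed, the only remaining randomness in event $\mathcal{E}(\gamma)$ comes from the transitions $s_{k-1,h+1}^j \sim P_h(\cdot\mid s_{k-1,h}^j,a_{k-1,h}^j)$, and the only randomness in $\mathcal{G}(\gamma)$ comes from the injected Gaussian perturbations $\tilde{Q}_{k-1,h}^j$. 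Both events then reduce to a one-dimensional deviation bound on a sum of $n$ centered, independent summands, and it suffices to choose the deviation level so that each individual event fails with probability at most $\delta/(2KHN)$; the union over the $KHN$ triples $(k,h,p)$ and the two event families is performed later, in the proof of Theorem~\ref{thm:finite-horizon}.

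For $\mathcal{E}(\gamma)$ the summands are the Bellman noise terms, namely the realized next-state optimal value minus its conditional mean $P_hV_{h+1}^*$. Since $V_{h+1}^*$ is a fixed function of the (unknown but deterministic) MDP and satisfies $0\le V_{h+1}^*\le H$, each summand has zero conditional mean and lies in an interval of width at most $2H$; across agents $j$ the summands are conditionally independent because the agents' transitions are independent draws. I would apply the Azuma--Hoeffding inequality to $\sum_{j=1}^{n}\{V_{h+1}^*(s_{k-1,h+1}^j)-P_hV_{h+1}^*(s_{k-1,h}^j,a_{k-1,h}^j)\}$, which gives a tail of the form $2\exp(-t^2/(2nH^2))$; setting this equal to $\delta/(2KHN)$ yields $t=H\sqrt{2n\log(4KHN/\delta)}$, and dividing by $n$ produces the claimed bound $2H\sqrt{\log(2KHN/\delta)}/\sqrt{n}$, the factor $2$ comfortably absorbing $\sqrt{2}$ together with the slack between $\log(4KHN/\delta)$ and $2\log(2KHN/\delta)$.

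For $\mathcal{G}(\gamma)$, each $\tilde{Q}_{k-1,h}^j(s_{k-1,h}^j,a_{k-1,h}^j)$ with $\phi_h(s_{k-1,h}^j,a_{k-1,h}^j)=\gamma$ is, by construction of the algorithm, an independent $\mathcal{N}\!\left(0,\beta_{k-1}/(1+n)\right)$ variable, so $\sum_{j=1}^{n}\tilde{Q}_{k-1,h}^j$ is exactly $\mathcal{N}\!\left(0,\,n\beta_{k-1}/(1+n)\right)$. A standard Gaussian tail bound $\mathbb{P}(|Z|\ge t)\le 2\exp(-t^2/(2\sigma^2))$ with $\sigma^2=n\beta_{k-1}/(1+n)$, again equated to $\delta/(2KHN)$, gives $t=\sqrt{2\sigma^2\log(4KHN/\delta)}$; dividing by $n$ and using $\sqrt{\sigma^2}/n=\sqrt{\beta_{k-1}/(n(n+1))}$ yields the stated $2\sqrt{\beta_{k-1}\log(2KHN/\delta)}/\sqrt{(n+1)n}$, with the constants absorbed as before.

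The main obstacle is conceptual rather than computational: making the conditioning precise enough that the counts $n=N_{k-1,h}(\gamma)$ are frozen (so that no auxiliary peeling or stopping-time argument over the random visitation count is needed) while the quantities inside the two events remain genuinely random with the correct conditional laws. In particular, one must be careful that in $\mathcal{E}(\gamma)$ the relevant randomness is the transition into period $h+1$ and \emph{not} the value function itself, which is non-random given the MDP, and that the summands across agents are conditionally independent; and that in $\mathcal{G}(\gamma)$ the perturbation variances are exactly the ones pinned down by the conditioned counts. Once these points are in place, the two bounds follow as immediate applications of Hoeffding's inequality and the Gaussian tail inequality respectively.
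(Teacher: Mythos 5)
Your proof is correct and follows essentially the same route as the paper's own proof of Lemma~\ref{lemma:concentration:finite-horizon:events}: condition on the episode-$(k-1)$ data so the counts $N_{k-1,h}(\gamma)$ are frozen, apply Hoeffding's inequality to the conditionally independent, mean-zero transition terms $V_{h+1}^*(s_{k-1,h+1}^j)-P_hV_{h+1}^*(s_{k-1,h}^j)$ (bounded in range $2H$), and apply a Gaussian tail bound to the i.i.d.\ perturbation sum, with the factor $2$ absorbing $\sqrt{2}$ and the $\log(4KHN/\delta)$ versus $\log(2KHN/\delta)$ slack exactly as you compute. The only divergence is cosmetic: you take the perturbation variance to be $\beta_{k-1}/(1+N_{k-1,h}(\gamma))$, which matches Algorithm~\ref{alg:1} and the definition of $\mathcal{G}$ in (\ref{event:normal-perturbation:finite-horizon}), whereas the paper's proof writes $\beta_{k-1}/(1+N_{k-2,h}(\gamma))$ --- an internal indexing inconsistency of the paper rather than a gap in your argument.
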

\begin{proof}[Proof of Lemma \ref{lemma:concentration:finite-horizon:events}]
By H\"oeffding's inequality, conditional on the trajectory during episode $k-1$, with probability $1-\delta/(2KHN)$, 
we have
\begin{equation}\label{eq:V*:Hoeffding}
    \left|\frac{1}{N_{k-1,h}(\gamma_{kh}^p)}\sum_{j=1}^{N_{k-1,h}(\gamma_{kh}^p)}\{V_{h+1}^*(s_{k-1,h+1}^j)-P_hV_{h+1}^*(s_{k-1,h}^j)\}\right|\leq\frac{2H\sqrt{\log(2KHN/\delta)}}{\sqrt{N_{k-1,h}(\gamma_{kh}^p)}}.
\end{equation}

Additionally, recall from Algorithm \ref{alg:1} that for $k\geq2$, the random perturbation during episode $k-1$ is drawn from normal distribution $\tilde{Q}_{k-1,h}^j(s_{k-1,h}^j,a_{k-1,h}^j)\sim\mathcal{N}(0,\frac{\beta_{k-1}}{N_{k-2,h}(\phi_h(s_{k-1,h}^j,a_{k-1,h}^j))+1})$, and since $\phi_h(s_{k-1,h}^j,a_{k-1,h}^j)=\gamma_{kh}^p$ in (\ref{ineq:Q*-Qhat}), we have $\tilde{Q}_{k-1,h}^j(s_{k-1,h}^j,a_{k-1,h}^j)\sim\mathcal{N}(0,\frac{\beta_{k-1}}{N_{k-2,h}(\gamma_{kh}^p)+1})$. Also note that the random perturbations $\tilde{Q}_{k-1,h}^j(s_{k-1,h}^j,a_{k-1,h}^j)$ are i.i.d. across $j\in[N]$, hence by H\"oeffding bound, conditioning on $\mathcal{D}_{k}$, with probability $1-\delta/(2KHN)$, we have 
$$\begin{array}{rl}
\displaystyle\left|\frac{1}{N_{k-1,h}(\gamma_{kh}^p)}\sum_{j=1}^{N_{k-1,h}(\gamma_{kh}^p)}\tilde{Q}_{k-1,h}^j(s_{k-1,h}^j,a_{k-1,h}^j)\right|\leq 2\sqrt{\frac{\beta_{k-1}}{N_{k-2,h}(\gamma_{kh}^p)+1}}\sqrt{\frac{\log(2KHN/\delta)}{N_{k-1,h}(\gamma_{kh}^p)}}.
\end{array}$$
Thus the result follows.
\end{proof}

\begin{Lemma}[Optimism]\label{lemma:optimism:non-negative:finite-horizon}
When events $\mathcal{E}(\gamma_{kh}^p)$ and $\mathcal{G}(\gamma_{kh}^p)$ hold for all $\gamma_{kh}^p$ in episode $k$ and for all $k\in[K], p\in[N]$, the on-policy error $\hat{V}_{k,h}^p(s)-V_{h}^*(s)$ is lower bounded by zero for any $s\in\mathcal{S}$, $k\in[K]$, $p\in[N]$. 
\end{Lemma}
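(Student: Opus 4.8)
The plan is to prove the stronger pointwise statement that $\hat{Q}_{k,h}^p(\phi_h(s,a))\ge Q_h^*(s,a)$ for every $(s,a)$, $h$, $p$ and $k$; the asserted bound $\hat{V}_{k,h}^p(s)-V_h^*(s)\ge 0$ then follows by taking $\max_{a\in\mathcal{A}}$ of both sides. I would establish optimism by a double induction: an \emph{outer} induction on the episode index $k$ and, inside each episode, a \emph{backward} induction on the period $h$ from $H+1$ down to $1$. The base cases are immediate: at $h=H+1$ the terminal estimate obeys $\hat{Q}_{k,H+1}^p\ge 0=V_{H+1}^*$, and in the pre-round ($k=0$) the initialization $\hat{Q}_{0,h}^p(\gamma)=H\ge Q_h^*(s,a)$ secures optimism for all $h$. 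I also need optimism to pass through the weighted average (\ref{eq:aggregate-Q:finite horizon}): since $\hat{Q}_{k,h}(\gamma)$ is a convex combination of per-agent estimates $\hat{Q}_{k,h}^p(\gamma)$, each dominating $\max_{(s,a):\phi_h(s,a)=\gamma}Q_h^*(s,a)$, the average dominates the same quantity, so in the episode-$k$ update $\hat{Q}_{k-1,h}$ may be treated as optimistic.

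The heart of the argument is the inductive step. First I would record the closed form of the minimizer of the regularized loss (\ref{eq:temporal-squared-loss}): differentiating in the scalar $Q$ and solving the first-order condition gives, with $n=N_{k-1,h}(\gamma)$ and the sum over the episode-$(k-1)$ samples mapped to $\gamma$,
\begin{equation*}
(n+1)\,\bar{Q}_{k,h}^p(\gamma)=\alpha_n\tilde{Q}_{kh}^p(\gamma)+n\xi_n+n(1-\alpha_n)\hat{Q}_{k-1,h}(\gamma)+\alpha_n\!\sum_{i=1}^{n}\bigl(r_i+\hat{V}_{k,h+1}^p(s_i')\bigr).
\end{equation*}
Comparing against the Bellman optimality identity $Q_h^*(s,a)=R_{h,s,a}+P_hV_{h+1}^*(s,a)$, I would lower-bound each piece: replace $\hat{V}_{k,h+1}^p(s_i')$ by $V_{h+1}^*(s_i')$ via the inner hypothesis; replace $\hat{Q}_{k-1,h}(\gamma)$ by $Q_h^*(s,a)-\epsilon$ via the outer hypothesis combined with the $\epsilon$-aggregation of Definition \ref{def:aggregated-state}; convert $\tfrac1n\sum_i V_{h+1}^*(s_i')$ into $\tfrac1n\sum_i P_hV_{h+1}^*(s_i,a_i)$ up to the Hoeffding slack granted by event $\mathcal{E}(\gamma)$ in (\ref{event:azuma-hoeffiding:finite-horizon}); and identify $\tfrac1n\sum_i\bigl(r_i+P_hV_{h+1}^*(s_i,a_i)\bigr)=\tfrac1n\sum_i Q_h^*(s_i,a_i)\ge Q_h^*(s,a)-\epsilon$, again by aggregation. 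The injected Gaussian perturbation entering the update is controlled in absolute value by event $\mathcal{G}(\gamma)$ in (\ref{event:normal-perturbation:finite-horizon}).

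Collecting these bounds, the residual negative contributions are precisely $\epsilon$, an $\alpha_n$-weighted Hoeffding term, and an $\alpha_n$-weighted perturbation term; the bonus $\xi_n$ of (\ref{eq:xi:finite-horizon}) is defined exactly as $\epsilon$ plus these two quantities, so that the $n\xi_n$ contribution in the numerator dominates the accumulated slack and yields $\bar{Q}_{k,h}^p(\gamma)\ge Q_h^*(s,a)$. Since $Q_h^*(s,a)\le H$, the capping step preserves this: $\hat{Q}_{k,h}^p(\gamma)=\min\{\bar{Q}_{k,h}^p(\gamma),H\}\ge Q_h^*(s,a)$, which closes both inductions. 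The step I expect to be the main obstacle is the aggregation bookkeeping: the samples $(s_i,a_i)$ feeding a fixed aggregated state $\gamma$ carry distinct optimal $Q$-values and distinct transition kernels, so $\mathcal{E}(\gamma)$ must be invoked for the \emph{fixed} function $V_{h+1}^*$ against the realized next-states, and every replacement must be charged to exactly one of the three pieces of $\xi_n$. Verifying that the $\alpha_n=1/(n+1)$ weightings align so that $\xi_n$ neither over- nor under-compensates — and that the perturbation carried forward through $\hat{Q}_{k-1,h}$ (an average over agents, which is what $\mathcal{G}(\gamma)$ actually bounds) is correctly matched with the averaging step — is the delicate quantitative core of the proof.
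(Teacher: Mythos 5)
Your proposal is correct and follows essentially the same route as the paper's proof: both derive the closed form of $\bar{Q}_{k,h}^p(\gamma)$ from the first-order condition of the regularized loss, decompose $\bar{Q}_{k,h}^p - Q_h^*$ via the Bellman optimality equation and the $\epsilon$-aggregation property, invoke the events $\mathcal{E}(\gamma)$ and $\mathcal{G}(\gamma)$ so that the bonus $\xi_n$ exactly compensates the aggregation error, the Hoeffding slack, and the perturbation term, and then close by induction with the same base cases (terminal period, unvisited aggregated states set to $H$) and the same observation that capping at $H$ preserves optimism since $Q_h^*\le H$. Your explicit double induction on $(k,h)$ and the convexity argument for passing optimism through the weighted average in the aggregation step are a somewhat more careful bookkeeping of the same induction the paper runs backward over $h$, not a different method.
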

\begin{proof}[Proof of Lemma \ref{lemma:optimism:non-negative:finite-horizon}]
Recall from Algorithm \ref{alg:1} that the unclipped value function estimates $\bar{Q}_{k,h}^p(\cdot)$ in episode $k$ can be written as 
    {$$\begin{array}{rl}
    \bar{Q}_{k,h}^p(\gamma)=\argmin_{Q\in\mathbb{R}}&\sum_{(s,a):\phi_h(s,a)=\gamma}(Q-\xi_{N_{k-1,h}(\gamma)}-(1-\alpha_{N_{k-1,h}(\gamma)})\hat{Q}_{k-1,h}(\gamma)\\
    &\quad-\alpha_{N_{k-1,h}(\gamma)}\{r(s,a)+\max_{a'\in\mathcal{A}}\hat{Q}_{k,h+1}^p(s',a')\})^2+\norm{Q-\alpha_{N_{k-1,h}(\gamma)}\tilde{Q}_{k,h}^p}_2^2.
    \end{array}$$}
    By first-order condition, we have 
    {$$\begin{array}{rl}
    0&=2\sum_{(s,a):\phi_h(s,a)=\gamma}(\bar{Q}_{k,h}^p(\gamma)-\xi_{N_{k-1,h}(\gamma)}-(1-\alpha_{N_{k-1,h}(\gamma)})\hat{Q}_{k-1,h}(\gamma)\\
    &\quad\quad\quad\quad\quad\quad\quad\quad\quad\quad\quad\quad- \alpha_{N_{k-1,h}(\gamma)}\{r(s,a)+\max_{a'\in\mathcal{A}}\hat{Q}_{k,h+1}^p(s',a')\})\\
    &\quad+2\sum_{(s,a)\in\mathcal{D}_h^k:\phi_h(s,a)=\gamma}(\bar{Q}_{k,h}^p(\gamma)-\alpha_{N_{k-1,h}(\gamma)}\tilde{Q}_{k,h}^p(s,a)),
    \end{array}$$}
so by calculation we have 
$$
\begin{array}{rl}
&\quad\bar{Q}_{k,h}^p(\gamma)\\
&\displaystyle=\xi_{N_{k-1,h}(\gamma)}+\frac{1}{N_{k-1,h}(\gamma)}\sum_{(s,a)\in\mathcal{D}_h^k:\phi_h(s,a)=\gamma}(1-\alpha_{N_{k-1,h}(\gamma)})\hat{Q}_{k-1,h}(\gamma)\\
&\displaystyle\quad\quad\quad\quad\quad\quad\quad\quad\quad\quad\quad\quad\quad\quad\quad+\alpha_{N_{k-1,h}(\gamma)}(r(s,a)+\max_{a'\in\mathcal{A}}\hat{Q}_{k,h+1}^p(s',a'))+\alpha_{N_{k-1,h}(\gamma)}\tilde{Q}_{k,h}^p(s,a))\\
&\displaystyle=\xi_{N_{k-1,h}(\gamma)}+\frac{1}{N_{k-1,h}(\gamma)}\sum_{p=1}^N\sum_{(s_{k-1,h}^p,a_{k-1,h}^p)\in\mathcal{D}_{kh}^p:\phi_h(s,a)=\gamma}(1-\alpha_{N_{k-1,h}(\gamma)})\hat{Q}_{k-1,h}(\gamma)\\
&\quad\quad\quad\quad\quad\quad\quad\quad\quad+\alpha_{N_{k-1,h}(\gamma)}(r(s_{k-1,h}^p,a_{k-1,h}^p)+\hat{V}_{k,h+1}^p(s_{k-1,h+1}^p))+\alpha_{N_{k-1,h}(\gamma)}\tilde{Q}_{k,h}^p(s_{k-1,h}^p,a_{k-1,h}^p)\\
\\
&\displaystyle=\xi_{N_{k-1,h}(\gamma)}+(1-\alpha_{N_{k-1,h}(\gamma)})\hat{Q}_{k-1,h}(\gamma)\\
&\displaystyle\quad+\frac{\alpha_{N_{k-1,h}(\gamma)}}{N_{k-1,h}(\gamma)}\sum_{p=1}^N\sum_{(s_{k-1,h}^p,a_{k-1,h}^p)\in\mathcal{D}_{k,h}^p:\phi_h(s,a)=\gamma}(r(s_{k-1,h}^p,a_{k-1,h}^p)+\hat{V}_{k,h+1}^p(s_{k-1,h+1}^p)+\tilde{Q}_{k,h}^p(s_{k-1,h}^p,a_{k-1,h}^p))\\
\\
&\displaystyle=\xi_{N_{k-1,h}(\gamma)}+(1-\alpha_{N_{k-1,h}(\gamma)})\hat{Q}_{k-1,h}(\gamma)\\
&\displaystyle\quad+\frac{\alpha_{N_{k-1,h}(\gamma)}}{N_{k-1,h}(\gamma)}\sum_{p=1}^{N_{k-1,h}(\gamma)}(r(s_{k-1,h}^p,a_{k-1,h}^p)+\hat{V}_{k,h+1}^p(s_{k-1,h+1}^p)+\tilde{Q}_{k,h}^p(s_{k-1,h}^p,a_{k-1,h}^p))\\
\\
&\displaystyle=\xi_{N_{k-1,h}(\gamma)}+\frac{1}{N_{k-1,h}(\gamma)}\sum_{p=1}^{N_{k-1,h}(\gamma)}(1-\alpha_{N_{k-1,h}(\gamma)})\hat{Q}_{k-1,h}^p(\gamma)\\
&\displaystyle\quad\quad\quad\quad\quad\quad+\alpha_{N_{k-1,h}(\gamma)}(r(s_{k-1,h}^p,a_{k-1,h}^p)+\hat{V}_{k,h+1}^p(s_{k-1,h+1}^p)+\tilde{Q}_{k,h}^p(s_{k-1,h}^p,a_{k-1,h}^p)),
\end{array}
$$
where in the above derivation we used the fact that $\sum_{p=1}^N\mathbf{1}\{\phi_h(s_{k-1,h}^p,a_{k-1,h}^p)=\gamma\}=N_{k-1,h}(\gamma)$. 

Thus for $\gamma_{kh}^p=\phi_h(s_{k,h}^p,a_{k,h}^p)$, with $\phi_h(s_{k-1,h}^j,a_{k-1,h}^j)=\gamma_{kh}^p$ in the following, we have 

\begin{equation}\label{ineq:Q-diff:finite-horizon:lower-bound}
\begin{array}{rl}
&\quad \bar{Q}_{k,h}^p(s_{k,h}^p,a_{k,h}^p) - Q_h^*(s_{k,h}^p,a_{k,h}^p)\\
&\displaystyle=\xi_{N_{k-1,h}(\gamma)}+\frac{1}{N_{k-1,h}(\gamma_{kh}^p)}\sum_{j=1}^{N_{k-1,h}(\gamma_{kh}^p)}(1-\alpha_{N_{k-1,h}(\gamma_{kh}^p)})(\hat{Q}_{k-1,h}^j(s_{k-1,h}^j,a_{k-1,h}^j)-Q_h^*(s_{k,h}^p,a_{k,h}^p))\\
&\displaystyle\quad\quad\quad\quad\quad\quad+\alpha_{N_{k-1,h}(\gamma_{kh}^p)}[\{r_{k-1,h}^j(s_{k-1,h}^j,a_{k-1,h}^j)+\hat{V}_{k-1,h+1}^j(s_{k-1,h+1}^j)+\tilde{Q}_{k-1,h}^j(s_{k-1,h}^j,a_{k-1,h}^j)\}\\
&\displaystyle\quad\quad\quad\quad\quad\quad\quad\quad\quad\quad\quad\quad-Q_h^*(s_{k,h}^p,a_{k,h}^p)]\\
\\
&\displaystyle=\xi_{N_{k-1,h}(\gamma)}+\frac{1}{N_{k-1,h}(\gamma_{kh}^p)}\sum_{j=1}^{N_{k-1,h}(\gamma_{kh}^p)}(1-\alpha_{N_{k-1,h}(\gamma_{kh}^p)})(\hat{Q}_{k-1,h}^j(s_{k-1,h}^j,a_{k-1,h}^j)-Q_h^*(s_{k,h}^p,a_{k,h}^p))\\
&\displaystyle\quad+\frac{\alpha_{N_{k-1,h}(\gamma_{kh}^p)}}{N_{k-1,h}(\gamma_{kh}^p)}\sum_{p=1}^{N_{k-1,h}(\gamma_{kh}^p)}\{[r_{k-1,h}^j(s_{k-1,h}^j,a_{k-1,h}^j)+\hat{V}_{k-1,h+1}^j(s_{k-1,h+1}^j)+\tilde{Q}_{k-1,h}^j(s_{k-1,h}^j,a_{k-1,h}^j)]\\
&\displaystyle\quad\quad\quad\quad\quad\quad\quad\quad\quad\quad\quad\quad-Q_h^*(s_{k-1,h}^j,a_{k-1,h}^j)\}\\
&\displaystyle\quad+\frac{\alpha_{N_{k-1,h}(\gamma_{kh}^p)}}{N_{k-1,h}(\gamma_{kh}^p)}\sum_{j=1}^{N_{k-1,h}(\gamma_{kh}^p)}\underbrace{\{Q_h^*(s_{k-1,h}^j,a_{k-1,h}^j)-Q_h^*(s_{k,h}^p,a_{k,h}^p)\}}_{\geq-\epsilon}\\
&\displaystyle\geq-\epsilon+\xi_{N_{k-1,h}(\gamma)}+\frac{1}{N_{k-1,h}(\gamma_{kh}^p)}\sum_{j=1}^{N_{k-1,h}(\gamma_{kh}^p)}(1-\alpha_{N_{k-1,h}(\gamma_{kh}^p)})(\hat{Q}_{k-1,h}^j(s_{k-1,h}^j,a_{k-1,h}^j-Q_h^*(s_{k-1,h}^j,a_{k-1,h}^j))\\
&\displaystyle\quad+\frac{\alpha_{N_{k-1,h}(\gamma_{kh}^p)}}{N_{k-1,h}(\gamma_{kh}^p)}\sum_{j=1}^{N_{k-1,h}(\gamma_{kh}^p)}\{[r_{k-1,h}^j(s_{k-1,h}^j,a_{k-1,h}^j)+\hat{V}_{k-1,h+1}^j(s_{k-1,h+1}^j)+\tilde{Q}_{k-1,h}^j(s_{k-1,h}^j,a_{k-1,h}^j)]\\
&\displaystyle\quad\quad\quad\quad\quad\quad\quad\quad\quad\quad\quad\quad-Q_h^*(s_{k-1,h}^j,a_{k-1,h}^j)\}\\
% &\displaystyle=_{(1)}\frac{(1-\alpha_{N_{k-1,h}(\gamma_{kh}^p)})}{N_{k-1,h}(\gamma)}\sum_{j=1}^{N_{k-1,h}(\gamma_{kh}^p)}(\hat{Q}_{k-1,h}^j(s_{k-1,h}^j,a_{k-1,h}^j)-Q_h^*(s_{k-1,h}^j,a_{k-1,h}^j))+2\epsilon\\
% &\displaystyle\quad+\frac{\alpha_{N_{k-1,h}(\gamma_{kh}^p)}}{N_{k-1,h}(\gamma_{kh}^p)}\sum_{j=1}^{N_{k-1,h}(\gamma_{kh}^p)}\tilde{Q}_{k,h}^j(s_{k-1,h}^j,a_{k-1,h}^j)+\frac{\alpha_{N_{k-1,h}(\gamma_{kh}^p)}}{N_{k-1,h}(\gamma_{kh}^p)}\sum_{j=1}^{N_{k-1,h}(\gamma_{kh}^p)}\{\hat{V}_{k,h+1}^j(s_{k-1,h+1}^j)-V_{h+1}^*(s_{k-1,h+1}^j)\}\\
% &\displaystyle\quad+\frac{\alpha_{N_{k-1,h}(\gamma_{kh}^p)}}{N_{k-1,h}(\gamma_{kh}^p)}\sum_{j=1}^{N_{k-1,h}(\gamma_{kh}^p)}\{V_{h+1}^*(s_{k-1,h+1}^j)-P_hV_{h+1}^*(s_{k-1,h}^j)\}\\
% &\displaystyle=_{(2)}2\epsilon+\frac{\alpha_{N_{k-1,h}(\gamma_{kh}^p)}}{N_{k-1,h}(\gamma_{kh}^p)}\sum_{j=1}^{N_{k-1,h}(\gamma_{kh}^p)}\tilde{Q}_{k,h}^j(s_{k-1,h}^j,a_{k-1,h}^j)\\
% &\displaystyle\quad+\frac{\alpha_{N_{k-1,h}(\gamma_{kh}^p)}}{N_{k-1,h}(\gamma_{kh}^p)}\sum_{j=1}^{N_{k-1,h}(\gamma_{kh}^p)}\{\hat{V}_{k,h+1}^j(s_{k-1,h+1}^j)-V_{h+1}^*(s_{k-1,h+1}^j)\}\\
% &\displaystyle\quad\quad+\frac{\alpha_{N_{k-1,h}(\gamma_{kh}^p)}}{N_{k-1,h}(\gamma_{kh}^p)}\sum_{j=1}^{N_{k-1,h}(\gamma_{kh}^p)}\{V_{h+1}^*(s_{k-1,h+1}^j)-P_hV_{h+1}^*(s_{k-1,h}^j)\}\\
\end{array}
\end{equation}

where we use the definition of $\epsilon$-error aggregation as in Definition \ref{def:aggregated-state} in the last inequality above, and the right hand side of (\ref{ineq:Q-diff:finite-horizon:lower-bound}) is equal to 
\begin{equation}\label{ineq:Q*-Qhat}
\begin{array}{rl}
\textrm{RHS of }(\ref{ineq:Q-diff:finite-horizon:lower-bound})&\displaystyle=_{(1)}-\epsilon+\xi_{N_{k-1,h}(\gamma)}+\frac{(1-\alpha_{N_{k-1,h}(\gamma_{kh}^p)})}{N_{k-1,h}(\gamma)}\sum_{j=1}^{N_{k-1,h}(\gamma_{kh}^p)}(\hat{Q}_{k-1,h}^j(s_{k-1,h}^j,a_{k-1,h}^j)-Q_h^*(s_{k-1,h}^j,a_{k-1,h}^j))\\
&\displaystyle\quad+\frac{\alpha_{N_{k-1,h}(\gamma_{kh}^p)}}{N_{k-1,h}(\gamma_{kh}^p)}\sum_{j=1}^{N_{k-1,h}(\gamma_{kh}^p)}\tilde{Q}_{k-1,h}^j(s_{k-1,h}^j,a_{k-1,h}^j)\\
&\displaystyle\quad+\frac{\alpha_{N_{k-1,h}(\gamma_{kh}^p)}}{N_{k-1,h}(\gamma_{kh}^p)}\sum_{j=1}^{N_{k-1,h}(\gamma_{kh}^p)}\{\hat{V}_{k-1,h+1}^j(s_{k-1,h+1}^j)-V_{h+1}^*(s_{k-1,h+1}^j)\}\\
&\displaystyle\quad+\frac{\alpha_{N_{k-1,h}(\gamma_{kh}^p)}}{N_{k-1,h}(\gamma_{kh}^p)}\sum_{j=1}^{N_{k-1,h}(\gamma_{kh}^p)}\{V_{h+1}^*(s_{k-1,h+1}^j)-P_hV_{h+1}^*(s_{k-1,h}^j)\}\\
&\displaystyle=_{(2)}-\epsilon+\xi_{N_{k-1,h}(\gamma)}+\frac{\alpha_{N_{k-1,h}(\gamma_{kh}^p)}}{N_{k-1,h}(\gamma_{kh}^p)}\sum_{j=1}^{N_{k-1,h}(\gamma_{kh}^p)}\tilde{Q}_{k-1,h}^j(s_{k-1,h}^j,a_{k-1,h}^j)\\
&\displaystyle\quad+\frac{1}{N_{k-1,h}(\gamma_{kh}^p)}\sum_{j=1}^{N_{k-1,h}(\gamma_{kh}^p)}\{\hat{V}_{k-1,h+1}^j(s_{k-1,h+1}^j)-V_{h+1}^*(s_{k-1,h+1}^j)\}\\
&\displaystyle\quad+\frac{\alpha_{N_{k-1,h}(\gamma_{kh}^p)}}{N_{k-1,h}(\gamma_{kh}^p)}\sum_{j=1}^{N_{k-1,h}(\gamma_{kh}^p)}\{V_{h+1}^*(s_{k-1,h+1}^j)-P_hV_{h+1}^*(s_{k-1,h}^j)\}\\
\end{array}
\end{equation}
where the equalities (1) and (2) use the fact that 
$$Q_h(s',a')=r_{h}(s',a')+P_hV_{h+1}^*(s',a'),\ \ \forall (s',a')\in\mathcal{S}\times\mathcal{A}.$$

Suppose events $\mathcal{E}(\gamma_{kh}^p)$ and $\mathcal{G}(\gamma_{kh}^p)$ hold for all $\gamma_{kh}^p$ in episode $k$, then by (\ref{event:azuma-hoeffiding:finite-horizon}),(\ref{event:normal-perturbation:finite-horizon}), we have 
$$\frac{\alpha_{N_{k-1,h}(\gamma_{kh}^p)}}{N_{k-1,h}(\gamma_{kh}^p)}\sum_{j=1}^{N_{k-1,h}(\gamma_{kh}^p)}\{V_{h+1}^*(s_{k-1,h+1}^j)-P_hV_{h+1}^*(s_{k-1,h}^j)\}\geq-\frac{2\alpha_{N_{k-1,h}(\gamma_{kh}^p)}H\sqrt{\log(2KHN/\delta)}}{\sqrt{\max\{N_{k-1,h}(\gamma_{kh}^p),1\}}},$$
and 
$$\frac{\alpha_{N_{k-1,h}(\gamma_{kh}^p)}}{N_{k-1,h}(\gamma_{kh}^p)}\sum_{j=1}^{N_{k-1,h}(\gamma_{kh}^p)}\tilde{Q}_{k-1,h}^j(s_{k-1,h}^j,a_{k-1,h}^j)\geq-\frac{2\alpha_{N_{k-1,h}(\gamma_{kh}^p)}\sqrt{\beta_{k-1}\log(2KHN/\delta)}}{\sqrt{(N_{k-1,h}(\gamma_{kh}^p)+1)\max\{N_{k-1,h}(\gamma_{kh}^p),1\}}}.$$

So recall from (\ref{eq:xi:finite-horizon})
$$\begin{array}{rl}
\displaystyle\xi_{N_{k-1,h}(\gamma_{kh}^p)}=\epsilon+\frac{2\alpha_{N_{k-1,h}(\gamma_{kh}^p)}H\sqrt{\log(2KHN/\delta)}}{\sqrt{\max\{N_{k-1,h}(\gamma_{kh}^p),1\}}}+\frac{2\alpha_{N_{k-1,h}(\gamma_{kh}^p)}\sqrt{\beta_{k-1}\log(2KHN/\delta)}}{\sqrt{(N_{k-1,h}(\gamma_{kh}^p)+1)\max\{N_{k-1,h}(\gamma_{kh}^p),1\}}},
\end{array}
$$
thus we have 
\begin{equation}\label{eq:recursion:lower-bound:Q-diff}
\begin{array}{rl}
\bar{Q}_{k,h}^p(s_{k,h}^p,a_{k,h}^p) - Q_h^*(s_{k,h}^p,a_{k,h}^p)\geq\frac{1}{N_{k-1,h}(\gamma_{kh}^p)}\sum_{j=1}^{N_{k-1,h}(\gamma_{kh}^p)}\{\hat{V}_{k-1,h+1}^j(s_{k-1,h+1}^j)-V_{h+1}^*(s_{k-1,h+1}^j)\}.
\end{array}
\end{equation}

Note that when $N_{k-1,h}(\gamma)=0$, then $\bar{Q}_{k,h}^p(\gamma)=\hat{Q}_{k,h}^p(\gamma)=H$, and we define terminal values as $H$ with $\bar{Q}_{k,h+1}^p(\gamma)=\hat{Q}_{k,h+1}^p(\gamma)=H$, so by plugging in $h=H$ in (\ref{eq:recursion:lower-bound:Q-diff}), for any possible $\gamma\in[\Gamma]$, we have $\bar{Q}_{k,H}^p(\gamma)\geq Q_H^*(\gamma)$. Thus
$$\hat{V}_{k,H}^p(s)\geq V_{H}^*(s), \ \forall s\in\mathcal{S}, k\in[K], p\in[N].$$
Furthermore, suppose that at period $h$, we have 
$$\hat{V}_{k,h}^p(s)\geq V_h^*(s),\ \forall s\in\mathcal{S}, k\in[K], p\in[N],$$
then from (\ref{eq:recursion:lower-bound:Q-diff}) we have 
$$\bar{Q}_{k,h-1}^p(s,a) - Q_h^*(s,a)\geq\frac{1}{N_{k-1,h}(\phi_h(s,a)}\sum_{j=1}^{N_{k-1,h}(\phi_h(s,a))}\{\hat{V}_{k-1,h+1}^j(s_{k-1,h+1}^j)-V_{h+1}^*(s_{k-1,h+1}^j)\}\geq0,$$
which implies that
$$\bar{Q}_{k,h-1}^p(s,a)\geq Q_h^*(s,a), \ \forall (s,a)\in\mathcal{S}\times\mathcal{A}, k\in[K], p\in[N].$$
By maximizing over $a\in\mathcal{A}$ on both sides above, we have 
$$\hat{V}_{k-1,h-1}^p\geq V_{h-1}^*(s),\ \forall s\in\mathcal{S}, k\in[K].$$
Thus by induction, we conclude that when events $\mathcal{E}(\gamma_{kh}^p)$ and $\mathcal{G}(\gamma_{kh}^p)$ hold for all $\gamma_{kh}^p$ in episode $k$ for all $k\in[K]$, we have 
$$\hat{V}_{k,h}^p(s)\geq V_h^*(s),\ \forall s\in\mathcal{S}, k\in[K], p\in[N].$$

\end{proof}

\begin{Lemma}\label{lemma:optimism:finite-horizon}
Suppose events $\mathcal{E}(\gamma_{kh}^p)$ and $\mathcal{G}(\gamma_{kh}^p)$ hold for all $\gamma_{kh}^p$, for all $k\in[K]$ and $p\in[N]$, then we have 
$$\begin{array}{rl}
\sum_{p=1}^N\sum_{k=1}^K\left[\hat{V}_{k,h}^p(s_{k,h}^p)-V_{h}^{*}(s_{k,h}^p)\right]&\displaystyle\leq2\epsilon KN(H-h+1)\\
&\displaystyle\quad+4H\sqrt{\log(2KHN/\delta)}\sum_{k=1}^K\sum_{p=1}^N\sum_{\ell=h}^H\frac{1}{\sqrt{N_{k-1,\ell}(\gamma_{k\ell}^p)}}\frac{1}{1+N_{k-1,\ell}(\gamma_{k\ell}^p)}\\
&\displaystyle\quad+4\sum_{\ell=h}^H\sum_{p=1}^N\sum_{k=2}^K\frac{\sqrt{\beta_{k-1}\log(2KHN/\delta)}}{\sqrt{\left(n_{k-1}^h(\gamma_{kh}^p)+1)N_{k-2,h}(\gamma_{kh}^p\right)}}\frac{1}{1+N_{k-1,\ell}(\gamma_{k\ell}^p)}.
\end{array}$$
\end{Lemma}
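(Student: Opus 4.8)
The plan is to mirror the argument of Lemma~\ref{lemma:optimism:non-negative:finite-horizon}, but now extracting an \emph{upper} bound on the per-step optimism gap $g_{k,h}^p := \hat{V}_{k,h}^p(s_{k,h}^p) - V_h^*(s_{k,h}^p)$ instead of just its sign. Concretely, I would reuse the exact decomposition (\ref{ineq:Q*-Qhat}) of $\bar{Q}_{k,h}^p(s_{k,h}^p,a_{k,h}^p) - Q_h^*(s_{k,h}^p,a_{k,h}^p)$, but replace the $\epsilon$-aggregation step $Q_h^*(s_{k-1,h}^j,a_{k-1,h}^j) - Q_h^*(s_{k,h}^p,a_{k,h}^p) \ge -\epsilon$ by its companion upper bound $\le \epsilon$ from Definition~\ref{def:aggregated-state}. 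On the events $\mathcal{E}(\gamma_{kh}^p)$ and $\mathcal{G}(\gamma_{kh}^p)$, the Azuma/Hoeffding term and the Gaussian-perturbation term are now bounded from \emph{above} by the quantities in (\ref{event:azuma-hoeffiding:finite-horizon}) and (\ref{event:normal-perturbation:finite-horizon}); adding the definition of $\xi_{N_{k-1,h}(\gamma)}$ in (\ref{eq:xi:finite-horizon}), which is constructed precisely to dominate those two terms, produces a bound in which $\xi$ and the stochastic terms combine into twice the deterministic drift: a leading $2\epsilon$ together with $4\alpha_n H\sqrt{\log(2KHN/\delta)}/\sqrt{\max\{n,1\}}$ and $4\alpha_n\sqrt{\beta_{k-1}\log(2KHN/\delta)}/\sqrt{(n+1)\max\{n,1\}}$, where $n = N_{k-1,h}(\gamma_{kh}^p)$.

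The next step passes from $\bar{Q}$ to the gap. Since $a_{k,h}^p$ is greedy for $\hat{Q}_{k,h}^p$ and $\hat{Q}_{k,h}^p = \min\{\bar{Q}_{k,h}^p, H\} \le \bar{Q}_{k,h}^p$, while $V_h^*(s_{k,h}^p) = \max_a Q_h^*(s_{k,h}^p,a) \ge Q_h^*(s_{k,h}^p,a_{k,h}^p)$, one obtains $g_{k,h}^p \le \bar{Q}_{k,h}^p(s_{k,h}^p,a_{k,h}^p) - Q_h^*(s_{k,h}^p,a_{k,h}^p)$. Substituting the bound from the first paragraph yields the one-step recursion
\begin{equation*}
g_{k,h}^p \le b_{k,h}^p + \frac{1}{N_{k-1,h}(\gamma_{kh}^p)}\sum_{j=1}^{N_{k-1,h}(\gamma_{kh}^p)} g_{k-1,h+1}^j,
\end{equation*}
where $b_{k,h}^p$ collects the three bias terms above and the $j$-sum runs over the agents that visited $\gamma_{kh}^p$ at period $h$ in episode $k-1$. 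Summing over $p\in[N]$ and $k\in[K]$ and unrolling in $h$ down to the terminal level (where $g_{k,H+1}^p = 0$) produces the three pieces of the claimed bound: the $2\epsilon$ drift summed over the $KN$ agent-episodes and the $H-h+1$ levels gives $2\epsilon KN(H-h+1)$, and the two stochastic bias families accumulate into the two remaining sums over $\ell = h,\dots,H$, each carrying the factor $\alpha_{N_{k-1,\ell}(\gamma_{k\ell}^p)} = 1/(1+N_{k-1,\ell}(\gamma_{k\ell}^p))$.

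The key enabler is Lemma~\ref{lemma:optimism:non-negative:finite-horizon}: on the same events every gap satisfies $g_{k,h}^p \ge 0$, so all terms carried through the recursion are nonnegative and may be regrouped by aggregated state without sign issues. The main obstacle is exactly the cross-episode averaging in the recursion term. Writing $\Psi_h := \sum_{k,p} g_{k,h}^p$ and $B_h := \sum_{k,p} b_{k,h}^p$, the difficulty is that summing $\frac{1}{N_{k-1,h}(\gamma)}\sum_j g_{k-1,h+1}^j$ over the $N_{k,h}(\gamma)$ current-episode agents at each $\gamma$ introduces the visitation ratio $N_{k,h}(\gamma)/N_{k-1,h}(\gamma)$, and after regrouping the episode-$(k-1)$ averaged gaps and re-indexing $k \mapsto k+1$ I must argue that this weighted sum is controlled by the unweighted next-level sum $\sum_{k,p} g_{k,h+1}^p$, so that the recursion closes as $\Psi_h \le B_h + \Psi_{h+1}$. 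This counting step — matching the averaged previous-episode gaps to the current-episode trajectory sums while keeping the bias bookkeeping indexed by $(k,p,\ell)$ — is where the nonnegativity of the gaps and the conservation $\sum_\gamma N_{k,h}(\gamma) = N$ do the real work, and it is the part I would write out most carefully; the rest is the routine $\alpha_n = 1/(1+n)$ substitution and collecting logarithmic factors.
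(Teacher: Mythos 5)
Your outline reproduces the paper's own proof step for step: the upper-$\epsilon$ version of the decomposition is (\ref{ineq:Q-diff:finite-horizon})--(\ref{ineq:Q*-Qhat:lower-bound}), the passage from $\bar{Q}_{k,h}^p$ to the gap via the greedy action and clipping is (\ref{ineq:V_hat}), the absorption of the event bounds into $\xi$ (producing the leading $2\epsilon$ and the two factor-$4$ stochastic terms) is (\ref{ineq:Delta:left})--(\ref{ineq:Delta:left:sum-over-k}), and the unrolling in $h$ is (\ref{eq:key:Delta}). The gap is that the single step you defer---closing the recursion $\Psi_h\le B_h+\Psi_{h+1}$ in the presence of the cross-episode averaging---is the step that carries essentially all the content of this lemma, and the ingredients you claim will close it do not suffice. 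Grouping the episode-$k$ agents by aggregated state and substituting $\alpha_n=1/(1+n)$, the recursion term you must control is
\begin{equation*}
\sum_{\gamma\in[\Gamma]}\frac{N_{k,h}(\gamma)}{N_{k-1,h}(\gamma)\bigl(1+N_{k-1,h}(\gamma)\bigr)}\sum_{j:\,\phi_h(s_{k-1,h}^j,a_{k-1,h}^j)=\gamma} g_{k-1,h+1}^j ,
\end{equation*}
so each (nonnegative) previous-episode gap is carried with weight $N_{k,h}(\gamma)/\bigl(N_{k-1,h}(\gamma)(1+N_{k-1,h}(\gamma))\bigr)$. Nonnegativity of the gaps plus the conservation $\sum_{\gamma}N_{k,h}(\gamma)=N$ only yield $\Psi_h\le B_h+\Psi_{h+1}$ if each such weight is at most $1$, i.e.\ if $N_{k,h}(\gamma)\le N_{k-1,h}(\gamma)+N_{k-1,h}(\gamma)^2$ for every $\gamma$; this can fail badly (one visitor to $\gamma$ in episode $k-1$ and all $N$ agents visiting $\gamma$ in episode $k$ gives weight $N/2$). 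So ``nonnegativity and conservation do the real work'' is not yet an argument, and it is exactly at this point that your proposal stops being a proof.

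For comparison, the paper's treatment of this step is (\ref{eq:Delta:recursion}): the $p$-sum is regrouped so that the current-episode count $N_{k,h}(\gamma)$ never appears---the expression is rewritten, as an asserted equality, into $\sum_{k}\alpha_{N_{k-1,h}(\gamma)}\sum_{p=1}^{N_{k-1,h}(\gamma)}\Delta_{k,h+1}^p$---and this is then bounded by $\sum_{k}\sum_{p=1}^{N}\Delta_{k,h+1}^p$ using $\alpha_n\le1$, the fact that the inner sum runs over a subset of the $N$ agents, and the nonnegativity supplied by Lemma \ref{lemma:optimism:non-negative:finite-horizon}. In other words, the visitation ratio you correctly worry about is dissolved by the paper's regrouping rather than bounded. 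To complete your write-up you would have to either reproduce that regrouping explicitly (and convince yourself the claimed equality is legitimate under your indexing, where the inner gaps sit at episode $k-1$) or supply a genuinely different control of the ratio; the proposal currently contains neither.
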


\begin{proof}[Proof of Lemma \ref{lemma:optimism:finite-horizon}]

By (\ref{ineq:Q-diff:finite-horizon:lower-bound}) we have  
\begin{equation}\label{ineq:Q-diff:finite-horizon}
\begin{array}{rl}
&\quad \bar{Q}_{k,h}^p(s_{k,h}^p,a_{k,h}^p) - Q_h^*(s_{k,h}^p,a_{k,h}^p)\\
&\displaystyle=\xi_{N_{k-1,h}(\gamma)}+\frac{1}{N_{k-1,h}(\gamma_{kh}^p)}\sum_{j=1}^{N_{k-1,h}(\gamma_{kh}^p)}(1-\alpha_{N_{k-1,h}(\gamma_{kh}^p)})(\hat{Q}_{k-1,h}^j(s_{k-1,h}^j,a_{k-1,h}^j)-Q_h^*(s_{k,h}^p,a_{k,h}^p))\\
&\displaystyle\quad\quad\quad\quad\quad\quad+\alpha_{N_{k-1,h}(\gamma_{kh}^p)}[\{r_{k-1,h}^j(s_{k-1,h}^j,a_{k-1,h}^j)+\hat{V}_{k-1,h+1}^j(s_{k-1,h+1}^j)+\tilde{Q}_{k-1,h}^j(s_{k-1,h}^j,a_{k-1,h}^j)\}\\
&\displaystyle\quad\quad\quad\quad\quad\quad\quad\quad\quad\quad\quad\quad-Q_h^*(s_{k,h}^p,a_{k,h}^p)]\\
\\
&\displaystyle=\xi_{N_{k-1,h}(\gamma)}+\frac{1}{N_{k-1,h}(\gamma_{kh}^p)}\sum_{j=1}^{N_{k-1,h}(\gamma_{kh}^p)}(1-\alpha_{N_{k-1,h}(\gamma_{kh}^p)})(\hat{Q}_{k-1,h}^j(s_{k-1,h}^j,a_{k-1,h}^j)-Q_h^*(s_{k,h}^p,a_{k,h}^p))\\
&\displaystyle\quad+\frac{\alpha_{N_{k-1,h}(\gamma_{kh}^p)}}{N_{k-1,h}(\gamma_{kh}^p)}\sum_{p=1}^{N_{k-1,h}(\gamma_{kh}^p)}\{[r_{k-1,h}^j(s_{k-1,h}^j,a_{k-1,h}^j)+\hat{V}_{k-1,h+1}^j(s_{k-1,h+1}^j)+\tilde{Q}_{k-1,h}^j(s_{k-1,h}^j,a_{k-1,h}^j)]\\
&\displaystyle\quad\quad\quad\quad\quad\quad\quad\quad\quad\quad\quad\quad-Q_h^*(s_{k-1,h}^j,a_{k-1,h}^j)\}\\
&\displaystyle\quad+\frac{\alpha_{N_{k-1,h}(\gamma_{kh}^p)}}{N_{k-1,h}(\gamma_{kh}^p)}\sum_{j=1}^{N_{k-1,h}(\gamma_{kh}^p)}\underbrace{\{Q_h^*(s_{k-1,h}^j,a_{k-1,h}^j)-Q_h^*(s_{k,h}^p,a_{k,h}^p)\}}_{\leq\epsilon}\\
&\displaystyle\leq\epsilon+\xi_{N_{k-1,h}(\gamma)}+\frac{1}{N_{k-1,h}(\gamma_{kh}^p)}\sum_{j=1}^{N_{k-1,h}(\gamma_{kh}^p)}(1-\alpha_{N_{k-1,h}(\gamma_{kh}^p)})(\hat{Q}_{k-1,h}^j(s_{k-1,h}^j,a_{k-1,h}^j-Q_h^*(s_{k-1,h}^j,a_{k-1,h}^j))\\
&\displaystyle\quad+\frac{\alpha_{N_{k-1,h}(\gamma_{kh}^p)}}{N_{k-1,h}(\gamma_{kh}^p)}\sum_{j=1}^{N_{k-1,h}(\gamma_{kh}^p)}\{[r_{k-1,h}^j(s_{k-1,h}^j,a_{k-1,h}^j)+\hat{V}_{k-1,h+1}^j(s_{k-1,h+1}^j)+\tilde{Q}_{k-1,h}^j(s_{k-1,h}^j,a_{k-1,h}^j)]\\
&\displaystyle\quad\quad\quad\quad\quad\quad\quad\quad\quad\quad\quad\quad-Q_h^*(s_{k-1,h}^j,a_{k-1,h}^j)\}\\
% &\displaystyle=_{(1)}\frac{(1-\alpha_{N_{k-1,h}(\gamma_{kh}^p)})}{N_{k-1,h}(\gamma)}\sum_{j=1}^{N_{k-1,h}(\gamma_{kh}^p)}(\hat{Q}_{k-1,h}^j(s_{k-1,h}^j,a_{k-1,h}^j)-Q_h^*(s_{k-1,h}^j,a_{k-1,h}^j))+2\epsilon\\
% &\displaystyle\quad+\frac{\alpha_{N_{k-1,h}(\gamma_{kh}^p)}}{N_{k-1,h}(\gamma_{kh}^p)}\sum_{j=1}^{N_{k-1,h}(\gamma_{kh}^p)}\tilde{Q}_{k,h}^j(s_{k-1,h}^j,a_{k-1,h}^j)+\frac{\alpha_{N_{k-1,h}(\gamma_{kh}^p)}}{N_{k-1,h}(\gamma_{kh}^p)}\sum_{j=1}^{N_{k-1,h}(\gamma_{kh}^p)}\{\hat{V}_{k,h+1}^j(s_{k-1,h+1}^j)-V_{h+1}^*(s_{k-1,h+1}^j)\}\\
% &\displaystyle\quad+\frac{\alpha_{N_{k-1,h}(\gamma_{kh}^p)}}{N_{k-1,h}(\gamma_{kh}^p)}\sum_{j=1}^{N_{k-1,h}(\gamma_{kh}^p)}\{V_{h+1}^*(s_{k-1,h+1}^j)-P_hV_{h+1}^*(s_{k-1,h}^j)\}\\
% &\displaystyle=_{(2)}2\epsilon+\frac{\alpha_{N_{k-1,h}(\gamma_{kh}^p)}}{N_{k-1,h}(\gamma_{kh}^p)}\sum_{j=1}^{N_{k-1,h}(\gamma_{kh}^p)}\tilde{Q}_{k,h}^j(s_{k-1,h}^j,a_{k-1,h}^j)\\
% &\displaystyle\quad+\frac{\alpha_{N_{k-1,h}(\gamma_{kh}^p)}}{N_{k-1,h}(\gamma_{kh}^p)}\sum_{j=1}^{N_{k-1,h}(\gamma_{kh}^p)}\{\hat{V}_{k,h+1}^j(s_{k-1,h+1}^j)-V_{h+1}^*(s_{k-1,h+1}^j)\}\\
% &\displaystyle\quad\quad+\frac{\alpha_{N_{k-1,h}(\gamma_{kh}^p)}}{N_{k-1,h}(\gamma_{kh}^p)}\sum_{j=1}^{N_{k-1,h}(\gamma_{kh}^p)}\{V_{h+1}^*(s_{k-1,h+1}^j)-P_hV_{h+1}^*(s_{k-1,h}^j)\}\\
\end{array}
\end{equation}

where we use the definition of $\epsilon$-error aggregation as in Definition \ref{def:aggregated-state} in the last inequality above, and the right hand side of (\ref{ineq:Q-diff:finite-horizon}) is equal to 
\begin{equation}\label{ineq:Q*-Qhat:lower-bound}
\begin{array}{rl}
\textrm{RHS of }(\ref{ineq:Q-diff:finite-horizon})&\displaystyle=_{(1)}\epsilon+\xi_{N_{k-1,h}(\gamma)}+\frac{(1-\alpha_{N_{k-1,h}(\gamma_{kh}^p)})}{N_{k-1,h}(\gamma)}\sum_{j=1}^{N_{k-1,h}(\gamma_{kh}^p)}(\hat{Q}_{k-1,h}^j(s_{k-1,h}^j,a_{k-1,h}^j)-Q_h^*(s_{k-1,h}^j,a_{k-1,h}^j))\\
&\displaystyle\quad+\frac{\alpha_{N_{k-1,h}(\gamma_{kh}^p)}}{N_{k-1,h}(\gamma_{kh}^p)}\sum_{j=1}^{N_{k-1,h}(\gamma_{kh}^p)}\tilde{Q}_{k-1,h}^j(s_{k-1,h}^j,a_{k-1,h}^j)\\
&\displaystyle\quad+\frac{\alpha_{N_{k-1,h}(\gamma_{kh}^p)}}{N_{k-1,h}(\gamma_{kh}^p)}\sum_{j=1}^{N_{k-1,h}(\gamma_{kh}^p)}\{\hat{V}_{k-1,h+1}^j(s_{k-1,h+1}^j)-V_{h+1}^*(s_{k-1,h+1}^j)\}\\
&\displaystyle\quad+\frac{\alpha_{N_{k-1,h}(\gamma_{kh}^p)}}{N_{k-1,h}(\gamma_{kh}^p)}\sum_{j=1}^{N_{k-1,h}(\gamma_{kh}^p)}\{V_{h+1}^*(s_{k-1,h+1}^j)-P_hV_{h+1}^*(s_{k-1,h}^j)\}\\
&\displaystyle=_{(2)}\epsilon+\xi_{N_{k-1,h}(\gamma)}+\frac{\alpha_{N_{k-1,h}(\gamma_{kh}^p)}}{N_{k-1,h}(\gamma_{kh}^p)}\sum_{j=1}^{N_{k-1,h}(\gamma_{kh}^p)}\tilde{Q}_{k-1,h}^j(s_{k-1,h}^j,a_{k-1,h}^j)\\
&\displaystyle\quad+\frac{1}{N_{k-1,h}(\gamma_{kh}^p)}\sum_{j=1}^{N_{k-1,h}(\gamma_{kh}^p)}\{\hat{V}_{k-1,h+1}^j(s_{k-1,h+1}^j)-V_{h+1}^*(s_{k-1,h+1}^j)\}\\
&\displaystyle\quad+\frac{\alpha_{N_{k-1,h}(\gamma_{kh}^p)}}{N_{k-1,h}(\gamma_{kh}^p)}\sum_{j=1}^{N_{k-1,h}(\gamma_{kh}^p)}\{V_{h+1}^*(s_{k-1,h+1}^j)-P_hV_{h+1}^*(s_{k-1,h}^j)\}\\
\end{array}
\end{equation}
where the equalities (1) and (2) above use the fact that 
$$Q_h(s',a')=r_{h}(s',a')+P_hV_{h+1}^*(s',a'),\ \ \forall (s',a')\in\mathcal{S}\times\mathcal{A}.$$

% By (\ref{eq:V*:Hoeffding}), conditional on the trajectory during episode $k-1$, with probability $1-\delta/(2KHN)$, 
% we have
% \begin{equation}
%     \left|\frac{1}{N_{k-1,h}(\gamma_{kh}^p)}\sum_{j=1}^{n}\{V_{h+1}^*(s_{k-1,h+1}^j)-P_hV_{h+1}^*(s_{k-1,h}^j)\}\right|\leq\frac{2H\sqrt{\log(2KHN/\delta)}}{\sqrt{N_{k-1,h}(\gamma_{kh}^p)}}.
% \end{equation}

% Additionally, recall from Algorithm \ref{alg:1} that $\tilde{Q}_{k-1,h}^j(s_{k-1,h}^j,a_{k-1,h}^j)\sim\mathcal{N}(0,\frac{\beta_{k-1}}{N_{k-2,h}(\phi_h(s_{k-1,h}^j,a_{k-1,h}^j))+1})$, and since $\phi_h(s_{k-1,h}^j,a_{k-1,h}^j)=\gamma_{kh}^p$ in (\ref{ineq:Q*-Qhat}), we have $\tilde{Q}_{k-1,h}^j(s_{k-1,h}^j,a_{k-1,h}^j)\sim\mathcal{N}(0,\frac{\beta_{k-1}}{N_{k-2,h}(\gamma_{kh}^p)+1})$. Also note that the random perturbations $\tilde{Q}_{k-1,h}^j(s_{k-1,h}^j,a_{k-1,h}^j)$ are i.i.d. across $j$, hence by Hoeffding bound, with probability $1-\delta/(2KHN)$, 
% $$\begin{array}{rl}
% \displaystyle\left|\frac{1}{N_{k-1,h}(\gamma_{kh}^p)}\sum_{j=1}^{N_{k-1,h}(\gamma_{kh}^p)}\tilde{Q}_{k-1,h}^j(s_{k-1,h}^j,a_{k-1,h}^j)\right|\leq 2\sqrt{\frac{\beta_{k-1}}{N_{k-2,h}(\gamma_{kh}^p)+1}}\sqrt{\frac{\log(2KHN/\delta)}{N_{k-1,h}(\gamma_{kh}^p)}}.
% \end{array}$$

% Thus the first statement of Lemma \ref{lemma:optimism:finite-horizon} holds. 

Now suppose $\mathcal{E}(\gamma_{kh}^p)$ and $\mathcal{G}(\gamma_{kh}^p)$ holds for all aggregated states $\gamma_{kh}^p$ during period $k$ for all $p\in[N]$. 

By definition, note that 
\begin{equation}\label{ineq:V_hat}
\begin{array}{rl}
\hat{V}_{k,h}^p(s_{k,h}^p) - V_{h}^{*}(s_{k,h}^p) \leq \hat{Q}_{k,h}^p(s_{k,h}^p,a_{k,h}^p) - Q_h^*(s_{k,h}^p,a_{k,h}^p)\leq \bar{Q}_{k,h}^p(s_{k,h}^p,a_{k,h}^p) - Q_h^*(s_{k,h}^p,a_{k,h}^p),
\end{array}
\end{equation}
Denote 
\begin{equation}\label{eq:Delta}
    \Delta_{k,h}^p=\hat{V}_{k,h}^p(s_{k,h}^p) - V_{h}^{*}(s_{k,h}^p),
\end{equation}
then under events $\mathcal{E}(\gamma_{kh}^p)$ and $\mathcal{G}(\gamma_{kh}^p)$,
\begin{equation}\label{ineq:Delta:left}
\begin{array}{rl}
\bar{Q}_{k,h}^p(s_{k,h}^p,a_{k,h}^p) - Q_h^*(s_{k,h}^p,a_{k,h}^p)&\displaystyle\leq\epsilon+\xi_{N_{k-1,h}(\gamma)}+\frac{2\alpha_{N_{k-1,h}(\gamma_{kh}^p)}\sqrt{\beta_{k-1}\log(2KHN/\delta)}}{\sqrt{\left(N_{k-2,h}(\gamma_{kh}^p)+1)N_{k-1,h}(\gamma_{kh}^p\right)}}\\
&\displaystyle\quad+\frac{\alpha_{N_{k-1,h}(\gamma_{kh}^p)}}{N_{k-1,h}(\gamma_{kh}^p)}\sum_{j=1}^{N_{k-1,h}(\gamma_{kh}^p)}\Delta_{k,h+1}^j+\frac{2\alpha_{N_{k-1,h}(\gamma_{kh}^p)}H\sqrt{\log(2KHN/\delta)}}{\sqrt{N_{k-1,h}(\gamma_{kh}^p)}}.
\end{array}
\end{equation}
Since $\mathcal{E}(\gamma_{kh}^p)$ and $\mathcal{G}(\gamma_{kh}^p)$ hold for all $\gamma_{kh}^p\in[\Gamma]$, and recall from (\ref{eq:xi:finite-horizon})
$$\begin{array}{rl}
\displaystyle\xi_{N_{k-1,h}(\gamma_{kh}^p)}=\epsilon+\frac{2\alpha_{N_{k-1,h}(\gamma_{kh}^p)}H\sqrt{\log(2KHN/\delta)}}{\sqrt{\max\{N_{k-1,h}(\gamma_{kh}^p),1\}}}+\frac{2\alpha_{N_{k-1,h}(\gamma_{kh}^p)}\sqrt{\beta_{k-1}\log(2KHN/\delta)}}{\sqrt{(N_{k-1,h}(\gamma_{kh}^p)+1)\max\{N_{k-1,h}(\gamma_{kh}^p),1\}}},
\end{array}
$$
so we have 
\begin{equation}\label{ineq:Delta:left:sum-over-k}
\begin{array}{rl}
\displaystyle\sum_{p=1}^N\sum_{k=1}^K\Delta_{k,h}^p(s_{k,h}^p)&\leq\sum_{p=1}^N\sum_{k=1}^KQ_h^*(s_{k,h}^p,a_{k,h}^p)-\bar{Q}_{k,h}^p(s_{k,h}^p,a_{k,h}^p)\\
&\displaystyle\leq 2KN\epsilon+4\sum_{p=1}^N\sum_{k=1}^K\frac{\alpha_{N_{k-1,h}(\gamma_{kh}^p)}H\sqrt{\log(2KHN/\delta)}}{\sqrt{N_{k-1,h}(\gamma_{kh}^p)}}\\
&\displaystyle\quad+4\sum_{p=1}^N\sum_{k=2}^K\frac{\alpha_{N_{k-1,h}(\gamma_{kh}^p)}\sqrt{\beta_{k-1}\log(2KHN/\delta)}}{\sqrt{\left(N_{k-2,h}(\gamma_{kh}^p)+1)N_{k-1,h}(\gamma_{kh}^p\right)}}\\
&\displaystyle\quad+\sum_{p=1}^N\sum_{k=1}^K\frac{\alpha_{N_{k-1,h}(\gamma_{kh}^p)}}{N_{k-1,h}(\gamma_{kh}^p)}\sum_{j=1}^{N_{k-1,h}(\gamma_{kh}^p)}\Delta_{k,h+1}^j.
\end{array}
\end{equation}

For any $\gamma\in[\Gamma]$, for $s_{k,h}^p,a_{k,h}^p$ such that $\phi_h(s_{k,h}^p,a_{k,h}^p)=\gamma$, denote $$\bar{\Delta}_{k,h+1}(\gamma)=\frac{1}{N_{k-1,h}(\gamma)}\sum_{p=1}^{N_{k-1,h}(\gamma)}\Delta_{k,h+1}^p.$$ 
Note that 
\begin{equation}\label{eq:Delta:recursion}
\begin{array}{rl}
&\displaystyle\quad\sum_{p=1}^N\sum_{k=1}^K\frac{\alpha_{N_{k-1,h}(\gamma_{kh}^p)}}{N_{k-1,h}(\gamma_{kh}^p)}\sum_{j=1}^{N_{k-1,h}(\gamma_{kh}^p)}\Delta_{k,h+1}^j\\
&\displaystyle=\sum_{k=1}^KN_{k-1,h}(\gamma_{kh}^p)\alpha_{N_{k-1,h}(\gamma_{kh}^p)}\bar{\Delta}_{k,h+1}=\sum_{k=1}^K\alpha_{N_{k-1,h}(\gamma_{kh}^p)}\sum_{p=1}^{N_{k-1,h}(\gamma_{kh}^p)}\Delta_{k,h+1}^p\\
&\displaystyle\leq\sum_{k=1}^K\sum_{p=1}^N\Delta_{k,h+1}^p,
\end{array}
\end{equation}
where the last inequality above follows from Lemma \ref{lemma:optimism:non-negative:finite-horizon}.

Hence by above recursion and (\ref{ineq:Delta:left:sum-over-k}) we have that when $\mathcal{E}(\gamma_{kh}^p)$ and $\mathcal{G}(\gamma_{kh}^p)$ hold for all $\gamma_{kh}^p\in[\Gamma]$, we have  

\begin{equation}\label{eq:key:Delta}
\begin{array}{rl}
\sum_{p=1}^N\sum_{k=1}^K\Delta_{k,h}^p&\displaystyle\leq 2\epsilon KN(H-h+1)+4H\sqrt{\log(2KHN/\delta)}\sum_{k=1}^K\sum_{p=1}^N\sum_{\ell=h}^H\frac{1}{\sqrt{N_{k-1,\ell}(\gamma_{k\ell}^p)}}\frac{1}{1+N_{k-1,\ell}(\gamma_{k\ell}^p)}\\
&\displaystyle\quad+4\sum_{\ell=h}^H\sum_{p=1}^N\sum_{k=2}^{K}\frac{\sqrt{\beta_{k-1}\log(2KHN/\delta)}}{\sqrt{(N_{k-2,h}(\gamma_{k,h}^p)+1)N_{k-1,h}(\gamma_{k,h}^p)}}\frac{1}{1+N_{k-1,\ell}(\gamma_{k,\ell}^p)}.
\end{array}
\end{equation}
\end{proof}

\begin{Lemma}\label{lemma:V_hat - V:pi_k}
Suppose $\mathcal{E}(\gamma_{kh}^p)$ and $\mathcal{G}(\gamma_{kh}^p)$ hold for all $\gamma_{kh}^p\in[\Gamma]$, then with probability $1-2\delta$, we have
$$
\begin{array}{rl}
&\displaystyle\quad\sum_{p=1}^N\sum_{k=1}^K\hat{V}_{k,1}^p(s_{1}^p)-V_{\pi^{kp}}^{\eta}(s_{1}^p)\\   
&\displaystyle\leq2\epsilon KHN+8KH^{2}\sqrt{\Gamma N}\sqrt{\log(2KHN/\delta)}+32H^{2}\sqrt{K\Gamma N}\sqrt{\log(HKN/\delta)}\\
&\displaystyle\quad+2KH^{5/2}\Gamma\sqrt{N}\sqrt{\log(2KH\Gamma)}\sqrt{\log(2KHN/\delta)}.
\end{array}
$$
\end{Lemma}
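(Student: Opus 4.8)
The plan is to bound the per-agent, per-episode \emph{optimistic-minus-realized} gap $\hat V_{k,1}^p(s_1^p)-V_{\pi^{kp}}(s_1^p)$ by telescoping along the \emph{executed} trajectory and then controlling each resulting per-period term with the events $\mathcal E(\gamma_{kh}^p)$ and $\mathcal G(\gamma_{kh}^p)$ together with the optimism bound of Lemma~\ref{lemma:optimism:finite-horizon}. For fixed $k,p$, since $a_{k,h}^p$ is greedy for $\hat Q_{k,h}^p$ we have $\hat V_{k,h}^p(s_{k,h}^p)=\hat Q_{k,h}^p(\gamma_{kh}^p)$, while the true value obeys the Bellman equation $V_h^{\pi^{kp}}(s_{k,h}^p)=r(s_{k,h}^p,a_{k,h}^p)+P_hV_{h+1}^{\pi^{kp}}(s_{k,h}^p,a_{k,h}^p)$ under the \emph{true} kernel. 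Writing the per-period Bellman error $b_{k,h}^p=\hat Q_{k,h}^p(\gamma_{kh}^p)-r(s_{k,h}^p,a_{k,h}^p)-P_h\hat V_{k,h+1}^p(s_{k,h}^p,a_{k,h}^p)$ and the trajectory increment $m_{k,h}^p=P_h\hat V_{k,h+1}^p(s_{k,h}^p,a_{k,h}^p)-\hat V_{k,h+1}^p(s_{k,h+1}^p)$, adding and subtracting gives the telescoping identity $\hat V_{k,1}^p(s_1^p)-V_1^{\pi^{kp}}(s_1^p)=\sum_{h=1}^H\bigl(b_{k,h}^p+m_{k,h}^p\bigr)$, in which the intermediate $\hat V_{k,h}^p$ cancel.

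The heart of the argument is the per-period error $b_{k,h}^p$. Here I would substitute the closed form for $\hat Q_{k,h}^p(\gamma_{kh}^p)$ derived in the proof of Lemma~\ref{lemma:optimism:non-negative:finite-horizon}, which expresses it as $\xi_{N_{k-1,h}(\gamma)}$ plus an $\alpha$-weighted average over the episode-$(k-1)$ visitors $j$ of $\gamma_{kh}^p$ of the terms $r_{k-1,h}^j+\hat V_{k,h+1}^j(s_{k-1,h+1}^j)+\tilde Q_{k-1,h}^j$ together with the previous estimate $\hat Q_{k-1,h}$. Matching this against $r+P_h\hat V_{k,h+1}^p$ at the \emph{current} pair $(s_{k,h}^p,a_{k,h}^p)$ produces five pieces: (i) the $\epsilon$-aggregation error (Definition~\ref{def:aggregated-state}) from aligning $Q_h^*$ across distinct state--action pairs in the block $\gamma_{kh}^p$; (ii) the offset $\xi_{N_{k-1,h}(\gamma)}$ of (\ref{eq:xi:finite-horizon}); (iii) the Gaussian-perturbation average $\tfrac{\alpha}{N_{k-1,h}(\gamma)}\sum_j\tilde Q_{k-1,h}^j$, bounded under $\mathcal G$; (iv) the fixed-function martingale $\tfrac{\alpha}{N_{k-1,h}(\gamma)}\sum_j\{V_{h+1}^*(s_{k-1,h+1}^j)-P_hV_{h+1}^*(s_{k-1,h}^j)\}$, bounded under $\mathcal E$; and (v) the optimism gap $\tfrac1{N_{k-1,h}(\gamma)}\sum_j\{\hat V_{k,h+1}^j(s_{k-1,h+1}^j)-V_{h+1}^*(s_{k-1,h+1}^j)\}$, which I would absorb using Lemma~\ref{lemma:optimism:finite-horizon} and the weight identity $N_{k-1,h}(\gamma)\,\alpha_{N_{k-1,h}(\gamma)}\le1$ so that the horizon recursion inflates the estimate by only a single extra factor of $H$. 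In parallel, the increments $m_{k,h}^p$ form a bounded martingale-difference sequence (the clipping keeps $\hat V_{k,h}^p\in[0,H]$ and $s_{k,h+1}^p\sim P_h$), so $\sum_{p,k,h}m_{k,h}^p$ is controlled by Azuma--Hoeffding.

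Finally I would sum the per-period bounds over $h\in[H]$, $p\in[N]$, $k\in[K]$. The $\epsilon$ contributions accumulate to the $2\epsilon KHN$ term, while the $1/\sqrt{N_{k-1,h}(\gamma)}$ and $1/\sqrt{(N_{k-1,h}(\gamma)+1)N_{k-1,h}(\gamma)}$ factors are converted by the Cauchy--Schwarz counting inequalities, $\sum_{k,h,\gamma}1/\sqrt{N_{k-1,h}(\gamma)+1}=2KH\sqrt{\Gamma N}$ (the same estimates appearing as (\ref{eq:bound:Q-tilde})), into the $KH^2\sqrt{\Gamma N}\sqrt{\log(\cdot)}$ concentration term and, using $\sqrt{\beta_{k-1}}\asymp H^{3/2}\sqrt{\log(\cdot)}$ from $\beta_{k-1}=\tfrac12H^3\log(\cdot)$, into the dominant $KH^{5/2}\Gamma\sqrt N\sqrt{\log(\cdot)}$ perturbation term; the trajectory martingale of the previous step supplies the $\sqrt K$-type concentration term. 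A union bound over the two additional Azuma/Hoeffding events not already contained in $\mathcal E,\mathcal G$ (the trajectory martingale $\sum m_{k,h}^p$ and the current-episode perturbation concentration) accounts for the $1-2\delta$ probability in the statement.

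The main obstacle is piece (v) combined with the cross-episode, cross-agent indexing: $\hat Q_{k,h}^p$ is assembled from episode-$(k-1)$ samples and from \emph{other} agents' optimistic value functions $\hat V_{k,h+1}^j$ evaluated at their own next-states $s_{k-1,h+1}^j$, whereas the Bellman error must be read off at the current agent's pair $(s_{k,h}^p,a_{k,h}^p)$. Reconciling these demands the $\epsilon$-aggregation to align the $Q_h^*$ values, event $\mathcal E$ to pass from empirical next-state averages to conditional means, and Lemma~\ref{lemma:optimism:finite-horizon} to swallow the optimism gap, all while keeping the recursion over $h$ from inflating the bound beyond a single factor of $H$ (so that the perturbation term lands at $H^{5/2}$ rather than $H^{7/2}$). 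Making this bookkeeping tight, and checking that the two concentration events interact correctly with the conditioning on $\mathcal E,\mathcal G$, is where the real effort lies.
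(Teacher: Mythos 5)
Your overall strategy (telescope the optimistic-minus-realized gap into per-period Bellman errors plus martingale increments, expand the Bellman error via the closed form of $\bar Q$, and control the pieces with $\mathcal E$, $\mathcal G$, the $\epsilon$-aggregation, $\xi$, and Cauchy--Schwarz counting) is the same skeleton as the paper's proof, but there are two problems, one minor and one fatal to the claimed rate. The minor one: your telescoping identity $\hat V_{k,1}^p(s_1^p)-V_1^{\pi^{kp}}(s_1^p)=\sum_{h=1}^H(b_{k,h}^p+m_{k,h}^p)$ is false as stated; unrolling the recursion also produces the true-policy martingale $\sum_{h=1}^H\{V_{h+1}^{\pi^{kp}}(s_{k,h+1}^p)-P_hV_{h+1}^{\pi^{kp}}(s_{k,h}^p,a_{k,h}^p)\}$, which the paper carries explicitly in (\ref{eq:V-diff:key}). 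This is fixable, since that term is another bounded martingale-difference sequence handled exactly like $m_{k,h}^p$ (indeed it is one of the two Azuma events behind the $1-2\delta$).

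The fatal problem is your piece (v). When you match the closed form of $\hat Q_{k,h}^p$ against $r+P_h\hat V_{k,h+1}^p$ at the current pair, there is a sixth piece you do not list: $P_h\bigl[V_{h+1}^*-\hat V_{k,h+1}^p\bigr](s_{k,h}^p,a_{k,h}^p)$, which is nonpositive by optimism (Lemma \ref{lemma:optimism:non-negative:finite-horizon}). If you discard it and then, as you propose, bound the positive gap (v) by invoking Lemma \ref{lemma:optimism:finite-horizon} at each period $h$ and summing over $h\in[H]$, every term of that lemma picks up an extra factor of order $H$: the aggregation error becomes $\Theta(\epsilon KNH^2)$ instead of $2\epsilon KHN$, the concentration term becomes $KH^3\sqrt{\Gamma N}$, and the perturbation term lands at $H^{7/2}$ rather than $H^{5/2}$ --- precisely the inflation you say must be avoided. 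The weight identity $N_{k-1,h}(\gamma)\,\alpha_{N_{k-1,h}(\gamma)}\le 1$ does not prevent this; it only removes the averaging over the episode-$(k-1)$ visitors $j$ (that is the content of (\ref{eq:Delta:recursion})). What actually saves the factor of $H$ in the paper is an exact cancellation, not an absorption: the paper's decomposition introduces $-\Delta_{k,h+1}^p$ (the acting agent's own optimism gap at its realized next state), and in (\ref{eq:recursion:V}) this cancels, in the aggregate over $p,k$, against the $+\sum_j\Delta$ terms inherited from the $Q$-estimate, so Lemma \ref{lemma:optimism:finite-horizon} is never applied once per period and only the per-period concentration, perturbation, and martingale terms accumulate over $h$. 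To repair your argument you must keep the nonpositive sixth piece, convert it (via an Azuma-controlled increment, or by folding it into $m_{k,h}^p$) into $-\Delta_{k,h+1}^p$, and cancel it against (v) after the $N\alpha\le 1$ reduction --- which is exactly the paper's route through (\ref{eq:V-diff:key}), (\ref{eq:Delta:recursion}), and (\ref{eq:recursion:V}).
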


\begin{proof}[Proof of Lemma \ref{lemma:V_hat - V:pi_k}]
Note that 
\begin{equation}\label{eq:V-diff:key}
\begin{array}{rl}
&\quad\hat{V}_{k,h}^p(s_{k,h}^p)-V_h^{\pi^{kp}}(s_{k,h}^p)\\
&=\hat{V}_{k,h}^p(s_{k,h}^p)-Q_h^*(s_{k,h}^p,a_{k,h}^p)+Q_h^*(s_{k,h}^p,a_{k,h}^p)-V_h^{\pi^{kp}}(s_{k,h}^p)\\
&=\hat{Q}_h^{\pi^{kp}}(s_{k,h}^p,a_{k,h}^p)-Q_h^*(s_{k,h}^p,a_{k,h}^p)+Q_h^*(s_{k,h}^p,a_{k,h}^p)-Q_h^{\pi^{kp}}(s_{k,h}^p,a_{k,h}^p)\\
&=\hat{Q}_h^{\pi^{kp}}(s_{k,h}^p,a_{k,h}^p)-Q_h^*(s_{k,h}^p,a_{k,h}^p)+P_hV_{h+1}^*(s_{k,h+1}^p)-P_hV_{h+1}^{\pi^{kp}}(s_{k,h+1}^p)
\\
&=[\hat{Q}_h^{\pi^{kp}}(s_{k,h}^p,a_{k,h}^p)-Q_h^*(s_{k,h}^p,a_{k,h}^p)]+[V_{h+1}^*(s_{k,h+1}^p)-V_{h+1}^{\pi^{kp}}(s_{k,h+1}^p)]\\
&\quad+[P_hV_{h+1}^*(s_{k,h+1}^p)-V_{h+1}^*(s_{k,h+1}^p)]+[V_{h+1}^{\pi^{kp}}(s_{k,h+1}^p)-P_hV_{h+1}^{\pi^{kp}}(s_{k,h+1}^p)]\\
&=[\hat{Q}_h^{\pi^{kp}}(s_{k,h}^p,a_{k,h}^p)-Q_h^*(s_{k,h}^p,a_{k,h}^p)]+[\hat{V}_{k,h+1}^{\pi^{kp}}(s_{k,h+1}^p)-V_{h+1}^{\pi^{kp}}(s_{k,h+1}^p)]-\Delta_{k,h+1}^p\\
&\quad+[P_hV_{h+1}^*(s_{k,h+1}^p)-V_{h+1}^*(s_{k,h+1}^p)]+[V_{h+1}^{\pi^{kp}}(s_{k,h+1}^p)-P_hV_{h+1}^{\pi^{kp}}(s_{k,h+1}^p)].
\end{array}
\end{equation}

Summing over $k=1,2,\ldots,K$, by (\ref{ineq:V_hat}), (\ref{ineq:Delta:left:sum-over-k}), (\ref{eq:Delta:recursion}) and (\ref{eq:key:Delta}) of Lemma \ref{lemma:optimism:finite-horizon} we have 
\begin{equation}\label{eq:recursion:V}
\begin{array}{rl}
&\displaystyle\quad\sum_{p=1}^N\sum_{k=1}^K\hat{V}_{k,h}^p(s_{k,h}^p)-V_h^{\pi^{kp}}(s_{k,h}^p)\\
&\displaystyle\leq\sum_{p=1}^N\sum_{k=1}^K[\hat{V}_{k,h+1}^p(s_{k,h+1}^p)-V_{h+1}^{\pi^{kp}}(s_{k,h+1}^p)]-\sum_{p=1}^N\sum_{k=1}^K\Delta_{k,h+1}^p\\
&\displaystyle\quad+2KN\epsilon+4\sum_{p=1}^N\sum_{k=1}^K\frac{\alpha_{N_{k-1,h}(\gamma_{kh}^p)}H\sqrt{\log(2KHN/\delta)}}{\sqrt{N_{k-1,h}(\gamma_{kh}^p)}}\\
&\displaystyle\quad+2\sum_{p=1}^N\sum_{k=2}^K\frac{\alpha_{N_{k-1,h}(\gamma_{kh}^p)}\sqrt{\beta_{k-1}\log(2KHN/\delta)}}{\sqrt{\left(N_{k-2,h}(\gamma_{kh}^p)+1)N_{k-1,h}(\gamma_{kh}^p\right)}}\frac{1}{1+N_{k-1,h}(\gamma_{kh}^p)}+\sum_{k=1}^K\sum_{p=1}^N\Delta_{k,h+1}^p\\
&\displaystyle\quad+\sum_{k=1}^K\sum_{p=1}^N[P_hV_{h+1}^*(s_{k,h+1}^p)-V_{h+1}^*(s_{k,h+1}^p)]\\
&\displaystyle\quad+\sum_{k=1}^K\sum_{p=1}^N[V_{h+1}^{\pi^{kp}}(s_{k,h+1}^p)-P_hV_{h+1}^{\pi^{kp}}(s_{k,h+1}^p)].
\end{array}
\end{equation}
% Furthermore, note that 
% $$\Delta_{k,h+1}^p=\hat{V}_{k,h+1}^p(s_{k,h+1}^p)-  V_{*}^{\eta}(s_{k,h+1}^p)\leq\hat{V}_{k,h+1}^p(s_{k,h+1}^p)-  V_{h+1}^{\pi^{kp}}(s_{k,h+1}^p),$$
Note that (\ref{eq:recursion:V}) is recursive. Recall that $s_{k,1}^p=s_1^p$ for any $k\in[K], p\in[N]$. Thus by Lemma \ref{lemma:optimism:finite-horizon}, when $\mathcal{E}(\gamma_{kh}^p)$ and $\mathcal{G}(\gamma_{kh}^p)$ hold for all $\gamma_{kh}^p$, for all $k\in[K],p\in[N]$, we have 
\begin{equation}\label{eq:bound:V}
\begin{array}{rl}
&\displaystyle\quad\sum_{p=1}^N\sum_{k=1}^K\hat{V}_{k,1}^p(s_{1}^p)-V_{\pi^{kp}}^{\eta}(s_{1}^p)\\   
&\displaystyle\leq 2\epsilon KHN+4H\sqrt{\log(2KHN/\delta)}\sum_{k=1}^K\sum_{p=1}^N\sum_{h=1}^H\frac{1}{\sqrt{N_{k-1,h}(\gamma_{kh}^p)}}\frac{1}{1+N_{k-1,h}(\gamma_{kh}^p)}\\
&\displaystyle\quad+4\sum_{h=1}^H\sum_{p=1}^N\sum_{k=2}^K\frac{\sqrt{\beta_{k-1}\log(2KHN/\delta)}}{\sqrt{\left(N_{k-2,h}(\gamma_{kh}^p)+1)N_{k-1,h}(\gamma_{kh}^p\right)}}\frac{1}{1+N_{k-1,h}(\gamma_{kh}^p)}\\
&\displaystyle\quad+\sum_{h=1}^H\sum_{k=1}^K\sum_{p=1}^N[V_{h+1}^{\pi^{kp}}(s_{k,h+1}^p)-P_hV_{h+1}^{\pi^{kp}}(s_{k,h}^p)].
\end{array}
\end{equation}
Further note that 
$$\begin{array}{rl}
&\displaystyle\quad\sum_{k=1}^K\sum_{p=1}^N[P_hV_{h+1}^{\pi^{kp}}(s_{k,h+1}^p)-V_{h+1}^{\pi^{kp}}(s_{k,h+1}^p)]\\
&\displaystyle=\sum_{\gamma\in[\Gamma]}\sum_{k=1}^K\sum_{p=1}^N\mathbf{1}\{\phi_h(s_{k,h}^p,a_{k,h}^p)=\gamma\}[P_hV_{h+1}^{\pi^{kp}}(s_{k,h+1}^p)-V_{h+1}^{\pi^{kp}}(s_{k,h+1}^p)]
\end{array}$$
and
$$\begin{array}{rl}
&\displaystyle\quad\sum_{k=1}^K\sum_{p=1}^N[P_hV_{h+1}^*(s_{k,h+1}^p)-V_{h+1}^*(s_{k,h+1}^p)]\\
&\displaystyle=\sum_{\gamma\in[\Gamma]}\sum_{k=1}^K\sum_{p=1}^N\mathbf{1}\{\phi_h(s_{k,h}^p,a_{k,h}^p)=\gamma\}[P_hV_{h+1}^*(s_{k,h+1}^p)-V_{h+1}^*(s_{k,h+1}^p)].
\end{array}$$

By Azuma-H\"oeffding's inequality, 
with probability $\geq 1-\delta/\Gamma$, 
$$\sum_{k=1}^K\sum_{p=1}^N\mathbf{1}\{\phi_h(s_{k,h}^p,a_{k,h}^p)=\gamma\}[P_hV_{h+1}^{\pi^{kp}}(s_{k,h+1}^p)-V_{h+1}^{\pi^{kp}}(s_{k,h+1}^p)]\leq2H\sqrt{n_h^K(\gamma)}\sqrt{\log\frac{\Gamma}{\delta}},$$
and with probability $\geq 1-\delta/\Gamma$, we have 
$$\sum_{k=1}^K\sum_{p=1}^N\mathbf{1}\{\phi_h(s_{k,h}^p,a_{k,h}^p)=\gamma\}[P_hV_{h+1}^*(s_{k,h+1}^p)-V_{h+1}^*(s_{k,h+1}^p)]\leq2H\sqrt{n_h^K(\gamma)}\sqrt{\log\frac{\Gamma}{\delta}},$$
thus by summing over all the possible $\gamma_{kh}^p\in[\Gamma]$, with probability $1-2\delta$,
$$\begin{array}{rl}
&\displaystyle\quad\sum_{k=1}^K\sum_{p=1}^N[P_hV_{h+1}^{\pi^{kp}}(s_{k,h+1}^p)-V_{h+1}^{\pi^{kp}}(s_{k,h+1}^p)]+[P_hV_{h+1}^*(s_{k,h+1}^p)-V_{h+1}^*(s_{k,h+1}^p)]\\
&\displaystyle\leq4H\sum_{\gamma\in[\Gamma]}\sqrt{n_h^K(\gamma)}\sqrt{\log\frac{\Gamma}{\delta}}.
\end{array}$$

Finally, note that 
\begin{equation}\label{eq:Cauchy:1}
\begin{array}{rl}
&\displaystyle\quad\sum_{p=1}^N\sum_{k=1}^K\sum_{h=1}^H\frac{1}{\sqrt{N_{k-1,h}(\gamma_{kh}^p)}}\frac{1}{1+N_{k-1,h}(\gamma_{kh}^p)}\\
&\displaystyle\leq\sum_{h=1}^H\sum_{k=1}^K\sum_{\gamma\in[\Gamma]}\sum_{j=1}^{N_{k-1,h}(\gamma)}\frac{1}{j}\\
&\displaystyle\leq_{(1)}\sum_{k=1}^K\sum_{h=1}^H\sum_{\gamma\in[\Gamma]}\sqrt{N_{k-1,h}(\gamma)}\sqrt{\sum_{j=1}^{\infty}1/j^2}\leq_{(2)}\sum_{h=1}^H\sum_{k=1}^K\sum_{\gamma\in[\Gamma]}2\sqrt{N_{k-1,h}(\gamma)}\\
&\displaystyle\leq_{(3)}2\sqrt{HK\Gamma\sum_{h=1}^H\sum_{k=1}^K\sum_{\gamma\in[\Gamma]}N_{k-1,h}(\gamma)}\\
&\displaystyle=2HK\sqrt{\Gamma N}
\end{array}
\end{equation}
where (1) and (3) hold by Cauchy's inequality, and (2) holds because $\sqrt{\pi^2/6}<2$.

Furthermore, we also have 
\begin{equation}\label{eq:Cauchy:2}
\begin{array}{rl}
\sum_{h=1}^H\sum_{\gamma\in[\Gamma]}\sqrt{n_h^K(\gamma)}\leq\sqrt{H\Gamma\sum_{h=1}^H\sum_{\gamma\in[\Gamma]}n_h^{K}(\gamma)}=\sqrt{H^2K\Gamma N}.
\end{array}
\end{equation}

Additionally, note that whenever $N_{k-2,h}(\gamma_{kh}^p)\geq1$, we have 
\begin{equation}\label{eq:bound:Q}
\begin{array}{rl}
&\displaystyle\quad\sum_{h=1}^H\sum_{p=1}^N\sum_{k=2}^K\frac{\sqrt{\beta_{k-1}\log(2KHN/\delta)}}{\sqrt{\left(N_{k-2,h}(\gamma_{kh}^p)+1)N_{k-1,h}(\gamma_{kh}^p\right)}}\frac{1}{1+N_{k-1,h}(\gamma_{kh}^p)}\\
&\displaystyle\leq\sum_{h=1}^H\sum_{p=1}^N\sum_{k=2}^K\frac{\sqrt{\beta_{k-1}\log(2KHN/\delta)}}{\sqrt{N_{k-2,h}(\gamma_{kh}^p)+1}}\frac{1}{1+N_{k-1,h}(\gamma_{kh}^p)},
\end{array}
\end{equation}
and note that 
\begin{equation}\label{eq:Gamma:term-2}
\begin{array}{rl}
&\displaystyle\quad\sum_{h=1}^H\sum_{p=1}^N\sum_{k=2}^K\frac{1}{\sqrt{N_{k-2,h}(\gamma_{k+1,h}^p)+1}}\frac{1}{1+N_{k-1,h}(\gamma_{k+1,h}^p)}=_{(1)}\sum_{k=1}^K\sum_{h=1}^H\sum_{\gamma\in[\Gamma]}\frac{N_{k,h}(\gamma)}{\sqrt{N_{k-1,h}(\gamma)+1}}\frac{1}{1+N_{k,h}(\gamma)}\\
&\displaystyle\leq\sum_{k=1}^K\sum_{h=1}^H\sum_{\gamma\in[\Gamma]}\frac{1}{\sqrt{N_{k-1,h}(\gamma)}}\leq\sum_{k=1}^K\sum_{h=1}^H\sum_{\gamma\in[\Gamma]}\frac{\mathbf{1}\{N_{k,h}(\gamma\geq1)\}}{\sqrt{N_{k,h}(\gamma)}}\leq\sqrt{KH\Gamma\sum_{k=1}^K\sum_{h=1}^H\sum_{\gamma\in[\Gamma]}N_{k,h}(\gamma)}=KH\Gamma\sqrt{N},
\end{array}
\end{equation}
where equality (1) holds because $N_{k,h}(\gamma)$ is the number of agents that reach aggregated state $\gamma$ at period $h$ during episode $k$. And the last inequality holds by Cauchy's inequality. Recall that $\beta_k=\frac{1}{2}H^3\log(2kH\Gamma)$, thus by (\ref{eq:bound:Q}), 

\begin{equation}
\begin{array}{rl}
\sum_{h=1}^H\sum_{p=1}^N\sum_{k=2}^K\frac{\sqrt{\beta_{k-1}\log(2KHN/\delta)}}{\sqrt{\left(N_{k-2},h(\gamma_{kh}^p)+1)N_{k-1,h}(\gamma_{kh}^p\right)}}\frac{1}{1+N_{k-1,h}(\gamma_{kh}^p)}\leq KH^{5/2}\Gamma\sqrt{N}\sqrt{\log(2KH\Gamma)}\sqrt{\log(2KHN/\delta)}.
\end{array}
\end{equation}

Thus by (\ref{eq:bound:V}), when events $\mathcal{E}(\gamma_{kh}^p)$ and $\mathcal{G}(\gamma_{kh}^p)$ hold for all $\gamma_{kh}^p\in[\Gamma]$, with probability $1-2\delta$, we have 
$$
\begin{array}{rl}
&\displaystyle\quad\sum_{p=1}^N\sum_{k=1}^K\hat{V}_{k,1}^p(s_{1}^p)-V_{\pi^{kp}}^{\eta}(s_{1}^p)\\   
&\displaystyle\leq2\epsilon KHN+4KH^{2}\sqrt{\Gamma N}\sqrt{\log(2KHN/\delta)}+32H^{2}\sqrt{K\Gamma N}\sqrt{\log(HKN/\delta)}\\
&\displaystyle\quad+2KH^{5/2}\Gamma\sqrt{N}\sqrt{\log(2KH\Gamma)}\sqrt{\log(2KHN/\delta)}.
\end{array}$$
% So we obtain (\ref{eq:regret:high-prob:finite-horizon}) by scaling $\delta$ with $1/3$. 

\end{proof}

\section{Proofs for the Infinite-Horizon Case}\label{sec:infinite-horizon:proof}

\paragraph{Additional notations} For true MDP $M$ and policy $\pi$ we denote the discounted value function under $\pi$ at state $s$ as. $V_{\pi,1}^\eta(s)=V_{\pi}^\eta(s)$. We denote $V_{*}^{\eta}$ as the discounted value function under the optimal policy $\pi^*$. 

During each pseudo-episode $k\in[K]$, each agent samples a random vector with independent components $w^{kp}\in\mathbb{R}^{H_kSA}$, where $w^{kp}(h,s,a)\sim\mathcal{N}(0,\sigma_k^2(h,s,a))$ and $\sigma_k(h,s,a)=\sqrt{\frac{\beta_k}{N_{k-1}(\phi(s,a))+1}}$, where $\beta_k$ is a tuning parameter, $N_{k-1,h}(\phi(s,a))$ is the total number of times that aggregated state $\phi(s,a)$ is reached across all agents during episode $k-1$. Given $w^{kp}$, we construct a randomized perturbation of the empirical MDP for agent $p$ as 
\begin{equation}\label{eq:MDP:M-bar:infinite-horizon}
    \overline{M}^{kp}=(T,\mathcal{S},\mathcal{A},\hat{P}^k,\hat{R}^{k}+w^{kp},N),
\end{equation}
where the empirical distributions $\hat{P}^k$ and empirical rewards $\hat{R}^k$ are computed as in (\ref{eq:transition-prob:infinite-horizon:empirical}) and (\ref{eq:reward:infinite-horizon:empirical}). During each pseudo-episode $k\in[K]$, the data set $\Tilde{D}_{k}^p$ contains perturbation of samples from pseudo-episode $k-1$ used by agent $p$. 

We define the following events:
\begin{equation}\label{event:azuma-hoeffiding:infinite-horizon}
\begin{array}{rl}
    \mathcal{E}^I(\gamma):=\Bigg\{&\displaystyle\left|\frac{1}{N_{k-1}(\gamma)}\sum_{j=1}^{N}\sum_{h\in[H_{k-1}]}\mathbf{1}\{\phi(s_{k-1,h}^j,a_{k-1,h}^j)=\gamma\}\{V_{*}^{\eta}(s_{k-1,h}^j)-PV_{*}^{\eta}(s_{k-1,h}^j)\}\right|\\
    &\displaystyle\quad\leq\frac{2H_{k-1}\sqrt{\log(TN/\delta)}}{\sqrt{N_{k-1,h}(\gamma)}}\Bigg\}.
\end{array}
\end{equation}

\begin{equation}\label{event:normal-perturbation:infinite-horizon}
\begin{array}{rl}
\displaystyle
\mathcal{G}^I(\gamma):=\Bigg\{&\displaystyle\left|\frac{1}{N_{k-1}(\gamma)}\sum_{j=1}^{N}\sum_{h\in[H_{k-1}]}\mathbf{1}\{\phi(s_{k-1,h}^j,a_{k-1,h}^j)=\gamma\}\tilde{Q}_{k-1,h}^j(s_{k-1,h}^j,a_{k-1,h}^j)\right|\\
&\displaystyle\leq\frac{2\sqrt{\beta_{k-1}\log(TN/\delta)}}{\sqrt{\left(N_{k-1}(\gamma)+1)N_{k-1}(\gamma\right)}}\Bigg\}.
\end{array}   
\end{equation}

\begin{Lemma}[Lemma 2 of \cite{dong2022simple}]\label{bound-1-0}
For all $\pi, s\in\mathcal{S}$ and $\eta\in[0,1)$, $\left\lvert V_{\pi}^{\eta}(s)-\frac{\lambda_{\pi}(s)}{1-\eta}\right\rvert\leq\tau_{\pi}$. 
\end{Lemma}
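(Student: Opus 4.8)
The plan is to expand both quantities as power series in $\eta$ and apply summation by parts (Abel's transformation), using the reward-averaging bound from Definition~\ref{def:reward-averaging:infinite-horizon} to control the resulting partial sums. This is the standard route for comparing a discounted objective to its average-reward counterpart, and the reward averaging time $\tau_\pi$ is precisely the quantity that makes it go through.

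First I would set $g_t(s) := (P_\pi^t r_\pi)(s) = \mathbb{E}_\pi[r_{t+1}\mid s_0=s]$, so that the discounted value unfolds as $V_\pi^\eta(s) = \sum_{t=0}^\infty \eta^t g_t(s)$ directly from the definition in (\ref{eq:discounted-value:infinite-horizon}), while the scaled average reward is $\frac{\lambda_\pi(s)}{1-\eta} = \sum_{t=0}^\infty \eta^t \lambda_\pi(s)$ by the geometric series. Subtracting, the target difference becomes $V_\pi^\eta(s) - \frac{\lambda_\pi(s)}{1-\eta} = \sum_{t=0}^\infty \eta^t\bigl(g_t(s) - \lambda_\pi(s)\bigr)$, reducing the claim to a bound on this single series.

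Next I would introduce the partial sums $D_T := \sum_{t=0}^{T-1}\bigl(g_t(s) - \lambda_\pi(s)\bigr) = \mathbb{E}_\pi[\sum_{t=0}^{T-1} r_{t+1}\mid s_0=s] - T\lambda_\pi(s)$, with $D_0 = 0$; Definition~\ref{def:reward-averaging:infinite-horizon} gives exactly $|D_T| \leq \tau_\pi$ for every $T \geq 0$. Writing each summand as a telescoping difference $g_t(s)-\lambda_\pi(s) = D_{t+1} - D_t$ and applying Abel summation to the truncated sum $\sum_{t=0}^{T-1}\eta^t(D_{t+1}-D_t)$, a short reindexing collects the terms into $(1-\eta)\sum_{u=1}^{T-1}\eta^{u-1}D_u + \eta^{T-1}D_T$.

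Finally I would let $T \to \infty$: since $|D_T| \leq \tau_\pi$ and $\eta \in [0,1)$, the boundary term $\eta^{T-1}D_T$ vanishes, leaving $V_\pi^\eta(s) - \frac{\lambda_\pi(s)}{1-\eta} = (1-\eta)\sum_{u=1}^\infty \eta^{u-1}D_u$. Taking absolute values, bounding $|D_u| \leq \tau_\pi$ termwise, and summing the geometric series gives $(1-\eta)\tau_\pi \sum_{u=1}^\infty \eta^{u-1} = (1-\eta)\tau_\pi\cdot\frac{1}{1-\eta} = \tau_\pi$, which is the claim. There is no serious obstacle here beyond careful index bookkeeping in the Abel transformation; the only two points requiring a word of justification are the absolute convergence of the rearranged series and the vanishing of the boundary term, both of which follow immediately from the uniform bound $|D_T| \leq \tau_\pi$ together with $\eta < 1$.
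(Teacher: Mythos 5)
Your proof is correct: the Abel-summation identity is exactly right (with $D_0=0$ the reindexing gives $(1-\eta)\sum_{u=1}^{T-1}\eta^{u-1}D_u+\eta^{T-1}D_T$), the uniform bound $\lvert D_T\rvert\le\tau_\pi$ is precisely Definition~\ref{def:reward-averaging:infinite-horizon}, and the convergence and boundary-term issues are disposed of by bounded rewards and $\eta<1$. Note that the paper itself gives no proof of this statement --- it imports it verbatim as Lemma 2 of \cite{dong2022simple} --- and your summation-by-parts argument is essentially the standard proof used in that reference, so your approach agrees with the source rather than diverging from it.
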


\begin{Remark}
    For weakly communicating $M$, the optimal average reward is state independent, so under Assumption \ref{assumption1}, almost surely for any $s,s^\prime\in\mathcal{S}$, we have $\left\lvert V_{*}^\eta(s)-V_{*}^\eta(s^\prime)\right\rvert\leq 2\tau_{*}\leq2\tau$.
\end{Remark}

% The following lemma is a straightforward corollary from Lemma 4.3 of \cite{xu2022posterior}. It shows the difference in values of policy $\pi$ under the true MDP $M$ and any sampled environment $\widetilde{M}$ in terms of a sum of Bellman errors over one stochastic episode. 

% \begin{Lemma}[Lemma 5.3 from \citep{xu2022posterior}]\label{value-decomposition}
% For any MDP $\widetilde{M}$ and any policy, 
% {$$\begin{array}{rl}\mathbb{E}_{\pi,\widetilde{M}}\left[V_{\pi,M}^\eta(s_1)-V_{\pi,\widetilde{M}}^\eta(s_1)\right]=\mathbb{E}_{\pi,\widetilde{M}}\left[\sum_{t=1}^{H}\eta\left\langle P_{s_t,\pi(s_t)}-\Tilde{P}_{s_t,\pi(s_t)},V_{\pi,\widetilde{M}}^\eta\right\rangle\right],
% \end{array}$$}
% where $H$ is the random length of a pseudo-episode, and the expectation is over the distribution of $H$, conditioning on the sampled state trajectory $s_0,s_1,\ldots$ drawn from $\pi$ under MDP $\widetilde{M}$.
% \end{Lemma}

\paragraph{Regret Decomposition} 
Recall that $\mathrm{Regret}(M,T,N,\texttt{RLSVI}_{\beta,\alpha,\xi})$ denotes the regret under infinite-horizon case for MDP $M$. Here $K=\arg\max\{k: t_k\leq T\}$. We put $K$ explicitly here to derive the regret bound in a way dependent on the random $K$. Let $H_k$ be the length of pseudo-episode $k$. So we can decompose the regret as 
% $\mathbb{E}\left[\mathcal{R}(K,\pi)\right] = \mathbb{E}_\pi\left[\sum_{p=1}^N\sum_{k=1}^K\sum_{t=1}^{H_k}\left(\lambda_{*}-R_t^p\right)\right] = \mathbb{E}_\pi\left[\sum_{p=1}^N\sum_{k=1}^K H_k \lambda_{*}-\sum_{p=1}^N\sum_{k=1}^K\sum_{t=1}^{H_k}R_t^p\right] = \mathbb{E}\left[\sum_{p=1}^N\sum_{k=1}^K H_k\lambda_*-V_{*}^\eta(s_{k,1}^{p})\right]+\mathbb{E}\left[\sum_{p=1}^N\sum_{k=1}^K\left(V_{*}^\eta(s_{k,1}^{p})-V_{\pi^{kp},M}^\eta(s_{k,1}^{p})\right)\right]$. 
\begin{equation}\label{eq:regret decomposition:infinite-horizon}
\begin{array}{rl}
&\quad\mathrm{Regret}(M,T,N,\texttt{RLSVI}_{\beta,\alpha,\xi})\\
&=\mathbb{E}_{K,H_k}\left[\sum_{p=1}^N\sum_{k=1}^K\sum_{t=1}^{H_k}\left(\lambda_{*}-R_t^p\right)\big|M\right]\\
&=\mathbb{E}_{K,H_k}\left[\sum_{p=1}^N\sum_{k=1}^K H_k \lambda_{*}-\sum_{p=1}^N\sum_{k=1}^K\sum_{t=1}^{H_k}R_t^p\big|M\right]\\
&=\underbrace{\mathbb{E}_{K,H_k}\left[\sum_{p=1}^N\sum_{k=1}^K \{H_k\lambda_*-V_{*}^\eta(s_{k,1}^{p})\}\big|M\right]}_{(a)}+\underbrace{\sum_{p=1}^N\sum_{k=1}^K\left\{V_{*}^\eta(s_{k,1}^{p})-V_{\pi^{kp}}^\eta(s_{k,1}^{p})\right\}}_{(b)}
\end{array}
\end{equation}

Recall from (\ref{eq:discounted-value:infinite-horizon}) that $V_{*}^\eta,V_{\pi^{kp}}^\eta\leq\frac{1}{1-\eta}$. Note that (a) in (\ref{eq:regret decomposition:infinite-horizon}) is the difference between the optimal average reward weighted by the pseudo-horizons and the discounted reward, and part (b) is the sum of the differences between the optimal discounted value and cumulative the discounted value of the employed policies throughout $K$ pseudo-episodes by all agents. We provide an upper bound for the worst-case regret by bounding (a) and (b) respectively.

\subsection{Proof for Infinite-Horizon Main Result: Theorem \ref{thm:infinite-horizon:main}}
\begin{proof}[Proof of Theorem \ref{thm:infinite-horizon:main}]
By Azuma-H\"oeffding inequality, conditional on the trajectory during pseudo-episode $k-1$, with probability $1-\delta/(2TN)$, we have 
$$\left|\frac{1}{N_{k-1}(\gamma)}\sum_{j=1}^{N_{k-1}(\gamma)}\{V_{*}^{\eta}(s_{k-1,h+1}^j)-P_hV_{*}^{\eta}(s_{k-1,h}^j)\}\right|\leq\frac{2\sqrt{\log(2TN/\delta)}}{(1-\eta)\sqrt{N_{k-1}(\gamma)}}.$$

Additionally, recall from Algorithm \ref{alg:2} that $\tilde{Q}_{k-1,h}^j(s_{k-1,h}^j,a_{k-1,h}^j)\sim\mathcal{N}\left(0,\frac{\beta_{k-1}}{N_{k-1}(\gamma)+1}\right)$. Note that conditional on the trajectories during pseudo-episode $k-1$, the random perturbations are i.i.d. across $j\in[p],h\in[H_{k-1}]$, thus by H\"oefdding's inequality, conditional on the trajectory during pseduo-period $k-1$, with probability $1-\delta/(2TN)$, we have 
$$\left|\frac{1}{N_{k-1}(\gamma)}\sum_{j=1}^{N}\sum_{h\in[H_{k-1}]}\mathbf{1}\{\phi_h(s_{k-1,h}^p,a_{k-1,h}^p)=\gamma\}\tilde{Q}_{k-1,h}^j(s_{k-1,h}^p,a_{k-1,h}^p)\right|\leq2\sqrt{\frac{\beta_{k-1}}{N_{k-1}(\gamma)+1}}\sqrt{\frac{\log(2TN/\delta)}{N_{k-1}(\gamma)}}.$$

So we have 
\begin{equation}\label{eq:concentration:infinite-horizon}
\begin{array}{rl}
&\displaystyle\quad\mathbb{P}(\mathcal{E}(\gamma_{kh}^p),\mathcal{G}(\gamma_{kh}^p), \gamma_{kh}^p, \forall k\in[K],h\in[H_k],p\in[N])=\mathbb{P}(\mathcal{E}(\gamma_{t}^p),\mathcal{G}(\gamma_{t}^p), \gamma_{t}^p, \forall t\in[T],p\in[N])\\
&\displaystyle\geq1-\sum_{t\in[T],p\in[N]}(\mathbb{P}(\mathcal{E}(\gamma_{t}^p)^c)+\mathbb{P}(\mathcal{G}(\gamma_{t}^p)^c))\geq1-2NT\delta/(2NT)\geq1-\delta.
\end{array}
\end{equation}

By Lemma \ref{lemma:optimism:nnon-negative:infinite-horizon} and Lemma \ref{lemma:bound-1:infinite}, with probability $1-3\delta$, we have
\begin{equation}\label{eq:regret:infinite-horizon}
\begin{array}{rl}
    \mathrm{Regret}(M,T,N,\texttt{RLSVI}_{\beta,\alpha,\xi})
    &\displaystyle\leq\frac{\tau N[(1-\eta)T+1]}{\sqrt{NT}}+[(1-\eta)T+1]\tau\left(1+\sqrt{N\log(NT)}\right)\\
    &\displaystyle\quad+\frac{8T\sqrt{\Gamma N\log(2TN/\delta)}}{(1-\eta)^2}+\frac{8\Gamma {\tau}^{3/2}T\sqrt{N\log(2{\tau}\Gamma T)\log(2TN/\delta)}}{(1-\eta)^2}\\
    &\displaystyle\quad+\frac{4K\sqrt{N\Gamma}\sqrt{\log(\Gamma/\delta)}}{1-\eta}+2\eta\epsilon TN.
\end{array}
\end{equation}

Recall that $\tau=\max\{\tau,{\tau}\}$, so with probability $1-\delta$, we have
$$\begin{array}{rl}
\mathrm{Regret}(M,T,N,\texttt{RLSVI}_{\beta,\alpha,\xi})&\displaystyle\leq\frac{\tau N[(1-\eta)T+1]}{\sqrt{NT}}+[(1-\eta)T+1]\tau\left(1+\sqrt{N\log(NT)}\right)\\
&\displaystyle\quad+\frac{8T\sqrt{\Gamma N\log(6TN/\delta)}}{(1-\eta)^2}+\frac{8{\tau}^{3/2}T\Gamma\sqrt{N\log(2{\tau}\Gamma T)\log(6TN/\delta)}}{(1-\eta)^2}\\
&\displaystyle\quad+\frac{4K\sqrt{N\Gamma}\sqrt{\log(3\Gamma/\delta)}}{1-\eta}+2\eta\epsilon TN.
\end{array}$$  
When $1-\eta$ such that $\frac{1}{(1-\eta)^2}\leq C$ for some constant $C$, is bounded from below, we have 
$$\begin{array}{rl}
\mathrm{Regret}(M,T,N,\texttt{RLSVI}_{\beta,\alpha,\xi})&\displaystyle\leq2\epsilon TN+2\tau\sqrt{NT}+4T\tau\sqrt{N\log(NT)}\\
&\quad+16C\max\{{\tau}^{3/2},1\}T\Gamma\sqrt{\Gamma N\log(2{\tau}T)\log(6TN/\delta)}
\end{array}$$

\end{proof}

\subsection{Lemmas for bounding (a) and (b) in (\ref{eq:regret decomposition:infinite-horizon})}
\begin{Lemma}[Bound for (a) of (\ref{eq:regret decomposition:infinite-horizon})]\label{lemma:bound-1:infinite}
{$\mathbb{E}_{H_k,K}\left[\sum_{p=1}^N\sum_{k=1}^K H_k\lambda_*-V_{*}^\eta(s_{k,1}^{p})\right]\leq\frac{\tau N[(1-\eta)T+1]}{\sqrt{NT}}+[(1-\eta)T+1]\tau\left(1+\sqrt{N\log(NT)}\right).$}
\end{Lemma}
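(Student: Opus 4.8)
The plan is to bound part (a) of the decomposition \eqref{eq:regret decomposition:infinite-horizon} by replacing each optimal discounted value with its average-reward surrogate and then separating a renewal-type ``length'' fluctuation from a residual ``bias'' term. First I would invoke Lemma~\ref{bound-1-0} together with Assumption~\ref{assumption1} (so that $\tau_*\le\tau$) to write $V_*^\eta(s_{k,1}^p)=\frac{\lambda_*}{1-\eta}+h_*(s_{k,1}^p)$ with $|h_*(s_{k,1}^p)|\le\tau$, and decompose each summand as
\[
H_k\lambda_*-V_*^\eta(s_{k,1}^p)=\lambda_*\Big(H_k-\tfrac{1}{1-\eta}\Big)-h_*(s_{k,1}^p).
\]
Summing over $p$ and $k$ isolates a length contribution $N\lambda_*\big(\sum_{k=1}^K H_k-\tfrac{K}{1-\eta}\big)$ and a bias contribution $-\sum_{p=1}^N\sum_{k=1}^K h_*(s_{k,1}^p)$, which I would control separately.

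For the length contribution I would exploit the renewal structure of the pseudo-episodes: the $H_k$ are i.i.d.\ $\mathrm{Geometric}(1-\eta)$ (up to truncation of the final episode) with mean $1/(1-\eta)$, and $K$ is a stopping time with $\sum_{k=1}^K H_k\le T$ since the episodes tile $[1,T]$. Wald's identity then gives $\mathbb{E}[K]/(1-\eta)=T+\text{(expected overshoot)}$, which yields $\mathbb{E}[K]\le(1-\eta)T+1$ — the source of the recurring factor $[(1-\eta)T+1]$ — and simultaneously shows that the mean of the length term is non-positive. Its fluctuation around the mean is what I would bound by an Azuma/Hoeffding estimate for the partial sums of $\sum_k(H_k-\mathbb{E}H_k)$, truncating the geometric tail at $O(\log(NT)/(1-\eta))$ so that the increments are bounded and the discarded mass is negligible; this is what produces the first, $\sqrt{NT}$-scale term.

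The bias contribution is the crux, and it is where the $N$-dependence must be handled with care: a term-by-term estimate $|h_*|\le\tau$ only gives $\tau N\,\mathbb{E}[K]$, a factor $\sqrt N$ worse than the target. To recover the stated $\sqrt{N\log(NT)}$ rate I would instead center within each pseudo-episode, treating $\sum_{p}\big(h_*(s_{k,1}^p)-\mathbb{E}[h_*(s_{k,1}^p)\mid\mathcal F_{k-1}]\big)$ as a sum of conditionally independent, range-$2\tau$ terms across the $N$ agents, apply Azuma--Hoeffding to get per-episode fluctuation $O(\tau\sqrt{N\log(NT)})$, and sum over the at most $(1-\eta)T+1$ episodes with a union bound. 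The main obstacle I anticipate is rigor in the presence of the random, data-dependent episode count $K$ and random lengths $H_k$: the concentration and Wald arguments must be run against a filtration for which $K$ is a genuine stopping time, the truncation $H_k=\min\{H,T+1-t_k\}$ must be shown to contribute only lower-order terms, and the conditional means of the bias term must be argued to be absorbed by the non-positive length term rather than accumulating as $N\tau\,\mathbb{E}[K]$. Once these pieces are in place, taking expectations over $K$ and $\{H_k\}$ and collecting terms should deliver the claimed bound, with the remaining steps being routine applications of Cauchy--Schwarz.
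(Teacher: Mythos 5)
Your decomposition into a length term $N\lambda_*\bigl(\sum_{k}H_k-\tfrac{K}{1-\eta}\bigr)$ and a bias term $-\sum_{p,k}h_*(s_{k,1}^p)$ is legitimate, but the route contains one piece of unnecessary machinery and one genuine gap. The unnecessary piece: the quantity to be bounded is an \emph{expectation} over $H_k$ and $K$, so conditional on $K$ the length term is killed in one line by $\mathbb{E}[H_k]=\tfrac{1}{1-\eta}$ (this is exactly how the paper's proof starts); no Azuma estimate, geometric-tail truncation, or Wald fluctuation argument is needed for it. In particular, the first term $\frac{\tau N[(1-\eta)T+1]}{\sqrt{NT}}$ of the bound does \emph{not} come from length fluctuations, as you surmise: in the paper it is the contribution of the failure event of a Hoeffding bound applied to the value-bias sum --- with probability at most $1/\sqrt{NT}$ the concentration fails, and on that event the sum is bounded crudely by $NK\tau$, contributing $\tau N\,\mathbb{E}[K]/\sqrt{NT}$.

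The genuine gap is precisely the point you flag and defer. After centering $\sum_p h_*(s_{k,1}^p)$ across agents, the conditional means $\sum_{p,k}\mathbb{E}\bigl[h_*(s_{k,1}^p)\mid\mathcal{F}_{k-1}\bigr]$ remain, and these can be of order $N\tau K$ --- exactly the $\sqrt{N}$-too-large estimate you are trying to beat. Your proposed fix, absorbing them into the ``non-positive length term,'' cannot work: that term equals $\lambda_* N\bigl(\sum_k H_k-\tfrac{K}{1-\eta}\bigr)$, its conditional mean is essentially zero (weakly negative under truncation), and it has no structural relation to $h_*$, so there is no cancellation mechanism. Worse, in this paper's algorithms every agent is reset to the deterministic initial state $s_1^p$ at the start of each pseudo-episode, so $h_*(s_{k,1}^p)=h_*(s_1^p)$ is deterministic: your centered fluctuations are then identically zero and the conditional-mean term \emph{is} the entire bias, leaving only the trivial $N\tau\,\mathbb{E}[K]$ bound. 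The paper instead applies Hoeffding directly to $\sum_{p,k}\bigl|\tfrac{\lambda_*}{1-\eta}-V_*^\eta(s_{k,1}^p)\bigr|$ with the deviation measured from $K\tau$ (a single agent's worst case), which is what produces the $K\tau\bigl(1+\sqrt{N\log(NT)}\bigr)$ term before taking $\mathbb{E}[K]\le(1-\eta)T+1$; whether that centering is itself fully justified is a separate question about the paper, but it is the step your proposal needs and does not supply.
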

\begin{proof}[Proof of Lemma \ref{lemma:bound-1:infinite}]
 Note that the expected length of each pseudo-episode is independent of the policy and is equal to $\frac{1}{1-\eta}$. Thus for $K$ fixed, $\mathbb{E}_{H_k}\left[\sum_{p=1}^N\sum_{k=1}^K H_k\lambda_*-V_{*}^\eta(s_{k,1}^{p})\right] = \sum_{p=1}^N\sum_{k=1}^K\left( \frac{\lambda_*}{1-\eta}-V_{*}^\eta(s_{k,1}^{p})\right)$, thus
 \begin{equation*}
    \mathbb{E}\left[\sum_{p=1}^N\sum_{k=1}^K H_k\lambda_*-V_{*}^\eta(s_{k,1}^{p})\right]\leq\mathbb{E}\left[\sum_{p=1}^N\sum_{k=1}^K\left\lvert\frac{\lambda_*}{1-\eta}-V_{*}^\eta(s_{k,1}^{p})\right\rvert\right].
 \end{equation*}
 
 By Lemma \ref{bound-1-0}, we know that $\left\lvert\frac{\lambda_*}{1-\eta}-V_{*}^\eta(s_{k,1}^{p})\right\rvert\leq\tau$. So that for any $p\in[N]$, for any fixed $K$,
$$\sum_{k=1}^K\left\lvert\frac{\lambda_*}{1-\eta}-V_{*}^\eta(s_{k,1}^{p})\right\rvert\leq K\tau.$$

Note that 
$s_{k,1}^p$ are sampled i.i.d. across $p$ at the beginning of each pseudo-episode $k$. H\"oeffding's inequality shows that for any $\epsilon>0$,

\begin{equation*}
\mathbb{P}\left(\sum_{p=1}^N\sum_{k=1}^K\left\lvert\frac{\lambda_*}{1-\eta}-V_{*}^\eta(s_{k,1}^{p})\right\rvert\geq \epsilon+K\tau\right)\leq\exp\left(-\frac{2\epsilon^2}{4NK^2\tau^2}\right).
\end{equation*}

Take
\begin{equation*}
    \epsilon = K\tau\sqrt{N\log(NT)}, 
\end{equation*}

we then have
\begin{equation*}
    \mathbb{P}\left(\sum_{p=1}^N\sum_{k=1}^K\left\lvert\frac{\lambda_*}{1-\eta}-V_{*}^\eta(s_{k,1}^{p})\right\rvert\geq K\tau\left(1+\sqrt{N\log(NT)}\right)\right)\leq\frac{1}{\sqrt{NT}}.
\end{equation*}

Thus conditioning on the total number of pseudo-episodes $K$,
\begin{equation*}
    \mathbb{E}_K\left[\sum_{p=1}^N\sum_{k=1}^K\left\lvert\frac{\lambda_*}{1-\eta}-V_{*}^\eta(s_{k,1}^{p})\right\rvert\right]\leq\frac{\tau KN}{\sqrt{NT}}+K\tau\left(1+\sqrt{N\log(NT)}\right).
\end{equation*}

Further note that since $H_k\sim\text{Geometric}(1-\eta)$ and are i.i.d. across $k$, so 
\begin{equation*}
  \mathbb{E}[K]\leq(1-\eta)T+1.
\end{equation*}
Hence by taking expectation over $K$, we have
\begin{align*}
    \mathbb{E}\left[\sum_{p=1}^N\sum_{k=1}^K\left\lvert\frac{\lambda_*}{1-\eta}-V_{*}^\eta(s_{k,1}^{p})\right\rvert\right]&\leq\frac{\tau \mathbb{E}[K]N}{\sqrt{NT}}+\mathbb{E}[K]\tau\left(1+\sqrt{N\log(NT)}\right)\\
    &\leq\frac{\tau N[(1-\eta)T+1]}{\sqrt{NT}}+[(1-\eta)T+1]\tau\left(1+\sqrt{N\log(NT)}\right).
\end{align*}
\end{proof}

\begin{Lemma}[Bound for (b) of (\ref{eq:regret decomposition:infinite-horizon})]\label{lemma:optimism:nnon-negative:infinite-horizon}
Suppose events $\mathcal{E}^I(\gamma)$ and $\mathcal{G}^I(\gamma)$ hold for any $\gamma\in[\Gamma]$, then with probability $1-2\delta$, we have
$$\begin{array}{rl}
\sum_{p=1}^N\sum_{k=1}^K\left\{V_{*}^\eta(s_{k,1}^{p})-V_{\pi^{kp},M}^\eta(s_{k,1}^{p})\right\}&\displaystyle\leq\frac{8T\sqrt{\Gamma N\log(2TN/\delta)}}{(1-\eta)^2}+\frac{8{\tau}^{3/2}T\Gamma\sqrt{N\log(2{\tau}\Gamma T)\log(2TN/\delta)}}{1-\eta}\\
&\displaystyle\quad+\frac{4T\sqrt{N\Gamma}\sqrt{\log(\Gamma/\delta)}}{1-\eta}+2\eta\epsilon TN.
\end{array}$$
\end{Lemma}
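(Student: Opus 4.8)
The plan is to mirror the finite-horizon argument (Lemmas \ref{lemma:optimism:non-negative:finite-horizon}--\ref{lemma:V_hat - V:pi_k}), but with the deterministic horizon $H$ replaced by the random pseudo-episode length $H_k\sim\mathrm{Geometric}(1-\eta)$ and with the discount factor $\eta$ threaded through every Bellman backup. First I would establish an infinite-horizon optimism statement: conditioning on the good events $\mathcal{E}^I(\gamma)$ and $\mathcal{G}^I(\gamma)$ defined in (\ref{event:azuma-hoeffiding:infinite-horizon}) and (\ref{event:normal-perturbation:infinite-horizon}), the clipped estimates satisfy $\hat V_t^p(s)\ge V_*^\eta(s)$ for every state, every agent, and every step inside a pseudo-episode. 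This follows by backward induction from the terminal value $\hat Q^p_{t_{k+1}}(\gamma)=0$: the first-order condition for the penalized least-squares objective in Algorithm \ref{alg:2} writes $\bar Q^p_t(\gamma)$ as $\eta\xi$ plus a convex combination of the previous-pseudo-episode aggregate $\hat Q_{k-1}$ and the one-step target $r+\hat V^p_{\text{next}}+\tilde Q^p_t$, and the tuning of $\xi$ in (\ref{eq:xi:infinite-horizon}) is designed to dominate, under $\mathcal{E}^I$ and $\mathcal{G}^I$, the two lower-deviation terms $\tfrac{\alpha}{N_{k-1}(\gamma)}\sum_j\{V_*^\eta-PV_*^\eta\}$ and $\tfrac{\alpha}{N_{k-1}(\gamma)}\sum_j\tilde Q^j$, while the $\epsilon$ inside $\xi$ absorbs the state-aggregation slack $|Q_*^\eta(s,a)-Q_*^\eta(s',a')|\le\epsilon$. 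Optimism then reduces part (b) of (\ref{eq:regret decomposition:infinite-horizon}) to bounding $\sum_{p,k}(\hat V^p_{k,1}(s_1^p)-V^\eta_{\pi^{kp}}(s_1^p))$.

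Next I would expand the one-step discrepancy exactly as in (\ref{eq:V-diff:key}), writing
\[
\hat V^p_t-V^{\pi^{kp}}_t=[\hat Q^{\pi^{kp}}_t-Q_*^\eta]+\eta[\hat V^p_{t+1}-V^{\pi^{kp}}_{t+1}]-\eta\Delta^p_{t+1}+\eta[PV_*^\eta-V_*^\eta]+\eta[V^{\pi^{kp}}_{t+1}-PV^{\pi^{kp}}_{t+1}],
\]
where $\Delta^p_t=\hat V^p_t-V_*^\eta$ is the (nonnegative, by optimism) on-policy error, and the leading bracket is controlled through the clipping inequality $\hat Q^{\pi^{kp}}_t-Q_*^\eta\le\bar Q^p_t-Q_*^\eta$ followed by the first-order-condition expansion of $\bar Q^p_t-Q_*^\eta$ into per-step error terms. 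The essential structural difference from the finite-horizon case is that each unrolling step now carries a factor $\eta$, so iterating this identity across the $H_k$ steps of pseudo-episode $k$ produces a geometric weight $\sum_{h\ge0}\eta^h\le\tfrac{1}{1-\eta}$ rather than a uniform sum over $H$ periods; this is the source of one of the $\tfrac{1}{1-\eta}$ factors, while the range bound $V^\eta\le\tfrac{1}{1-\eta}$ and, via Lemma \ref{bound-1-0} together with the Remark, $|V_*^\eta(s)-V_*^\eta(s')|\le2\tau$, replace the trivial $\le H$ bounds used previously.

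I would then classify the accumulated per-step errors into three families and bound each. The deviation terms $PV_*^\eta-V_*^\eta$ are controlled by $\mathcal{E}^I$, contributing the $\tfrac{1}{(1-\eta)^2}$ piece after summation (one $\tfrac{1}{1-\eta}$ from the value range appearing in $\mathcal{E}^I$, one from the geometric unrolling); the prior-noise terms $\tilde Q^j$ are controlled by $\mathcal{G}^I$ with $\beta_k=\tfrac12\tau^3\log(2\tau\Gamma k)$, producing the $\tau^{3/2}$ factor; and the on-policy martingale increments $V^{\pi^{kp}}_{t+1}-PV^{\pi^{kp}}_{t+1}$ are summed by an Azuma--Hoeffding bound per aggregated state, each holding with probability $1-\delta/\Gamma$, which yields the $1-2\delta$ event and the $\tfrac{4T\sqrt{N\Gamma\log(\Gamma/\delta)}}{1-\eta}$ term. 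The remaining work is the counting estimate: converting the $\sum_{k,\gamma}\tfrac{1}{\sqrt{N_{k-1}(\gamma)}}$-type sums into $\sqrt{\Gamma N}$ factors by the same Cauchy--Schwarz manipulation as (\ref{eq:Cauchy:1})--(\ref{eq:Cauchy:2}), now using $\sum_k H_k=T$ and $\sum_{k,\gamma}N_{k-1}(\gamma)\le TN$, together with $K\le(1-\eta)T+1$ to replace $K$ by $T$ in the stated bound.

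I expect the main obstacle to be the bookkeeping for the random lengths $H_k$: unlike the finite-horizon recursion, here the number of backups is itself a random geometric variable, so the telescoping must be arranged so that the discounting collapses to $\tfrac{1}{1-\eta}$ uniformly over pseudo-episodes, and the concentration events $\mathcal{E}^I,\mathcal{G}^I$ must be stated conditionally on the realized trajectory (hence the realized $H_{k-1}$) of the previous pseudo-episode. Getting the power of $(1-\eta)$ correct in each of the four terms---distinguishing the $\tfrac{1}{(1-\eta)^2}$ deviation term from the $\tfrac{1}{1-\eta}$ noise and martingale terms---is the delicate quantitative point.
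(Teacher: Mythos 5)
Your proposal follows essentially the same route as the paper's proof: optimism via backward induction from the first-order condition of the penalized least-squares objective (with $\xi$ absorbing the $\mathcal{E}^I$/$\mathcal{G}^I$ deviations and $\epsilon$ absorbing aggregation slack), the same $\eta$-weighted one-step decomposition unrolled into a geometric sum, and the same three families of error terms (martingale deviation under $\mathcal{E}^I$, prior noise under $\mathcal{G}^I$ with $\beta_k=\tfrac12\tau^3\log(2\tau\Gamma k)$, and per-aggregated-state Azuma--Hoeffding) closed out by the Cauchy--Schwarz counting arguments of (\ref{eq:Cauchy:1})--(\ref{eq:Gamma:term-2}). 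The identification of where each power of $\tfrac{1}{1-\eta}$ arises also matches the paper's accounting, so this is the same proof in outline.
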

\begin{proof}[Proof of Lemma \ref{lemma:optimism:nnon-negative:infinite-horizon}]
Recall from Algorithm \ref{alg:2} that the unclipped value function estimates $\bar{Q}_{k,h}^p(\cdot)$ during pseudo-episode $k$ at time period $h\in[H_k]$ (recall that $H_k$ here is random) as 
{$$\begin{array}{rl}
\bar{Q}_{k,h}^p(\gamma)=\argmin_{Q\in\mathbb{R}}&\sum_{(s,a):\phi_h(s,a)=\gamma}(Q-\xi_{N_{k-1}(\gamma)}-(1-\alpha_{N_{k-1}(\gamma)})\hat{Q}_{k-1}(\gamma)\\
&\quad-\eta\alpha_{N_{k-1,h}(\gamma)}\{r(s,a)+\max_{a'\in\mathcal{A}}\hat{Q}_{k,h+1}^p(s',a')\})^2+\norm{Q-\eta\alpha_{N_{k-1,h}(\gamma)}\tilde{Q}_{k,h}^p}_2^2.
\end{array}$$}
So similar to the derivation in the proof of Lemma \ref{lemma:optimism:finite-horizon}, we have 
$$\begin{array}{rl}
\bar{Q}_{k,h}^p(\gamma)&\displaystyle=\xi_{N_{k-1}(\gamma)}+\frac{1}{N_{k-1}(\gamma)}\sum_{p=1}^{N_{k-1}(\gamma)}(1-\alpha_{N_{k-1}(\gamma)})\hat{Q}_{k-1}^p(\gamma)\\
&\displaystyle\quad+\alpha_{N_{k-1}(\gamma)}\eta(r(s_{k-1}^p,a_{k-1}^p)+\hat{V}_{k}^p(s_{k-1,h+1}^p)+\tilde{Q}_{k,h}^p(s_{k-1,h}^p,a_{k-1,h}^p)).
\end{array}$$

By definition, we have $\bar{Q}_{k,1}^p(\gamma)=\bar{Q}_{t_k}^p(\gamma)$. We denote $\bar{Q}_{k}^p(\gamma) := 
\bar{Q}_{k,1}^p(\gamma)$ in the following. Thus for any $\gamma=\phi(s_{k,h}^p,a_{k,h}^p)$ during pseudo-episode $k$, with $\phi(s_{k-1,h}^j,a_{k-1,h}^j)=\gamma$ in the following, where $\{(s_{k-1,h}^j,a_{k-1,h}^j)\}$ come from all the state-action pairs during pseudo-period $k-1$ (note that we don't distinguish between different pseudo periods now), where $h\in\{1,2,\ldots,H_{k-1}\}$, and $H_{k-1}$ is the random length of pseudo-episode $k-1$, and recall that $N_{k-1}(\gamma)=\sum_{p=1}^N\sum_{h\in[H_{k-1}]}\mathbf{1}\{\phi_h(s_{k-1,h}^p,a_{k-1,h}^p)=\gamma\}$, so we have 
\begin{equation}\label{eq:infinite-horizon:Q-bar minus Q-star}
\begin{array}{rl}
&\quad \bar{Q}_{k,h}^p(s_{k,h}^p,a_{k,h}^p) - Q_*^\eta(s_{k,h}^p,a_{k,h}^p)\\
&\displaystyle=\eta\xi_{N_{k-1}(\gamma)}+\frac{1}{N_{k-1}(\gamma)}\sum_{j=1}^{N_{k-1}(\gamma)}(1-\alpha_{N_{k-1}(\gamma)})(Q_{*}^{\eta}(s_{k,h}^p,a_{k,h}^p)-\hat{Q}_{k-1}^j(s_{k-1,h}^j,a_{k-1,h}^j))\\
&\displaystyle\quad+\eta\frac{\alpha_{N_{k-1}(\gamma)}}{N_{k-1}(\gamma)}\sum_{p=1}^{N_{k-1}(\gamma)}\{-[r_{k-1,h}^j(s_{k-1,h}^j,a_{k-1,h}^j)+\hat{V}_{k-1}^j(s_{k-1,h+1}^j)+\tilde{Q}_{k-1,h}^j(s_{k-1,h}^j,a_{k-1,h}^j)]\\
&\displaystyle\quad\quad\quad\quad\quad\quad\quad\quad\quad\quad\quad\quad+Q_{*}^{\eta}(s_{k-1,h}^j,a_{k-1,h}^j)\}\\
&\displaystyle\quad+\frac{\alpha_{N_{k-1,h}(\gamma)}}{N_{k-1}(\gamma)}\sum_{j=1}^{N_{k-1,h}(\gamma)}\underbrace{\{Q_*^{\eta}(s_{k,h}^p,a_{k,h}^p)-Q_*^{\eta}(s_{k-1,h}^j,a_{k-1,h}^j))\}}_{\leq\epsilon}\\
&\displaystyle\leq\epsilon+\eta\xi_{N_{k-1}(\gamma)}+\frac{\eta}{N_{k-1}(\gamma)}\sum_{j=1}^{N_{k-1}(\gamma)}(1-\alpha_{N_{k-1}(\gamma)})(Q_{*}^{\eta}(s_{k,h}^p,a_{k,h}^p)-\hat{Q}_{k-1}^j(s_{k-1,h}^j,a_{k-1,h}^j))\\
&\displaystyle\quad+\eta\frac{\alpha_{N_{k-1}(\gamma)}}{N_{k-1,h}(\gamma)}\sum_{p=1}^{N_{k-1}(\gamma)}\{-[r_{k-1,h}^j(s_{k-1,h}^j,a_{k-1,h}^j)+\hat{V}_{k-1}^j(s_{k-1,h+1}^j)+\tilde{Q}_{k-1,h}^j(s_{k-1,h}^j,a_{k-1,h}^j)]\\
&\displaystyle=\epsilon+\eta\xi_{N_{k-1}(\gamma)}-\eta\frac{\alpha_{N_{k-1}(\gamma)}}{N_{k-1}(\gamma)}\sum_{j=1}^{N_{k-1}(\gamma)}\tilde{Q}_{k-1,h}^j(s_{k-1,h}^j,a_{k-1,h}^j)\\
&\displaystyle\quad-\frac{\eta}{N_{k-1}(\gamma)}\sum_{j=1}^{N_{k-1}(\gamma)}\{\hat{V}_{k-1}^j(s_{k-1,h+1}^j)-V_{*}^{\eta}(s_{k-1,h+1}^j)\}\\
&\displaystyle\quad-\eta\frac{\alpha_{N_{k-1}(\gamma)}}{N_{k-1}(\gamma)}\sum_{j=1}^{N_{k-1}(\gamma)}\{V_{*}^{\eta}(s_{k-1,h+1}^j)-P_hV_{*}^{\eta}(s_{k-1,h}^j)\},
\end{array}
\end{equation}
where we used the fact that under optimal policy $\pi^*$ for the true MDP, we have
$$Q_{*}(s',a')=r(s',a')+\eta P_{*}V_{*}(s'),$$
and under the optimal policy $\pi^{k-1,j}$ under MDP $\overline{M}^{k-1,j}$, we have 
$$\hat{Q}_{k-1}^j(s',a')=r(s',a')+\eta \hat{P}_{\pi^{kj}}\hat{V}_{k-1}(s).$$

Now suppose that events $\mathcal{E}^I(\gamma_{kh}^p)$ and $\mathcal{G}^I(\gamma_{kh}^p)$ hold for all aggregated states $\gamma_{kh}^p$ during pseudo-perido $k$ for all $k\in[K]$. Then by similar derivation as that for the finite-horion case in Lemma \ref{lemma:optimism:non-negative:finite-horizon}, we have 
\begin{equation}\label{ineq:non-negativity:infinite-horizon}
    \hat{V}_{k,h}^p(s)-V_{*}(s)\geq0,\ \forall s\in\mathcal{S}, k\in[K], p\in[N].
\end{equation}

We denote 
$$\Delta_{k,h}^p=\hat{V}_{k,h}^{p}(s_{k,h}^p)-V_{*}^{\eta}(s_{k,h}^p).$$

Note that 
$$\hat{V}_{k,h}(s_{k,h}^p)-V_{*}(s_{k,h}^p)\leq\hat{Q}_{k,h}^p(s_{k,h}^p,a_{k,h}^p)-Q_{*}^{\eta}(s_{k,h}^p,a_{k,h}^p)\leq\bar{Q}_{k,h}^p(s_{k,h}^p,a_{k,h}^p)-Q_{*}^{\eta}(s_{k,h}^p,a_{k,h}^p).$$

By (\ref{eq:infinite-horizon:Q-bar minus Q-star}) we have

\begin{equation}
\begin{array}{rl}
&\quad \bar{Q}_{k,h}^p(s_{k,h}^p,a_{k,h}^p) - Q_*^\eta(s_{k,h}^p,a_{k,h}^p)\\
&\displaystyle=\eta\xi_{N_{k-1}(\gamma)}+\frac{\eta}{N_{k-1}(\gamma)}\sum_{j=1}^{N_{k-1}(\gamma)}(1-\alpha_{N_{k-1}(\gamma)})(Q_{*}^{\eta}(s_{k,h}^p,a_{k,h}^p)-\hat{Q}_{k-1}^j(s_{k-1,h}^j,a_{k-1,h}^j))\\
&\displaystyle\quad+\eta\frac{\alpha_{N_{k-1}(\gamma)}}{N_{k-1}(\gamma)}\sum_{p=1}^{N_{k-1}(\gamma)}\{-[r_{k-1,h}^j(s_{k-1,h}^j,a_{k-1,h}^j)+\hat{V}_{k-1}^j(s_{k-1,h+1}^j)+\tilde{Q}_{k-1,h}^j(s_{k-1,h}^j,a_{k-1,h}^j)]\\
&\displaystyle\quad\quad\quad\quad\quad\quad\quad\quad\quad\quad\quad\quad+Q_{*}^{\eta}(s_{k-1,h}^j,a_{k-1,h}^j)\}\\
&\displaystyle\quad+\eta\frac{\alpha_{N_{k-1,h}(\gamma)}}{N_{k-1}(\gamma)}\sum_{j=1}^{N_{k-1,h}(\gamma)}\underbrace{\{Q_*^{\eta}(s_{k,h}^p,a_{k,h}^p)-Q_*^{\eta}(s_{k-1,h}^j,a_{k-1,h}^j))\}}_{\leq\epsilon}\\
&\displaystyle\leq\eta\epsilon+\xi_{N_{k-1}(\gamma)}+\frac{1}{N_{k-1}(\gamma)}\sum_{j=1}^{N_{k-1}(\gamma)}(1-\alpha_{N_{k-1}(\gamma)})(Q_{*}^{\eta}(s_{k,h}^p,a_{k,h}^p)-\hat{Q}_{k-1}^j(s_{k-1,h}^j,a_{k-1,h}^j))\\
&\displaystyle\quad+\eta\frac{\alpha_{N_{k-1}(\gamma)}}{N_{k-1,h}(\gamma)}\sum_{p=1}^{N_{k-1}(\gamma)}\{-[r_{k-1,h}^j(s_{k-1,h}^j,a_{k-1,h}^j)+\hat{V}_{k-1}^j(s_{k-1,h+1}^j)+\tilde{Q}_{k-1,h}^j(s_{k-1,h}^j,a_{k-1,h}^j)]\\
&\displaystyle=\eta\xi_{N_{k-1}(\gamma)}+\eta\epsilon-\eta\frac{\alpha_{N_{k-1}(\gamma)}}{N_{k-1}(\gamma)}\sum_{j=1}^{N_{k-1}(\gamma)}\tilde{Q}_{k-1,h}^j(s_{k-1,h}^j,a_{k-1,h}^j)\\
&\displaystyle\quad-\frac{\eta}{N_{k-1}(\gamma)}\sum_{j=1}^{N_{k-1}(\gamma)}\{\hat{V}_{k-1}^j(s_{k-1,h+1}^j)-V_{*}^{\eta}(s_{k-1,h+1}^j)\}\\
&\displaystyle\quad-\eta\frac{\alpha_{N_{k-1}(\gamma)}}{N_{k-1}(\gamma)}\sum_{j=1}^{N_{k-1}(\gamma)}\{V_{*}^{\eta}(s_{k-1,h+1}^j)-P_hV_{*}^{\eta}(s_{k-1,h}^j)\},
\end{array}
\end{equation}
where the inequality follows by definition of $\epsilon$-error aggregated states as in Definition \ref{def:aggregated states:infinite-horizon}.

Then by similar derivation as for (\ref{eq:Delta:recursion}), by summing from the $i$-th pseudo-episode to $K$-th pseudo-episode, we have 

$$\begin{array}{rl}
\sum_{p=1}^N\sum_{k=i}^K\Delta_{k,h}^p&\displaystyle\leq 2N\eta\sum_{k=i}^K\epsilon+\frac{4\eta}{1-\eta}\sum_{p=1}^N\sum_{k=i}^K\frac{\alpha_{N_{k-1,h}(\gamma_{kh}^p)}\sqrt{\log(2TN/\delta)}}{\sqrt{N_{k-1,h}(\gamma_{kh}^p)}}\\
&\displaystyle\quad+\eta\sum_{p=1}^N\sum_{k=i+1}^K\frac{\alpha_{N_{k-1,h}(\gamma_{kh}^p)}}{N_{k-1,h}(\gamma_{kh}^p)}\sum_{j=1}^{N_{k-1,h}(\gamma_{kh}^p)}\Delta_{k,h+1}^j.
\end{array}$$

Then by similar derivation as in (\ref{eq:key:Delta}) under events $\mathcal{E}^I(\gamma_{kh}^p)$ and $\mathcal{G}^I(\gamma_{kh}^p)$ for all $\gamma_{kh}^p$ across $K$ pseudo-episodes and $p$ agents, we have

$$\begin{array}{rl}
\sum_{p=1}^N\sum_{k=i}^K\Delta_{k,h}^p&\displaystyle\leq 2\eta\epsilon N\sum_{k=i}^KH_k+\frac{4\eta}{1-\eta}\sqrt{\log(2TN/\delta)}\sum_{k=i}^K\sum_{p=1}^N\sum_{\ell=h}^H\frac{1}{\sqrt{N_{k-1,\ell}(\gamma_{k\ell}^p)}}\frac{1}{1+N_{k-1,\ell}(\gamma_{k\ell}^p)}\\
&\displaystyle\quad+4\eta\sum_{\ell=h}^H\sum_{p=1}^N\sum_{k=i+1}^{K}\frac{\sqrt{\beta_{k-1}\log(2TN/\delta)}}{\sqrt{(N_{k-2,h}(\gamma_{k,h}^p)+1)N_{k-1,h}(\gamma_{k,h}^p)}}\frac{1}{1+N_{k-1,\ell}(\gamma_{k,\ell}^p)}.
\end{array}
$$

Then following similar steps as in the proof of Lemma \ref{lemma:V_hat - V:pi_k}, we have $\Delta_{k,h}^p=\hat{V}_{k,h}^{p}(s_{k,h}^p)-V_{*}^{\eta}(s_{k,h}^p).$
\begin{equation}\label{eq:V-diff:key:infinite}
\begin{array}{rl}
&\quad\hat{V}_{k,h}^p(s_{k,h}^p)-V_{\pi^{kp}}^{\eta}(s_{k,h}^p)\\
&=[\hat{Q}_h^{\pi^{kp}}(s_{k,h}^p,a_{k,h}^p)-Q_{*}^{\eta}(s_{k,h}^p,a_{k,h}^p)]+\eta[\hat{V}_{k,h+1}^{\pi^{kp}}(s_{k,h+1}^p)-V_{\pi^{kp}}^{\eta}(s_{k,h+1}^p)]-\eta\Delta_{k,h+1}^p\\
&\quad+\eta[V_{\pi^{kp}}^{\eta}(s_{k,h+1}^p)-P_hV_{\pi^{kp}}^{\eta}(s_{k,h+1}^p)].
\end{array}
\end{equation}

Note that (\ref{eq:V-diff:key:infinite}) is recursive, so when events $\mathcal{E}^I(\gamma_{kh}^p)$ and $\mathcal{G}^I(\gamma_{kh}^p)$ hold for all aggregated states $\gamma_{kh}^p$ during pseudo-perido $k$ for all $k\in[K]$, we have 

\begin{equation}\label{eq:V-hat minus V:infinite-horizon}
\begin{array}{rl}
&\displaystyle\quad\sum_{k=1}^K\sum_{p=1}^N\hat{V}_{k,1}^p(s_{1}^p)-V_{\pi^{kp}}^{\eta}(s_{1}^p)\\
&\displaystyle\leq2\eta\epsilon TN+\frac{4}{1-\eta}\sqrt{\log(2TN/\delta)}\sum_{k=1}^K\sum_{p=1}^N\sum_{h=1}^{H_k}\eta^{h-1}\frac{1}{\sqrt{N_{k-1}(\gamma_{kh}^p)}}\frac{1}{1+N_{k-1}(\gamma_{kh}^p)}\\
&\displaystyle\quad+4\sum_{k=2}^K\sum_{h=1}^{H_k}\eta^{h-1}\sum_{p=1}^N\frac{\sqrt{\beta_{k-1}\log(2TN/\delta)}}{\sqrt{\left(N_{k-2}(\gamma_{kh}^p)+1)N_{k-1}(\gamma_{kh}^p\right)}}\frac{1}{1+N_{k-1}(\gamma_{kh}^p)}\\
&\displaystyle\quad+\sum_{k=1}^K\sum_{h=1}^{H_k}\eta^{h-1}\sum_{p=1}^N[P_{h}V_{*}^{\eta}(s_{k,h+1}^p)-V_{*}^{\eta}(s_{k,h+1}^p)]\\
&\displaystyle\quad+\sum_{k=1}^K\sum_{h=1}^{H_k}\eta^{h-1}\sum_{p=1}^N[V_{\pi^{kp}}^{\eta}(s_{k,h+1}^p)-P_{\pi^{kp}}^hV_{\pi^{kp}}^{\eta}(s_{k,h+1}^p)].
\end{array}
\end{equation}

Note that 
\begin{equation}\label{ineq:term-1:eta}
\begin{array}{rl}
&\displaystyle\quad\frac{4}{1-\eta}\sqrt{\log(2TN/\delta)}\sum_{k=1}^K\sum_{p=1}^N\sum_{h=1}^{H_k}\eta^{h-1}\frac{1}{\sqrt{N_{k-1}(\gamma_{kh}^p)}}\frac{1}{1+N_{k-1}(\gamma_{kh}^p)}\\
&\displaystyle\leq\frac{4}{1-\eta}\sqrt{\log(2TN/\delta)}\sum_{k=1}^K\sum_{h=1}^{H_k}\eta^{h-1}\sum_{p=1}^N\frac{1}{\sqrt{N_{k-1}(\gamma_{kh}^p)}}\frac{1}{1+N_{k-1}(\gamma_{kh}^p)}\\
&\displaystyle\leq\frac{4}{1-\eta}\sqrt{\log(2TN/\delta)}\sum_{k=1}^K\sum_{h=1}^{H_k}\eta^{h-1}\sum_{\gamma\in[\Gamma]}\sum_{j=1}^{N_{k-1}(\gamma)}1/j\\
&\displaystyle\leq\frac{4}{1-\eta}\sqrt{\log(2TN/\delta)}\sum_{k=1}^K\sum_{h=1}^{H_k}\eta^{h-1}\sum_{\gamma\in[\Gamma]}2\sqrt{N_{k-1}(\gamma)}\\
&\displaystyle\leq\frac{8}{(1-\eta)}\sqrt{\log(2TN/\delta)}\sum_{k=1}^K\sum_{h=1}^{H_k}\eta^{h-1}\sqrt{\Gamma\sum_{\gamma\in[\Gamma]}N_{k-1}(\gamma)}\\
&\displaystyle\leq\frac{8}{(1-\eta)}\sqrt{\log(2TN/\delta)}\sum_{k=1}^K\sum_{h=1}^{H_k}\eta^{h-1}\sqrt{\Gamma NH_{k-1}}\leq\frac{8\sqrt{\Gamma N\log(2TN/\delta)}}{(1-\eta)}\sum_{h=1}^{\infty}\eta^{h-1}\sum_{k=1}^K\sqrt{H_{k-1}}\\
&\displaystyle\leq\frac{8\sqrt{\Gamma N\log(2TN/\delta)}}{(1-\eta)^2}\sum_{k=1}^KH_{k-1}\leq\frac{8T\sqrt{\Gamma N\log(2TN/\delta)}}{(1-\eta)^2}.
\end{array}
\end{equation}

Next, note that $\beta_{k}=\frac{1}{2}{\tau}^3\log(2{\tau}\Gamma k)$, following similar steps as (\ref{ineq:term-1:eta}) and (\ref{eq:Gamma:term-2}) we have 
\begin{equation}\label{eq:beta-bound:infinite-horizon}
\begin{array}{rl}
\displaystyle4\sum_{k=1}^K\sum_{h=1}^{H_k}\eta^{h-1}\sum_{p=1}^N\frac{\sqrt{\beta_{k-1}\log(2TN/\delta)}}{\sqrt{\left(N_{k-2}(\gamma_{kh}^p)+1)N_{k-1}(\gamma_{kh}^p\right)}}\frac{1}{1+N_{k-1}(\gamma_{kh}^p)}\leq\frac{8{\tau}^{3/2}T\Gamma \sqrt{N\log(2{\tau}\Gamma T)\log(2TN/\delta)}}{(1-\eta)^2}.
\end{array}
\end{equation}

Additionally, denote $N_{k,h}(\gamma)$ as the number of times that aggregated state $\gamma$ is attained during pseudo-episode $k$ and pseudo period $h$, then by Azuma-H\"oeffding's inequality, with probability $1-2\delta$, we have 
$$\begin{array}{rl}
\displaystyle\sum_{p=1}^N\{[V_{\pi^{kp}}^{\eta}(s_{k,h+1}^p)-P_{\pi^{kp}}^hV_{\pi^{kp}}^{\eta}(s_{k,h+1}^p)]+[P_{h}V_{*}^{\eta}(s_{k,h+1}^p)-V_{*}^{\eta}(s_{k,h+1}^p)]\}\leq\frac{4}{1-\eta}\sum_{\gamma\in[\Gamma]}\sqrt{N_{k,h}(\gamma)}\sqrt{\log\frac{\Gamma}{\delta}}.
\end{array}$$

Hence with probability $1-2\delta$,
\begin{equation}\label{eq:azuma:P-term:concentration:infinite}
\begin{array}{rl}
&\displaystyle\quad\sum_{k=1}^K\sum_{h=1}^{H_k}\eta^{h-1}\sum_{p=1}^N\{[P_{h}V_{*}^{\eta}(s_{k,h+1}^p)-V_{*}^{\eta}(s_{k,h+1}^p)]+[V_{\pi^{kp}}^{\eta}(s_{k,h+1}^p)-P_{\pi^{kp}}^hV_{\pi^{kp}}^{\eta}(s_{k,h+1}^p)]\}\\
&\displaystyle\leq\frac{4\sqrt{\log(\Gamma/\delta)}}{1-\eta}\sum_{k=1}^K\sum_{h=1}^{H_k}\eta^{h-1}\sum_{\gamma\in[\Gamma]}\sqrt{N_{k}(\gamma)}\leq\frac{4\sqrt{\Gamma N\log(\Gamma/\delta)}}{1-\eta}\sum_{k=1}^K\sum_{h=1}^{H_k}\eta^{h-1}\\
&\displaystyle\leq\frac{4K\sqrt{\Gamma N\log(\Gamma/\delta)}}{(1-\eta)^2}
\end{array}
\end{equation}

Thus by (\ref{eq:V-hat minus V:infinite-horizon}), (\ref{ineq:term-1:eta}), (\ref{eq:beta-bound:infinite-horizon}), (\ref{eq:azuma:P-term:concentration:infinite}), when events $\mathcal{E}^I(\gamma_{kh}^p)$ and $\mathcal{G}^I(\gamma_{kh}^p)$ hold for all aggregated states $\gamma_{kh}^p$ during pseudo-perido $k$ for all $k\in[K]$, with probability $1-2\delta$, we have 
\begin{equation}
\begin{array}{rl}
\sum_{k=1}^K\sum_{p=1}^N\hat{V}_{k,1}^p(s_{1}^p)-V_{\pi^{kp}}^{\eta}(s_{1}^p)&\displaystyle\leq\frac{8T\sqrt{\Gamma N\log(2TN/\delta)}}{(1-\eta)^2}+\frac{8{\tau}^{3/2}T\Gamma\sqrt{N\log(2{\tau}\Gamma T)\log(2TN/\delta)}}{(1-\eta)^2}\\
&\displaystyle\quad+\frac{4K\sqrt{\Gamma N\log(\Gamma/\delta)}}{(1-\eta)^2}.
\end{array}
\end{equation}

Finall, by (\ref{ineq:non-negativity:infinite-horizon}), when events $\mathcal{E}^I(\gamma_{kh}^p)$ and $\mathcal{G}^I(\gamma_{kh}^p)$ hold for all aggregated states $\gamma_{kh}^p$ during pseudo-perido $k$ for all $k\in[K]$, 
$$\hat{V}_{k,h}^p(s)-V_{*}(s)\geq0,\ \forall s\in\mathcal{S}, k\in[K], p\in[N],$$
And with probability $1-2\delta$, we have
$$\begin{array}{rl}
\sum_{p=1}^N\sum_{k=1}^K\left\{V_{*}^\eta(s_{k,1}^{p})-V_{\pi^{kp},M}^\eta(s_{k,1}^{p})\right\}&\displaystyle\leq\frac{8T\sqrt{\Gamma N\log(2TN/\delta)}}{(1-\eta)^2}+\frac{8{\tau}^{3/2}T\Gamma \sqrt{N\log(2{\tau}\Gamma T)\log(2TN/\delta)}}{1-\eta}\\
&\displaystyle\quad+\frac{4T\sqrt{N\Gamma}\sqrt{\log(\Gamma/\delta)}}{1-\eta}.
\end{array}$$

\end{proof}

\end{document}